\newcommand{\neutralize}[1]{\expandafter\let\csname c@#1\endcsname\count@}
\newtheorem{thm}{Theorem}
\newtheorem{lem}{lemma}
\newtheorem{theorem}{Theorem}[section]
\newtheorem{definition}[theorem]{Definition}
\newtheorem{lemma}[theorem]{Lemma}
\newtheorem{assumption}[theorem]{Assumption}
\newtheorem{remark}[theorem]{Remark}
\newcommand{\BEQ}{\begin{equation}}
\newcommand{\EEQ}{\end{equation}}
\newcommand{\BA}{\begin{array}}
\newcommand{\EA}{\end{array}}
\newcommand{\eg}{{\it e.g.}}
\newcommand{\reals}{{\mathbb R}}
\newcommand{\diag}{\mathop{\bf diag}}
\newcommand\mathbff{\mathbf}
\newcommand{\Real}{{\mathbb R}}
\newcommand{\st}{~~~\text{s.t.}~~~}
\newcommand\vs{\vspace*{-0.15cm}}
\newcommand\wb{{\mathbf w}}
\newcommand\ellipsebyfoci[4]{
       \path[#1] let \p1=(#2), \p2=(#3), \p3=( $(\p1)!.5!(\p2)$ )
        in \pgfextra{
            \pgfmathsetmacro{\angle}{atan2(\y2-\y1,\x2-\x1)}
            \pgfmathsetmacro{\focal}{veclen(\x2-\x1,\y2-\y1)/2/1cm}
            \pgfmathsetmacro{\lentotcm}{\focal*2*#4}
            \pgfmathsetmacro{\axeone}{(\lentotcm - 2 * \focal)/2+\focal}
            \pgfmathsetmacro{\axetwo}{sqrt((\lentotcm/2)*(\lentotcm/2)-\focal*\focal}
        }
        (\p3) ellipse[x radius=\axeone cm,y radius=\axetwo cm, rotate=\angle];
}
\newcommand\sbullet[1][.5]{\mathbin{\vcenter{\hbox{\scalebox{#1}{$\bullet$}}}}}
\newcommand\Ical{{\mathcal I}}
\newcommand\Iint{\mathring{\mathcal I}}
\newcommand\Xcal{{\mathcal X}}
\newcommand{\xb}{\mathbf{x}}
\newcommand{\Pb}{\mathbf{P}}
\newcommand{\Wb}{\mathbf{P}}
\newcommand{\zb}{\mathbf{z}}
\newcommand{\Hcal}{\mathcal{H}}
\newcommand{\kappab}{\boldsymbol{\kappa}}
\title{On Inductive Biases for Machine Learning in Data Constrained Settings}
\author{Grégoire Mialon}
\institute{l'\'Ecole Normale Supérieure}
\date{19 Janvier 2022}
\begin{document}

\dominitoc

\maketitle{}

\chapter*{Abstract}

Learning with limited data is one of the biggest problems of deep learning. Current, popular approaches to this issue consist in training models on huge amounts of data, labelled or not, before re-training the model on a smaller dataset of interest from the same modality. Intuitively, this technique allows the model to learn a general representation for some kind of data first, such as images. Then, fewer data should be required to learn a specific task for this particular modality. While this approach coined as "transfer learning" is very effective in domains such as computer vision or natural langage processing, it does not solve common problems of deep learning such as model interpretability or the overall need for data. This thesis explores a different answer to the problem of learning expressive models in data constrained settings. Instead of relying on big datasets to learn the parameters of a neural network, we will replace some of them by known functions reflecting the structure of the data. Very often, these functions will be drawn from the rich litterature of kernel methods. Indeed, many kernels can be interpreted, and/or allow for learning with few data. Our approach falls under the hood of "inductive biases", which can be defined as hypothesis on the data at hand restricting the space of models to explore during learning. In the first two chapters of the thesis, we demonstrate the effectiveness of this approach in the context of sequences,  such as sentences in natural language or protein sequences, and graphs, such as molecules. We also highlight the relationship between our work and recent advances in deep learning. The last chapter of this thesis focuses on convex machine learning models. Here, rather than proposing new models, we wonder which proportion of the samples in a dataset is really needed to learn a "good" model. More precisely, we study the problem of safe sample screening, i.e, executing simple tests to discard uninformative samples from a dataset even before fitting a machine learning model, without affecting the optimal model. Such techniques can be used to compress datasets or mine for rare samples.

\chapter*{Résumé}

Apprendre à partir de données limitées est l'un des plus gros problèmes du deep learning. Les approches courantes et populaires de cette question consistent à entraîner un modèle sur d'énormes quantités de données, étiquetées ou non, avant de réentraîner le modèle sur un ensemble de données d'intérêt, plus petit, appartenant à la même modalité. Intuitivement, cette technique permet au modèle d'apprendre d'abord une représentation générale pour un certain type de données, telles que des images. Moins de données seront ensuite nécessaires pour apprendre une tâche spécifique pour cette modalité particulière. Bien que cette approche appelée « apprentissage par transfert » soit très efficace dans des domaines tels que la vision par ordinateur ou le traitement du langage naturel, elle ne résout pas les problèmes courants du deep learning tels que l'interprétabilité des modèles ou le besoin global en données. Cette thèse explore une réponse différente au problème de l'apprentissage de modèles expressifs dans des contextes où les données sont plus rares. Au lieu de s'appuyer sur de grands ensembles de données pour apprendre les paramètres d'un réseau de neurones, nous remplacerons certains de ces paramètres par des fonctions mathématiques connues reflétant la structure des données. Très souvent, ces fonctions seront puisées dans la riche littérature des méthodes à noyau. En effet, de nombreux noyaux peuvent être interprétés, et/ou permettre un apprentissage avec peu de données. Notre approche s'inscrit dans le cadre des « biais inductifs », qui peuvent être définis comme des hypothèses sur les données disponibles restreignant l'espace des modèles à explorer lors de l'apprentissage. Dans les deux premiers chapitres de la thèse, nous démontrons l'efficacité de cette approche dans le cadre de séquences, telles que des phrases en langage naturel ou des séquences protéiques, et de graphes, tels que des molécules. Nous soulignons également la relation entre notre travail et les progrès récents du deep learning. Le dernier chapitre de cette thèse se concentre sur les modèles d'apprentissage automatique convexes. Ici, plutôt que de proposer de nouveaux modèles, nous nous demandons quelle proportion des échantillons d'un jeu de données est vraiment nécessaire pour apprendre un « bon » modèle. Plus précisément, nous étudions le problème du filtrage sûr des échantillons, c'est-à-dire l'exécution de tests simples afin d'éliminer les échantillons non informatifs d'un ensemble de données avant même d'entraîner un modèle d'apprentissage automatique, sans affecter le modèle optimal. De telles techniques peuvent être utilisées pour compresser des jeux de données ou extraire des échantillons rares.

\begin{flushright}

    \textit{À mes grands-parents}

\end{flushright}

\chapter*{Remerciements}

Comme tout un chacun, j’ai pour habitude de lire les remerciements d’une thèse avec une attention teintée d’attentes. Y figurerai-je, dans ces remerciements ? Vais-je y glaner des détails personnels sur l’impétrant, aujourd’hui chercheur reconnu ? Non, les remerciements ne sont pas une tâche à prendre à la légère.

Mon aventure dans le monde de l’intelligence artificielle a commencé grâce à Jean Ponce et Jean-Philippe Vert, qui m’ont gentiment permis de suivre le master recherche Mathématiques, Vision, Apprentissage, et accepté de discuter avec moi de l’opportunité de faire une thèse.

J’ai ensuite eu la chance d'être encadré par Julien Mairal et Alexandre d’Aspremont. Ceux-ci m'ont bien souvent prodigué le conseil qui débloquait. Je veux ici les remercier sincèrement pour leur confiance, et pour avoir su créer un climat de travail serein durant ces trois années. Je ne me suis jamais senti esseulé.

Ce doctorat a été effectué au sein de l’Institut national de recherche en informatique et en automatique (Inria), encore trop peu connu en France, dont la mission est le développement de la recherche et de la valorisation en sciences et techniques de l'information et de la communication. L’Inria, en particulier le centre de Paris, m’a fourni un environnement de travail plus que confortable. Je pense ainsi à l'efficacité des démarches administrative, à la prévenance de la célèbre équipe communication, celle toute teintée de patience d'Éric Fleury et, plus prosaïquement, au confort de mon bureau. 

Plus précisément, j'ai fait partie de deux équipes de l’Inria : Sierra, à Paris, et Thoth, à Grenoble. J’ai été magnifiquement bien accueilli dans les deux. Sierra, c’est l’éclectisme scientifique, mais aussi des discussions quotidiennes. Je veux ici remercier Francis Bach pour l’attention qu’il porte au bien-être des doctorants. Thoth, c’est la chaleur de l’équipe et, évidemment, les montagnes. J’aimais mes visites à Grenoble. Merci aux assistantes des deux équipes, Hélène Bessin-Rousseau et Nathalie Gillot, pour leur gentillesse et leur patience, qui m’ont grandement facilité la vie.

Cette thèse n’aurait pas été possible sans financement, apporté dans mon cas par l’European Research Council (ERC). Je vais ici profiter de la gourmandise avec laquelle ces remerciements seront peut-être parcourus pour détailler ce système de financement méritant d'être connu au-delà du monde de la recherche scientifique, et qui m’a permis de mener ma recherche sereinement et librement. L’ERC est un organe de l'Union européenne chargé de coordonner les efforts de la recherche entre les États membres de l'Union, et la première agence de financement pan-européenne pour une « recherche à la frontière de la connaissance ». Elle propose sur concours différents financements. Pour un jeune chercheur souhaitant se lancer sur ses propres thématiques, l’ERC propose une Starting Grant : 1,5 millions d’euros sur 5 ans, avec une très grande liberté d’utilisation. L’ERC Starting Grant sur laquelle j’ai été financé a été obtenue dans mon cas par Julien Mairal en 2016 (Projet SOLARIS). J'ai donc, durant ces trois ans, toujours eu les moyens de participer à une conférence, une école d'été, ou plus simplement me rendre à Grenoble : là encore, tous les chercheurs n'ont pas cette chance.

La présente thèse a bénéficié du supercalculateur Jean Zay. Merci à ses équipes pour son aide et sa gestion, offrant un outil à la fois puissant et simple d'utilisation.

Il me reste à présent à remercier celles et ceux dont j'ai eu la chance de croiser le chemin, en demandant par avance pardon à mon lecteur en cas d'oubli. 
Merci, donc, à Adrien, Alessandro, Francis, Pierre et Umut pour Sierra, et à Justin pour Willow : votre porte était toujours ouverte pour moi et mes diverses interrogations. J'ai une pensée particulière pour Jean-Paul Lomond, dont je garde l'image d'un scientifique inspirant et qui avait le souci d'expliquer au public les grands enjeux de son domaine.
Merci à Ghislain, Loïc, et Xavier pour avoir abondamment conseillé le quasi-novice que j'étais sur le développement.
Merci à mes comparses de bureau, Benjamin, Manon, Mathieu, Radu, et Thomas : vous avez su endurer mes digressions.
Merci à Alex, Alexandre, Antoine, Bruno, Hadrien, Julia, Lenaic, Loucas\&Raphaël, Margaux, Rémi, Robin, Ronan, Thomas, Ulysse, Vivien, Yana, Yann, bref, à ma génération de doctorants et ceux du dessus, pour les grands moments scientifiques et moins scientifiques que nous avons partagé. 
Merci à Luca, notre pièce rapportée de Dyogene et à Arthur, le grand frère de thèse que j'ai connu en dehors du laboratoire.
Merci à Antoine, Bertille, Boris, Céline, Elliot, Eloïse, Etienne, Gaspard, Gautier, Oumayma, Pierre-Cyril, Pierre-Louis, Ricardo, Thomas, Wilson, et Yann, la génération suivante, avec laquelle j'aurais aimé passer plus de temps. 
Merci à Alberto, Dexiong, Mathilde, Thomas, et Valentin, les compagnons de toujours à Grenoble, et l'une de mes motivations pour y retourner.
Merci à Alexander, Alexandre, Andrei, Daan, Florent, Houssam, Juliette, Konstantin, Margot, Michael, Mikita, Minttu, Pauline, Theo, et Vladyslav : c'était un plaisir de vous voir à Grenoble et Montbonnot.  
Merci au jury, Alexandre Gramfort et Gabriel Peyré pour les rapporteurs, Michael Bronstein, Pascal Frossard, et Anna Korba pour les examinateurs : c'était une fierté de vous présenter mon travail et un plaisir d'en discuter.

Je remercie enfin ceux qui m’ont entouré ou m’entourent encore de leur amitié et de leur amour. Je pense en particulier à ma famille, ainsi qu'à celle qui est aujourd'hui mon étoile. 

\begin{flushright}
    Paris, le 27 janvier 2022 
\end{flushright}

\tableofcontents

\chapter{Introduction}
\label{chapt:1_intro}
\pagenumbering{arabic}  
\section{The state of machine learning}

\lettrine[lines=2, slope=-0.5em, lhang=0.5,nindent=0pt]{R}{ecent} advances in machine learning, namely deep learning~\citep{goodfellow-et-al-2016}, are slowly starting to deliver. Deep language models such as BERT~\citep{devlin2018} equip search engines to improve the relevance of the results. Researchers in bioinformatics can rely on strong models for protein structure prediction~\citep{jumper2021}, a crucial task to understand the properties of proteins, the building block of life, which would otherwise being costly in terms of time and money. Some programming task could soon be automated with new kinds of programming languages taking instructions in natural language as input~\citep{chen2021evaluating}. These three innovations share one common characteristic: their underlying deep neural networks were trained on very large scale datasets. For example, BERT was trained on 3.3 billion words from a corpus of books and English Wikipedia, representing 33,000 books, or ten times the minimum amount of books a library should contain according to UNESCO. Several language models have followed, trained on datasets a few order of magnitude bigger and this important trend is emerging for many other data modalities. Such an approach to machine learning is an answer to one of its biggest problem: generalization. Roughly, how to handle new, unseen data. We will briefly delve into the definition and history of machine learning, before elaborating on the generalization problem and how it is currently being addressed with pre-trained models. Then, we will explain why and how the following thesis provides a different answer to this problem and finally deliver its outline and contributions.

\subsection{Past and present of machine learning} 

\paragraph{What is machine learning?} In its best known form, machine learning is nothing else but a computer program learning to map an input A to an output B. To do so, the program requires training examples of A and corresponding B, a model whose parameters will be adjusted to fit the examples, and a metric for the model to know whether it does good or not and adjust its parameters accordingly. More precisely, and in many cases, training a machine learning model consists in solving the following problem coined as Empirical Risk Minimization (ERM)\ \textit{i.e} given a dataset of $n$ examples $x_i$ whose desired mapping is $y_i$, finding a prediction function $f$ parameterized by $\theta$ solving:
\begin{equation}
    \text{min}_{\theta} \frac{1}{n} \sum_{i=1}^n \ell(f_{\theta}(x_i), y_i) + \lambda \Omega(f_{\theta}),
\end{equation}
where $\ell$ is a loss term reflecting the discrepancy between a prediction $f_{\theta}(x_i)$ and the ground truth $y_i$, and $\Omega$ a term penalizing the complexity of the solution, according to Occam's razor.

Contrarily to pure optimization, whose objective would consist in optimizing the metric for the training data only, machine learning aims at finding a model which will also do well on new, \textit{unseen} data, an ability coined as \textit{generalization}. This is why we often add constraints to the ERM problem such as $\Omega$. As such, machine learning is a subfield of artificial intelligence, whose goal is to create computers and programs exhibiting ``intelligent" behavior - rigorously defining intelligence being an open question~\citep{chollet2019measure}.

But, why would we want a computer program to \textit{learn} to perform a task, such as predicting the content of an image, instead of directly implementing rules for this task? Because, in many cases, rules defined by a human programmer will be incomplete or nearly impossible to formulate. Think for example about a program designed to identify an animal from its picture. First, a dog could be distinguished from a cat by looking at its nose. But what if the nose is obfuscated? This would require to add an additional rule.
And, if this rule cannot be applied, another one, etc. Second, rules can be difficult or even impossible to formulate: how to express what a nose is in terms of pixels? In many cases, letting a program learn its own rules will circumvent this difficulties and expand the spectrum of tasks that can be automated as well as the level of performance. Until recently, and excepted older approaches to neural networks, machine learning mostly relied on models that could be learned using convex optimization: the training procedure often consisted in solving a convex optimization problem, which is well-understood today~\citep{boyd_van}.
Many of these models were linear, or achieved non-linearity via techniques such as the kernel trick (see below) and could lack of expressiveness or ability to generalize compared to more recent approaches such as deep neural networks. Somehow symmetrically, training procedures for deep learning are still not understood today and are an important research topic.

Finally, note that there is much more to machine learning than solving problems such as ERM. For example, an important area of machine learning is unsupervised learning, a setting where we only have samples $x_i$ but no labels $y_i$ and we generally seek to learn a general representation of the data that could be used for downstream tasks: formulating the learning task is therefore a problem in itself. 

\paragraph{Why is it important?} 

Recent advances in machine learning, namely deep learning (see~\citet{goodfellow-et-al-2016} for a detailed introduction), raised high expectations in the past years for its promise to help humans executing always more complex tasks with various degrees of supervision:
\begin{itemize}
    \item[-] Machine learning could (and is in the process of) automate tasks with a level of performance that were previously thought to be a human backyard such as image generation (with techniques such as Generative Adversarial Networks~\citep{goodfellow2014generative} and Variational Autoencoders~\citep{kingma2014autoencoding} among others), sentence generation in natural language~\citep{brown2020language} or simply playing games such as Go~\citep{schrittwieser2020mastering} to give but a few examples.
	\item[-] Even in contexts where human supervision is essential, machine learning could still be used to handle large amounts of data that would otherwise be intractable for a human, and find subtle patterns in complex data. In bioinformatics for example, the continuous development and declining cost of high-throughput technologies led to a dramatic increase in the quantity of omics data (\textit{i.e}, genomics, proteomics, metabolomics, metagenomics and transcriptomics) available to practitioners. One famous instance of this trend is the human genome: in the last twenty years, its cost of sequencing (3.3GB) fell from \$100,000,000 to \$1,000. In this context, machine learning could typically be used as a tool by researchers to find complex relationships between genes, metabolisms and environment resulting in onset and progression of diseases.   
\end{itemize}

\paragraph{Where does it come from?} 

Machine learning as a field emerged progressively during the last 80 years, and takes its foundations in statistics, probabilities, linear algebra, and computer science. Exhaustively retracing the history of machine learning would be out of the scope of this introduction and we will simply review (subjectively) pivotal moments of the field. At the heart of machine learning lies linear model fitting, such as linear regression, discovered in the $\text{XVIII}^e$ century or Support Vector Machine~\citep{cortes1995support}, which rely on convex optimization for solving the training problem. Such models have been flourishing since the beginning of machine learning and are still of importance today for their simplicity both in terms of interpretability and from a computational point of view. Years 2010 have seen sparse models blossom: adding penalties such as the $\ell_1$ norm to the training problem encourages the learned models to be sparse, making those even more interpretable and fast to optimize~\citep{bach2012optimization}. It is now well-known that a combination of increase in the quantity of data and compute power, briefly coined as the Big Data era, triggered the emergence of deep learning which is turning the state of the art in many fields upside down. Indeed, deep learning had been known for a while, but required more data than classical machine learning models to deliver its full potential. In 2012, a Convolutional Neural Network (CNN) designed for image classification, AlexNet, outperformed its non deep competitors at the ImageNet challenge for the first time~\citep{krizhevsky2012}. Since then, CNNs have been largely adopted as the cornerstone of many computer vision technologies such as image classification, face recognition, character recognition, and many others. This moment was also the spark that ignited massive research and investment in deep learning. In 2016, Lee Sedol, one of the very best Go players at the moment, was defeated by AlphaGo, an algorithm relying on deep reinforcement learning~\citep{silver2016mastering} to select the best possible moves. This event demonstrated that deep learning was able to achieve superhuman performances in fields that were previously thought to be out of reach for computers. In 2018, BERT, a language model based on transformers, a deep neural network architecture introduced the year before~\citep{vaswani2017}, beat other deep networks on important natural language processing benchmarks~\citep{devlin2018}. This work triggered a great deal of research on transformers and huge models trained on similarly huge datasets. Deep learning is still reaching new, impressive milestones as of today.  

\subsection{Generalization and other problems of deep learning}

Although deep learning generated high expectations by outperforming the state of the art on a wide range of tasks, its deployment in real life applications has been dampened for various reasons.

\begin{itemize}
    \item[-] Deep models can express complicated functions but require a lot of data to be able to generalize, \textit{i.e} make good predictions on data which has not been seen during the training. This can be problematic if one wants to use these models in data constrained settings such as studying the onset of rare diseases. In France for example, it is estimated that there are less than 30,000 people per rare disease, which means even less samples to study (2019). More generally, using many samples is in contradiction with animal intelligence, which seems to acquire deep knowledge of its environment by learning from a small amount of samples. For example, most humans need few pictures of horses to be able to recognize a new horse! Some may even be able to recognize unseen animals from their description. Thinking about how to learn with minimal amount of samples is probably an interesting path towards advancing machine learning. 
    \item[-] Deploying deep learning based technologies into real life products, some of them being critical objects such as surveillance systems or self-driving cars, may require interpretability to ensure the system complies with the norm, among other requirements\footnote{\href{https://eur-lex.europa.eu/legal-content/EN/TXT/?uri=CELEX:52021PC0206}{European Commission's proposal for a Regulation on Artificial Intelligence, 2021}}. Interpretability is also a desirable property of machine learning models in the setting of scientific discovery, as discovering key predictors for a particular phenomenon can suggest new avenues of research. 
    \item[-] It is well-known that deep learning models lack of robustness to perturbations that can happen when the model is deployed. Thus, slight modifications to the input of deep models can trigger important responses at the output level. This is the cornerstone of adversarial attacks in vision, where humanly imperceptible modifications of the input image (generally a few pixels) trick the network into predicting an obviously wrong label~\citep{szegedy2014intriguing}. This is an important vulnerability of deep models, which could be exploited to cause accidents with self-driving cars or avoid detection by facial recognition technologies. In Natural Language Processing, it is also possible to retrieve pieces of the training data by querying the model in the correct way~\citep{carlini2020extracting}. This could be a problem as many large models are pre-trained on huge, private datasets potentially containing sensitive data. 
    \item[-] Deep learning is generally associated to high computational and environmental costs \citep{strubell2019energy}. Indeed, the lack of theoretical understanding of the optimization procedures of deep neural networks requires to perform many training iterations to yield a satisfying model in terms of accuracy of predictions. Moreover, the size of the models and datasets, which have grown bigger and bigger in the past few years, and the popularity of such techniques in research and industry made the share of machine learning in overall computations overwhelming. Discussing the energy efficiency of these models would be a research topic on its own: the fact that the hardware is specializing for deep learning, improving its energy efficiency, has to be taken into account, but does it balance the cost of developing and producing these new chips? The energy mix of the country providing electricity to the computing infrastructure also has to be taken into account. Hence, it is very difficult to conclude regarding the real cost of deep learning. Be that as it may, getting to the same level of performance as AlexNet required 44 less compute in 2019 than in 2012~\citep{hernandez2020measuring}. 
\end{itemize}

Significant proportions of machine learning research are dedicated to solving these problems, which are obviously not an exhaustive list of machine learning problems worth studying. This thesis focuses on the generalization problem yet may include a discussion on the other issues throughout the chapters.

\subsection{A promising but limited answer to the generalization problem: transfer learning from pre-trained models}

Today, a popular answer to the problem of generalization lies in transfer learning from large pre-trained models. 

\begin{definition}[Transfer learning]
    Transfer learning consists in training a model that has already been trained on a dataset A, on a generally similar dataset B. By ``similar", we mean a dataset of the same modality but potentially different distribution.
\end{definition}
The intuition behind is that knowledge acquired while learning on A could be reused to handle B. For example, a computer vision model trained to classify images of animals could be used as a backbone to classify images of cars since all natural images share features such as edges, curves, or angles. Transfer learning is currently and typically used as follows: a model is trained on a huge dataset on a pretext task, with the aim to learn a general representation that will do well on different downstream tasks. Indeed, it is then possible to train the general model on a smaller, specialized dataset of interest on which training a deep model from scratch would have yielded poor performance or simply been impossible due to the lack of data.
For example, language models are trained by predicting masked words in a sentence using the other words. The assumption is that the model will learn a language representation that will enable it to do well on a specialized task, for example predicting whether a movie review is positive or not. 

This idea is not new and may have been formulated for the first time in 1976~\citep{bozi1976} (although in Croatian). The same wave that carried deep learning, \textit{i.e} unprecedented availability of computational power and data, enabled to pre-train models on always larger datasets. This was also allowed by the progress of self-supervised learning~\citep{he2020momentum}, a learning framework which does not require annotated data, thus giving access to massive, unlabelled corpus (labelling data, a base requirement of supervised learning, is generally costly). Models can now be trained on datasets a few orders of magnitude bigger than what was done a few years ago. When the author began to work on his thesis, transfer learning mainly consisted in using various convolutional neural networks trained on ImageNet, a classification dataset of 1 million images with 1000 labels, as a backbone for solving other computer vision tasks. However, after the emergence of transformers and BERT, this approach became standard for other data modalities. In the field of NLP, language models are now pre-trained on bigger and bigger datasets~\citep{brown2020language}, and for various languages~\citep{martin2020camembert}, before being fine-tuned for downstream tasks such as natural language understanding~\citep{wang2020}. This practice has also been extended to new data modalities such as protein sequences~\citep{rives2019biological} or graphs as will be explained in Chapter~\ref{chapt:3_graphit}. Models pre-trained on datasets one or two orders of magnitude bigger than ImageNet are also emerging in computer vision~\citep{dosovitskiy2021an}. In the same way as code of deep architectures was publicly released on GitHub, pre-trained models are now often made available and centralized on platforms such as the \texttt{transformers} library from HuggingFace~\citep{wolf2019}.

Pre-trained models are a promising answer to the problem of generalization in machine learning postulating that it is possible to learn a general purpose representation of a given modality, such as images. Indeed, natural images share low-level features such as edges, curves or colors. Avoiding to learn these features from scratch each time a model is trained enables to focus on learning features specific to the dataset, such as wheels or car door is model has to classify models of cars, thus allowing to use deep learning on relatively small datasets. Pre-trained models are currently a popular research topic. 

\paragraph{The limits of transfer learning.}

Pre-trained models are currently changing the practice of machine learning.
Although these models are quite successful at providing general purpose representations, they do not fundamentally solve the problems that deep learning is facing. In fact, they even underlines some issues: pre-trained models require many data by definition (although they are meant to be trained only once and for all), they are costly to train and manipulate because of their size, they are prone to bias and to adversarial or privacy attacks. GPT-3~\citep{brown2020language}, one of the most famous and strongest language models, was trained on half a trillion words, including Wikipedia and branches of the internet. This is a few thousand times the amount of words a human will hear and read in its lifetime. Although GPT-3 provides impressive performance in terms of natural language understanding or generation, it has also been put in evidence that its family of models, namely Large Language Models (LLM) have some of the undesirable effects described above such as a high computational cost and tendency to exhibit biases~\citep{bender2021on}. Moreover, not all domains provide access to enough data: in many cases, pre-trained models are simply not available, think about the example of rare diseases. Another potential problem of large pre-trained models is their decreasing return with respect to the quantity of data: it becomes more and more difficult to train models on bigger datasets with seemingly diminishing gains in terms of performance on downstream tasks: the current trend may not be sustainable. Finally, data efficient models are believed to be a path towards machine learning models that generalize better. For all these reasons, finding other solutions to improve the generalization of machine learning models is an important problem. 

\subsection{Motivation: inductive biases, or another solution to the generalization problem}

We provide a different answer to the generalization problem. Instead of training bigger models with more data, we will focus on data constrained settings, \textit{i.e} datasets ranging from a few hundreds to a few tenths of thousands of samples. We will keep using expressive models such as neural networks yet reduce the number of learnable parameters by replacing them by known mathematical functions, an approach belonging to the wide family of inductive biases. We now define the term inductive bias, relying heavily on the definition provided by~\cite{battaglia2018relational}.

\begin{definition}[Inductive bias]
    An inductive bias is an hypothesis allowing to prioritize one solution or interpretation over another, independent of the observed data.
    It expresses assumptions about either the data-generating process (linear least squares reflect the assumption that the data is a line corrupted by Gaussian noise) or the space of solutions (the $\ell_1$ penalty generally promotes a sparse solution). 
\end{definition}

\paragraph{Why are they useful?} Inductive bias are useful as they often trade flexibility of the learning model for \textit{improved sample complexity}, a useful property in data constrained settings. Another useful feature of many inductive biases is their \textit{interpretability}, as they are the result of a human made assumption on the data generating process or the space of solutions. A simple example of inductive bias in machine learning is the $\ell_1$ norm used to penalize the weights of the model, typically in the setting of linear regression. First, this term acts as a regularization: it prevents the model of overfitting by enforcing a simple solution to the training problem. Moreover, it can be interpreted, in the sense that the solution will be sparse in terms of features used to make a prediction. Intuitively, the model will select the most relevant features, a useful behavior in domains such as genomics where there are a few particular genes of interest among thousands of others when it comes to predict the onset of a particular disease.
As explained above, not every domain has access to vast amounts of data, such as stock prices in finance or more generally domains with rare events. Other domains may have plenty of data available but require interpretability of the models, such as researchers studying phenomenon in natural science, hence looking for interpretable patterns to further investigate and/or express theories.

\paragraph{Goal of this thesis.} The object of this thesis is to introduce algorithms for learning in data constrained settings and for various data modalities such as sequences or graphs when pre-trained models are not available or advantageous. By constrained data setting, we mean datasets ranging from a few hundreds to a few tenths of thousands of samples. To do so, we will heavily rely on the previously discussed inductive biases, and particularly on kernel methods which enable to encode inductive biases directly into a model. This thesis also builds on one of the most recent deep learning architecture for dealing with set input data, the transformer.

\section{Preliminaries}

In this section, we introduce concepts that will be used in at least two of the three chapters of this thesis.

\subsection{Kernel methods.}

\paragraph{What are kernel methods?} Kernel methods have been extensively used in machine learning, see~\citet{scholkopf2001learning} for a detailed introduction. They consist in mapping data living in some space $\mathcal{X}$ to a high or infinite dimensional space $\mathcal{H}$ via a kernel function $K: \mathcal{X} \times \mathcal{X} \mapsto \mathbb{R}$. The intuition behind this technique is that data lying in a high dimensional space may become more easily separable for linear models. Intuitively, $K$ acts as a measure of similarity between a pair of elements $(x, x')$ living in $\mathcal{H}$. $\mathcal{H}$ is a Reproducing Kernel Hilbert Space (RKHS) associated to the positive definite kernel $K$ via a mapping function $\Phi: \mathcal{X} \mapsto \mathcal{H}$: we have
\begin{equation}
K(x, x') = \langle \Phi(x), \Phi(x') \rangle_{\mathcal{H}}.
\end{equation}

\begin{definition}[Positive definite kernel]
    $K: \mathcal{X} \times \mathcal{X} \mapsto \Real$ is a positive definite kernel if it is symmetric, \textit{i.e} $\forall (x, x') \in \mathcal{X}^2, K(x, x') = K(x', x)$ and satisfies $\forall N \in \mathbb{N}, (x_1, \dots, x_N) \in \mathcal{X}^N \textit{ and } (a_1, \dots, a_N) \in \Real^N: \sum_{i=1}^N \sum_{j=1}^N a_i a_j K(x_i, x_j) \geq 0$.
\end{definition}

It can be shown that positive definiteness of $K$ is equivalent to the reproducing property: for any function $f \in \mathcal{H}$, we have
\begin{equation}
    f(x) = \langle f, \Phi(x) \rangle_{\mathcal{H}} \text{ for all } x \in \mathcal{H}.
\end{equation}

In particular, it is possible to use kernel methods with linear models when the solution to the empirical risk minimization problem can be written in terms of scalar products in $\mathcal{X}$ between pairs of samples $(x, x')$. We will exploit this property to extend some results in the non-linear case in Chapter~\ref{chapt:4_screening}. Then, one only needs to replace these products by the scalar product in $\mathcal{H}$, \textit{i.e} by $K(x, x')$, thus obtaining potentially non-linear decision functions. This classical approach has been coined as the kernel trick. The kernel trick has an implication that will be exploited in the context of this thesis: it will be possible to work on non-vectorial data such as graphs or strings, by replacing the scalar products by a well-chosen comparison functions between these objects. For example, in Chapter~\ref{chapt:2_otke}, we will derive an embedding for sets starting from a comparison function between two sets, which is based on the 2-Wasserstein distance between two sets of points $\mathbf{x}$ and $\mathbf{x'}$ living respectively in $\Real^{n \times d}$ and $\Real^{n' \times d}$.

Finally, the Representer theorem states that any solution to the empirical risk minimization problem has the form:
\begin{equation}
    \forall x \in \mathcal{X}, f(x) = \sum_{i=1}^n \alpha_i K(x_i, x).
\end{equation}
Hence, $f$ can be expressed in terms of $n$ evaluations of the kernel $K$ between the input and the $n$ known samples.

\paragraph{A famous example of kernel: the Gaussian kernel.} The Gaussian kernel, or radial basis function (RBF) will often be used in the context of this thesis. For a pair of elements of $\mathcal{X}$ $(x, x')$, the Gaussian kernel with bandwidth $\sigma$ is
\begin{equation}
    K(x, x') = e^{-\frac{||x - x'||^2_2}{2\sigma^2}}.
\end{equation}
The Gaussian kernel has interesting properties: all the points in the feature space actually lie on the unit sphere since we have:
\begin{equation}
    K(x, x) = 1 = ||\Phi(x)||^2_{\mathcal{H}}.
\end{equation}
It is also possible to express the distance induced by the Gaussian kernel in the feature space in terms of distance between $x$ and $x'$:
\begin{equation}
    d_K(x, x') = \sqrt{2 \left( 1 - e^{-\frac{||x - x'||^2_2}{2\sigma^2}} \right)}.
\end{equation}

Going further, it is also possible to express the distance between a set of known samples $\mathcal{S} = \{x_1, \dots, x_n\}$ and a new sample $x$:
\begin{equation}
    d_K(x, \mathcal{S}) = \sqrt{C - \frac{2}{n} \sum_{i=1}^n e^{-\frac{||x - x_i||^2_2}{2\sigma^2}} }.
\end{equation}
Then, given two sets $\mathcal{S}_1$ and $\mathcal{S}_2$ and a sample $x$, we can use this distance to know which set $x$ belongs to, using a decision function induced by $K$, as shown in Figure~\ref{fig:discrimination}.

\begin{figure}
    \centering
    \includegraphics[scale=.3]{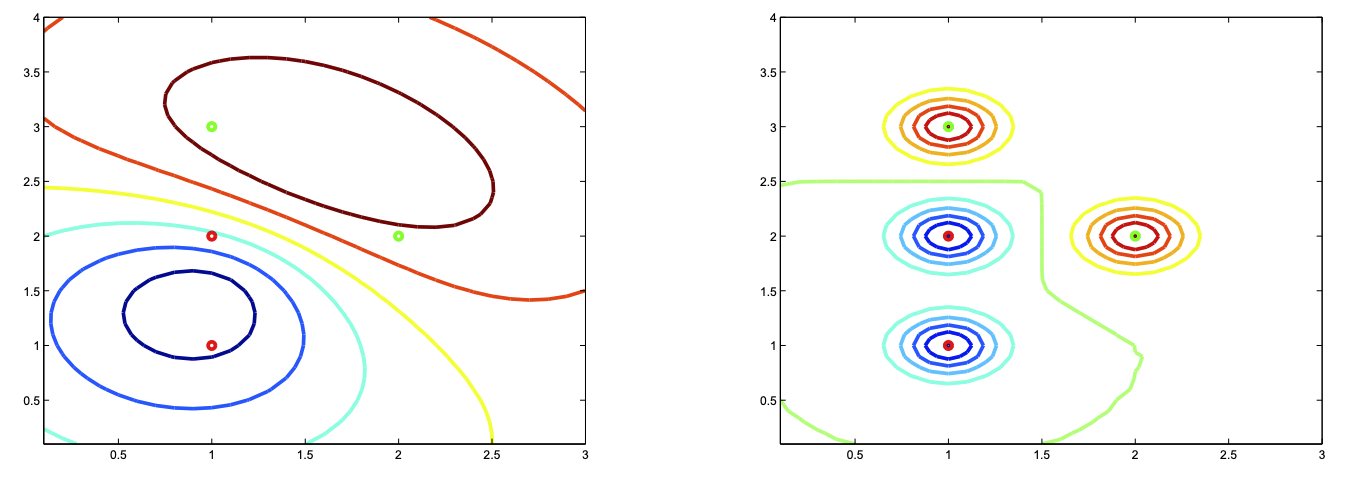}
    \caption{Points in red belong to $\mathcal{S}_1$, points in green belong to $\mathcal{S}_2$. Level curves for the decision function $f =  d_K(x, \mathcal{S}_1) - d_K(x, \mathcal{S}_2$) are plotted. \textit{Left}: $\sigma = 1$. \textit{Right}: $\sigma = 0.2$ (from Jean-Philippe Vert and Julien Mairal course on Kernel Methods, 2021).}
    \label{fig:discrimination}
\end{figure}

\paragraph{Kernel methods  for large scale machine learning.} Kernel methods enable to learn non-linear decision functions $f$ using tools from linear optimization. But, reformulating an ERM problem including kernel comparisons for all pairs of input samples may be intractable in the context of today's large datasets, with a number of sample typically greater than $100,000$: such setting requires to compute a kernel matrix which is quadratic in the number of samples. However, in many cases, it is possible to circumvent this issue by computing a low-rank approximation of the kernel matrix which is done in Nyström's approximation described below, or using random Fourier features~\citep{rahimi2007random}. In the latter setting, a randomized feature map $\Psi$ is used, such that:
\begin{equation}
    K(x, x') \approx \langle \Psi(x), \Psi(x') \rangle_{\Real^d}.
\end{equation}
For example, \citet{rahimi2007random} show that it is possible to approximate a shift invariant kernel, \textit{i.e} a kernel such that $K(x, x')$ is a function of $x - x'$, by drawing samples from the Fourier transform of $K$ and applying a cosine transformation on these samples using $x$. The resulting mapping $\Psi(x)$ is the concatenation of these transforms for each sample:  
\begin{equation}
    \Psi(x) = \sqrt{\frac{2}{D}} \left( \text{cos} (\omega_1 x + b_1), \dots, \text{cos} (\omega_D x + b_D) \right),
\end{equation}
with $\omega_i$ a sample drawn from the Fourier transform $p$ of $K$, and $b_i$ uniformly sampled from $[\ 0, 2 \pi ]\ $.

\paragraph{Kernel methods for deep learning.} Deep learning made kernel methods seemingly obsolete, as it allows to learn more expressive decision functions from the data. However, kernel methods remain relevant in the context of deep learning. We now provide two instances of kernel methods used in the context of deep learning, and whose present thesis is in line with.

\begin{itemize}
    \item[-]  The RKHS norm $||f||_{\mathcal{H}}$ acts as a natural regularization term because it controls the variations of the models prediction according to the geometry induced by $\Phi$. Indeed, for all pairs $(x, x')$:
        \begin{equation}
            |f(x) - f(x')| \leq ||f||_{\mathcal{H}} ||\Phi(x) - \Phi(x')||_{\mathcal{H}}.
        \end{equation}
        \citet{bietti2019group} showed that it is possible to construct a multi-layer hierarchical kernel whose RKHS contains CNNs but with smooth activations instead of ReLus. Then, \citet{bietti2019a} used approximations of the corresponding RKHS norm to regularize the training of real CNNs (with ReLus activations hence not contained in the RKHS), using various upper-bounds or lower-bounds of $||f||_{\mathcal{H}}$ as a penalty term in the empirical risk minimization objective, thus improving generalization and robustness of deep convolutional neural networks in the context of small datasets of images. For example, $||f||_{\mathcal{H}}$ can be upper-bounded as:
        \begin{equation}
            ||f||_{\mathcal{H}} \leq \omega(||W_1||, \dots, ||W_L||),
        \end{equation}
        with $\omega$ a function increasing in all its arguments and $||W_i||$ the spectral norm of the i-th weight matrix of the network. Then, it is possible to regularize the training problem by using the sum of the spectral norms of the weight matrices as a penalty term.
    \item[-] It is possible to get an approximation $\Psi$ for the kernel mapping $\Phi$ in the sense that $\langle \Psi(x), \Psi(x') \rangle_{\Real^d} \approx \langle \Phi(x), \Phi(x') \rangle_{\mathcal{H}} $~\citep{williams2001using}. In fact, the resulting approximation will have a neural network-like structure, \textit{i.e} alternating matrix multiplications and non-linearities~\citep{mairal2016end}:
        \begin{equation}
            \Psi(x) = K(Z, Z)^{-\nicefrac{1}{2}} K(Z, x),
        \end{equation}
        where $Z \in \Real^{k \times d}$ is a set of $k$ anchor points. $K(Z, x) \in \Real^k$ denotes the comparison between each anchor and the current query $x$, and $K(Z, Z) \in \Real^{k \times k}$ is the Gram matrix for the anchors. By choosing a suitable kernel $K$ depending on the data at hand, we are therefore encoding a prior in our model. As opposed to neural networks, it is possible to learn $\Psi$ without labelled data. In a supervised context, $\Psi$ can still be learned using back-propagation. This technique naturally lends itself to contexts where few labelled data is available. We will therefore rely on it in Chapters~\ref{chapt:2_otke} and~\ref{chapt:3_graphit}.
\end{itemize}

\subsection{Transformers}

\paragraph{A recently proposed neural network architecture.} Transformers are a kind of deep learning architecture introduced by~\citet{vaswani2017}. This architecture relies mostly on the attention mechanism introduced in the context of machine translation~\citep{Bahdanau2015}. As opposed to common neural network architectures such as fully-connected or convolutional, it takes a set of features such as the words of a sentence or patches composing an image as input, and output a label or another set of features. More precisely, transformers consist in two parts: the encoder and the decoder.

The encoder takes a set of $n$ elements ${X}$ in $\Real^{n \times d_{\text{in}}}$. It outputs another set in $\Real^{n \times d_{\text{out}}}$. Globally, the feature map $X$ is updated via:
\begin{equation*}
    {X} = {X} + \text{Attention}(Q, K, V).
\end{equation*}
The update term is obtained via the self-attention mechanism:
\begin{equation}
    \text{Attention}(Q, K, V) = \text{softmax}\left( \frac{QK^\top}{\sqrt{d_{out}}} \right) V \in \Real^{n \times d_{\text{out}}},
\end{equation}
with $Q^\top = W_Q {X}^\top$  and $K^\top = W_K {X}^\top$  respectively the query and key matrices, $V^\top = W_V {X}^\top$ the value matrix. $W_Q, W_K, W_V$ in $\Real^{d_{\text{out}} \times d_{\text{in}}}$ are learned projection matrices.

Then, Layer Normalization~\citep{ba2016layer} is applied to the output: each element of the output is normalized via statistics computed over all the hidden units in the same layer. Finally, a fully connected feed-forward neural network is applied to each element of the updated feature map $X$. This neural network is the same for every elements in $X$. Again, the feature map is updated with a residual connection. The sequence of operations forms a block which can be repeated multiple times to build a deeper transformer encoder.

The decoder part of the transformer takes two inputs: the output of the encoder and another input sequence (that will be iteratively built when the transformer is used for translation for example). Here, the attention weights are computed via an attention mechanism between the encoder output sequence and the decoder input sequence. We will not further detail the decoder here since we will not make use of it in the thesis.

\paragraph{Position encoding.} The sequence of operations detailed above does not take into account the position of the elements in the input set. It is said that the transformer encoder is ``permutation equivariant'': a permutation of two elements in the input set will simply result in the same permutation in the output set, without further change in the feature map. To circumvent this problem, positional information is added to the input elements when needed. This approach is called position encoding. Position encoding can either be learnable vectors or scalars that are added to the input elements (absolute position encoding) or at the attention mechanism level for a pair of input elements (relative position encoding). For euclidean data such as sentences and images which can be seen respectively as 1D or 2D grids, it is also possible to come up with hand-crafted coordinate systems such as in~\citet{vaswani2017}:
\begin{align}
    & PE(i, 2k) = \text{sin}\left(\frac{i}{1000^{2k / d_{in}}}\right) \\
    & PE(i, 2k + 1) = \text{cos}\left(\frac{i}{1000^{2k / d_{in}}}\right),
\end{align}
where $i$ is the index of the element in the set, and $k$ the index of the dimension. Position encoding in transformers is less trivial when it comes to non-euclidean data structure such as graphs, as will be detailed in Chapter~\ref{chapt:3_graphit}.

\paragraph{Early success of transformers.} Transformers can be used when one wants to deal with sets of features or sequences (by sequence, we mean a set where the order of the elements matter). Many data modalities can in fact be seen as set or sequences: sentences can be seen as a sequence of word embeddings, images can be seen as a sequence of pixels or patches, and graphs can be seen as Transformers met early success in Natural Language Processing, quickly outperforming state-of-the-art models on various benchmarks as mentioned above.

Transformers recently met success in computer vision~\citep{dosovitskiy2021an} or bioinformatics~\citep{rives2019biological}. This architecture is remarkable because it puts the paradigm ``to one data modality corresponds one preferred architecture'' into question~\footnote{\href{https://gregoiremialon.github.io/talk/nlpmeetup/nlpmeetup.pdf}{How Natural Language Processing is reshaping Machine Learning}}. For example, it has been shown in the context of NLP that transformers scale better than their LSTMs counterpart as can be seen in Figure~\ref{fig:power_law}, probably due to their improved use of long contexts~\cite{kaplan2020scaling}. LSTMs were however the de facto architecture when it came to dealing with sequences. Today, transformers are the cornerstone of most language models.

Current hardware is suited to large matrix multiplications required by the transformer architecture, which enables to train bigger and bigger transformers on bigger and bigger datasets of text, or images, or even both either in the traditional setting of supervised learning, or via the more and more popular self-supervised learning. Ours minds should remain open to different models in the future, but the surprising success of this simple architecture is what motivated its study in this thesis in Chapter~\ref{chapt:2_otke} and~\ref{chapt:3_graphit}. Relevant information on the architecture will be given in these chapters.

\begin{figure}
\centering
\includegraphics[scale=.5]{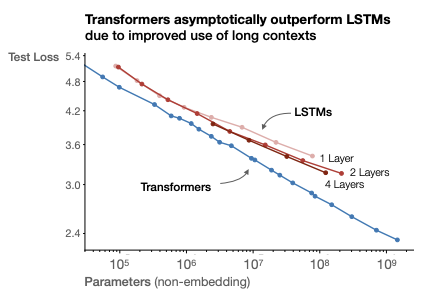}
\caption{Transformers scale better than LSTMs for a given, big dataset. From~\citet{kaplan2020scaling}.}
\label{fig:power_law}
\end{figure}

\section{Outline of the thesis and contributions}

\paragraph{Chapter~\ref{chapt:2_otke}}introduces an embedding inspired from optimal transport to learn on sequences of varying size such as sentences in natural language processing or proteins in bioinformatics, with few annotated data. It can be learned with or without supervision and be used alone or as a global pooling mechanisms within deep architectures. Finally, our embedding has strong links with attention and transformers and can actually be seen as one of the first versions of linearised attention. 

\paragraph{Chapter~\ref{chapt:3_graphit}} builds on the idea of global adaptive pooling introduced in Chapter~\ref{chapt:2_otke} and introduces one of the first transformers for learning on graphs, GraphiT. Transformers have an interesting inductive bias when it comes to graph but cannot take their structure, a crucial piece of information, into account. We overcome this limitation by introducing two mechanisms for incorporating the structure of a graph into the transformers architecture. GraphiT is able to outperform popular Graph Neural Networks, which are the current de facto architecture for learning on graphs, on datasets of various sizes. GraphiT also demonstrates promising visualization capabilities that could be helpful for scientific applications where interpretation is important, for example in chemo-informatics as molecules can be seen as graphs. 

\paragraph{Chapter~\ref{chapt:4_screening}}still seeks to learn with few samples while moving away from deep learning, and focuses instead on models that can be learned by solving a convex training problem. More precisely, we explore the problem of safe sample screening, where one wants to delete as many useless samples as possible from the dataset before training a model, \textit{without modifying the optimal solution}. This is an important task in domains where there is a large amount of trivial observations that are useless for learning. We introduce general rules working for non strongly convex problems as well as a new regularization mechanism to design regression or classification losses allowing for safe screening. 

\newpage

\paragraph{This manuscript is based on the following material:} 

\begin{center}
\begin{tcolorbox}[width=\linewidth, sharp corners=all, colback=white!95!black]
    \begin{itemize}
        \item[-]  A. Bietti*, G. Mialon*, D. Chen, J. Mairal. "A Kernel Perspective for Regularizing Deep Neural Networks" (ICML, 2019). 
        \item[-] G. Mialon, A. d'Aspremont, J. Mairal. "Screening Data Points in Empirical Risk Minimization via Ellipsoidal Regions and Safe Loss Functions" (AISTATS, 2020).
        \item[-] G. Mialon*, D. Chen*, A. d'Aspremont, J. Mairal. "A Trainable Optimal Transport Embedding for Feature Aggregation and its Relationship to Attention" (ICLR, 2021).
        \item[-] G. Mialon*, D. Chen*, M. Selosse*, J. Mairal. "GraphiT: Encoding Graph Structure in Transformers" (preprint arXiv:2106.05667).
    \end{itemize}
\end{tcolorbox}
\end{center}

\cleardoublepage

\chapter{Embedding Sets of Features with Optimal Transport Kernels}
\label{chapt:2_otke}

This chapter tackles the problem of learning on sets of features, motivated by the
need of performing pooling operations in long biological sequences of varying 
sizes, with long-range dependencies, and possibly few labeled data.
To address this challenging task, we introduce a parametrized representation of fixed size, which  embeds and then aggregates elements from a given input set according to the optimal transport plan between the set and a trainable reference.
Our approach scales to large datasets and allows end-to-end training of the 
reference, while also providing a simple unsupervised learning mechanism
with small computational cost. Our aggregation technique admits two 
useful interpretations: it may be seen as a mechanism related to attention layers in neural networks, or it 
may be seen as a scalable surrogate of a classical optimal transport-based
kernel. We experimentally demonstrate the effectiveness of our approach on
biological sequences, achieving state-of-the-art results for protein fold
recognition and detection of chromatin profiles tasks, and, as a proof of concept, we show promising results for
processing natural language sequences. We provide an open-source
implementation of our embedding that can be used alone or as a module in larger
learning models at \url{https://github.com/claying/OTK}.

\paragraph{This chapter is based on the following material:} 

\begin{center}
\begin{tcolorbox}[width=\linewidth, sharp corners=all, colback=white!95!black]
    \begin{itemize}
        \item[-] G. Mialon*, D. Chen*, A. d'Aspremont, J. Mairal. "A Trainable Optimal Transport Embedding for Feature Aggregation and its Relationship to Attention" (ICLR, 2021).
    \end{itemize}
\end{tcolorbox}
\end{center}

\section{Introduction}
\label{section:introduction}
Many scientific fields such as bioinformatics or natural language processing (NLP) require processing sets of features with positional information (biological sequences such as in Figure~\ref{fig:covid}, or sentences represented by a set of local features). These objects are
delicate to manipulate due to varying lengths and potentially long-range dependencies between their elements. For many tasks, the difficulty is even greater since the sets can be arbitrarily large, or only provided with few labels, or both.

\begin{figure}
\centering
\texttt{CUU GAC AAA GUU GAG GCU GAA GUG CAA AUU GAU AGG UUG AUC ACA GGC} \\
\texttt{L \ \ D \ \ K \ \ V \ \ E \ \ A \ \ E \ \ V \ \ Q \ \ I \ \ D \ \ R \ \ L \ \ I \ \ T \ \ G}
\caption{Short part of mRNA sequence for the SARS-Cov-2 spike protein, lying at the surface of the virus. The spike protein is an important part of the infection mechanism and the target of potential treatments. Each triplet codes for an amino acid, represented below.}
\label{fig:covid}
\end{figure}

 
Deep learning architectures specifically designed for sets have recently been proposed~\citep{Lee2019,Skianis2020}.  Our experiments show that these architectures perform well for NLP tasks, but achieve mixed performance for long biological sequences of varying size with few labeled data. Some of these models use attention~\citep{Bahdanau2015}, a classical mechanism for aggregating features. Its typical implementation is the transformer~\citep{vaswani2017}, which has shown to achieve state-of-the-art results for many sequence modeling tasks, \textit{e.g}, in NLP~\citep{devlin2018} or in bioinformatics~\citep{rives2019biological}, when trained with self supervision on large-scale data. Beyond sequence modeling, we are interested in this chapter in finding a good representation for sets of features of potentially diverse sizes, with or without positional information, when the amount of training data may be scarce.
To this end, we introduce a trainable embedding, which can operate directly on the feature set or be combined with existing deep approaches.

More precisely, our embedding marries ideas from optimal transport (OT) theory~\citep{Peyre2019} and kernel methods~\citep{scholkopf2001learning}. We call this embedding OTKE (Optimal Transport Kernel Embedding).
Concretely, we embed feature vectors of a given set to a reproducing kernel Hilbert space (RKHS) and then perform a weighted pooling operation, with weights given by the transport plan between the set and a trainable reference. To gain scalability, we then obtain a finite-dimensional embedding by using kernel approximation techniques~\citep{williams2001using}. 
The motivation for using kernels is to provide a non-linear transformation of the input features before pooling, whereas optimal transport allows to align the features on a trainable reference with fast algorithms~\citep{Cuturi2013b}. Such combination provides us with a theoretically grounded, fixed-size embedding that can be learned either without any label, or with supervision.
Our embedding can indeed become adaptive to the problem at hand, by optimizing the reference with respect to a given task. It can operate on large sets with varying size, model long-range dependencies when positional information is present, and scales gracefully to large datasets. We demonstrate its effectiveness on biological sequence classification tasks, including protein fold recognition and detection of chromatin profiles where we achieve state-of-the-art results. We also show promising results in natural language processing tasks, where our method outperforms strong baselines.
\vs
\paragraph{Contributions.} In summary, our contribution is three-fold. 
\begin{itemize}
    \item[-] We propose a new method to embed sets of features of varying sizes to fixed size representations that are well adapted to downstream machine learning tasks, and whose parameters can be learned in either unsupervised or supervised fashion.
    \item[-] We demonstrate the scalability and effectiveness of our approach on biological and natural language sequences.
    \item[-] We provide an open-source implementation of our embedding that can be used alone or as a module in larger learning models.
\end{itemize}

\section{Related Work}
\label{section:related_work}
\paragraph{Kernel methods for sets and OT-based kernels.} 
The kernel associated with our embedding belongs to the family of match kernels~\citep{Lyu2004, Tolias2013}, which compare all pairs of features between two sets via a similarity function. Another line of research builds kernels by matching features through the Wasserstein distance. A few of them are shown to be positive definite~\citep{Gardner2018} and/or fast to compute~\citep{Rabin2011, Kolouri2016}. Except for few hyper-parameters, these kernels yet cannot be trained end-to-end, as opposed to our embedding that relies on a trainable reference.
Efficient and trainable kernel embeddings for biological sequences have also been proposed by~\citet{Dexiong2019a,Dexiong2019b}. Our work can be seen as an extension of these earlier approaches by using optimal transport rather than mean pooling for aggregating local features, which performs significantly better for long sequences in practice.

\vs
\paragraph{Deep learning for sets.}  
Deep Sets~\citep{Zaheer2017} feed each element of an input set into a feed-forward neural network. The outputs are aggregated following a simple pooling operation before further processing. \citet{Lee2019} propose a Transformer inspired encoder-decoder architecture for sets which also uses latent variables.
\citet{Skianis2020} compute some comparison costs between an input set and reference sets. These costs are then used as features in a subsequent neural network. The reference sets are learned end-to-end. Unlike our approach, such models do not allow unsupervised learning. We will use the last two approaches as baselines in our experiments.
\vs
\paragraph{Interpretations of attention.} 
Using the transport plan as an ad hoc attention score was proposed by~\citet{LChen2019} in the context of network embedding to align data modalities. Our work goes beyond and uses the transport plan as a principle for pooling a set in a model, with trainable parameters. \citet{tsai2019transformer} provide a view of Transformer's attention via kernel methods, yet in a very different fashion where attention is cast as kernel smoothing and not as a kernel embedding.

\section{Proposed Embedding}
\label{section:embedding}
\subsection{Preliminaries}
\label{section:preliminaries}
We handle sets of features in $\Real^d$ and consider sets $\mathbf{x}$ living in
\begin{equation*}
    \mathcal{X} = \left\{ \mathbf{x} ~|~ \mathbf{x} = \{\mathbff{x}_1, \dots, \mathbff{x}_n\} \text{ such that } \mathbff{x}_1, \dots, \mathbff{x}_n \in \Real^d ~~\text{for some}~n \geq 1 \right\}.
\end{equation*}
Elements of $\mathcal{X}$ are typically vector representations of local data structures, such as $k$-mers for sequences, patches for natural images, or words for sentences. The size of $\mathbf{x}$ denoted by $n$ may vary, which is not an issue since the methods we introduce may take a sequence of any size as input, while providing a fixed-size embedding. We now revisit important results on optimal transport and kernel methods, which will be useful to describe our embedding and its computation algorithms. 
\vs
\paragraph{Optimal transport.} 
Our pooling mechanism will be based on the transport plan between $\mathbf{x}$ and~$\mathbf{x'}$ seen as weighted point clouds or discrete measures, which is a by-product of the optimal transport problem~\citep{villani2008,Peyre2019}. OT has indeed been widely used in alignment problems~\citep{Grave2019}. Throughout the chapter, we will refer to the Kantorovich relaxation of OT with entropic regularization, detailed for example in~\citep{Peyre2019}. 
Let $\mathbf{a}$ in $\Delta^n$ (probability simplex) and $\mathbf{b}$ in $\Delta^{n'}$ be the weights of the discrete
measures $\sum_i \mathbf{a}_i \delta_{\mathbf{x}_i}$ and $\sum_j \mathbf{b}_j \delta_{\mathbf{x}'_j}$ with respective
locations $\mathbf{x}$ and $\mathbf{x'}$, where $\delta_{\mathbf{x}}$ is the Dirac at position $\mathbf{x}$. Let
$\mathbf{C}$ in $\Real^{n \times n'}$ be a matrix representing the pairwise costs for aligning the elements of
$\mathbf{x}$ and $\mathbf{x'}$. The entropic regularized Kantorovich relaxation of OT from $\mathbf{x}$ to
$\mathbf{x'}$ is
\begin{equation}
\min_{\mathbf{P} \in U(\mathbf{a}, \mathbf{b}) } \sum_{ij} \mathbf{C}_{ij} \mathbf{P}_{ij} - \varepsilon H(\mathbf{P}), 
\label{eq:ot}
\end{equation}
where $H(\mathbf{P}) = - \sum_{ij} \mathbf{P}_{ij} (\log(\mathbf{P}_{ij}) - 1)$ is the entropic regularization with parameter $\varepsilon$, which controls the sparsity of $\mathbf{P}$, and $U$ is the space of admissible couplings between $\mathbf{a}$ and $\mathbf{b}$:
\begin{equation*}
    U(\mathbf{a}, \mathbf{b}) = \{\mathbf{P} \in \Real_+^{n \times {n'}} : \mathbf{P}\mathbf{1}_n = \mathbf{a} \text{ and } \mathbf{P}^{\top}\mathbf{1}_{n'} = \mathbf{b}\}.
\end{equation*}
The problem is typically solved by using a matrix scaling procedure known as Sinkhorn's algorithm~\citep{sinkhorn1967,Cuturi2013b}.
In practice, $\mathbf{a}$ and $\mathbf{b}$ are uniform measures since we consider the mass to be evenly distributed between the points. $\mathbf{P}$ is called the transport plan, which carries the information on how to distribute the mass of $\mathbf{x}$ in $\mathbf{x'}$ with minimal cost. A simple representation of the problem can be seen in Figure~\ref{fig:ot}. 
Our method uses optimal transport to align features of a given set to a learned reference.

\begin{figure}
\centering
\includegraphics[scale=.5]{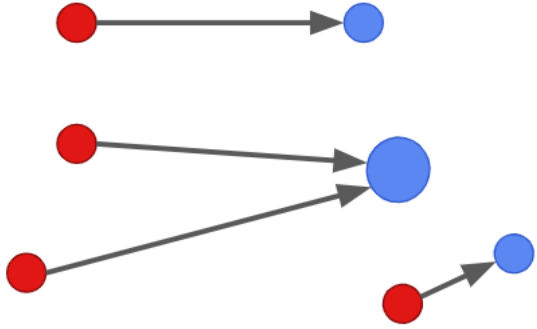}
\caption{Discrete optimal transport consists in finding the transport plan (between points with masses) minimizing a given transportation cost based on the distance between the masses, from the red ones to the blue ones. Here, all masses are identical except the biggest, which is twice the others.}
\label{fig:ot}
\end{figure}

\paragraph{Sinkhorn's Algorithm: Fast Computation of $\mathbf{P}_{\kappa}(\mathbf{x}, \mathbf{\mathbf{z}})$.} Without loss of generality, we consider here $\kappa$ the linear kernel. We recall that $\mathbf{P}_{\kappa}(\mathbf{x}, \mathbf{\mathbf{z}})$ is the solution of an optimal transport problem, which can be efficiently solved by Sinkhorn's algorithm~\citep{Peyre2019} involving matrix multiplications only. Specifically, Sinkhorn's algorithm is an iterative matrix scaling method that takes the opposite of the pairwise similarity matrix $\mathbf{K}$ with entry $\mathbf{K}_{ij}:=\langle \mathbf{x}_i, \mathbf{z}_j\rangle$ as input $\mathbf{C}$ and outputs the optimal transport plan $\mathbf{P}_{\kappa}(\mathbf{x}, \mathbf{\mathbf{z}})=\text{Sinkhorn}(\mathbf{K},\varepsilon)$. Each iteration step $\ell$ performs the following updates
\begin{equation}\label{eq:sinkhorn}
    \mathbf{u}^{(\ell+1)}=\frac{1/n}{\mathbf{E}\mathbf{v}^{(\ell)}}~~\text{and}~~\mathbf{v}^{(\ell+1)}=\frac{1/p}{\mathbf{E}^{\top}\mathbf{u}^{(\ell)}},
\end{equation}
where $\mathbf{E}=e^{\mathbf{K}/\varepsilon}$. Then the matrix $\text{diag}(\mathbf{u}^{(\ell)})\mathbf{E}\text{diag}(\mathbf{v}^{(\ell)})$ converges to $\mathbf{P}_{\kappa}(\mathbf{x}, \mathbf{\mathbf{z}})$ when $\ell$ tends to $\infty$.
However when $\varepsilon$ becomes too small, some of the elements of a matrix product $\mathbf{E} \mathbf{v}$ or $\mathbf{E}^{\top}\mathbf{u}$ become null and result in a division by 0. To overcome this numerical stability issue, computing the multipliers $\mathbf{u}$ and $\mathbf{v}$ is preferred (see \textit{e.g.} \cite[Remark 4.23]{Peyre2019}).
This algorithm can be easily adapted to a batch of data points $\mathbf{x}$, and with possibly varying lengths via a mask vector masking on the padding positions of each data point $\mathbf{x}$, leading to GPU-friendly computation. More importantly, all the operations above at each step are differentiable, which enables $\mathbf{z}$ to be optimized through back-propagation. Consequently, this module can be injected into any deep networks.

\vs
\paragraph{Kernel methods.} Kernel methods~\citep{scholkopf2001learning} map data living in a space  $\mathcal{X}$ to a reproducing
kernel Hilbert space $\mathcal{H}$, associated to a positive definite
kernel $K$ through a mapping function $\varphi: \mathcal{X} \to \mathcal{H}$,
such that $K(\mathbf{x}, \mathbf{x'}) = \langle \varphi(\mathbf{x}), \varphi(\mathbf{x'})
\rangle_{\mathcal{H}}$. Even though $\varphi(\mathbf{x})$ may be infinite-dimensional, classical kernel approximation techniques~\citep{williams2001using} provide finite-dimensional embeddings $\psi(\mathbf{x})$ in $\Real^k$ such that $K(\mathbf{x},\mathbf{x}') \approx \langle \psi(\mathbf{x}), \psi(\mathbf{x}')\rangle$. Our embedding for sets relies in part on kernel method principles and on such a finite-dimensional approximation.

\paragraph{Attention and transformers.} We clarify the concept of attention --- a mechanism yielding a context-dependent embedding for each element of $\mathbf{x}$ --- as a special case of non-local operations~\citep{Wang2017, Buades2011}, so that it is easier to understand its relationship to the OTK. Let us assume we are given a set $\mathbf{x} \in \mathcal{X}$ of length $n$. A non-local operation on $\mathbf{x}$ is a function $\Phi: \mathcal{X} \mapsto \mathcal{X}$ such that
\begin{equation*}
    \Phi(\mathbf{x})_i = \sum_{j=1}^{n} w(\mathbf{x}_i, \mathbf{x}_j) v(\mathbf{x}_j)=\mathbf{W}(\mathbf{x})_i^{\top} \mathbf{V}(\mathbf{x}),
\end{equation*}
where $\mathbf{W}(\mathbf{x})_i$ denotes the $i$-th column of $\mathbf{W}(\mathbf{x})$, a weighting function, and $\mathbf{V}(\mathbf{x})$ corresponds to $[v(\mathbf{x}_1), \dots, v(\mathbf{x}_n)]^\top$, an embedding. In contrast to operations on local neighborhood such as convolutions, non-local operations theoretically account for long range dependencies between elements in the set. In attention and the context of neural networks, $w$ is a \textit{learned} function reflecting the \textit{relevance} of each other elements $\mathbf{x}_j$ with respect to the element $\mathbf{x}_i$ being embedded and given the task at hand. In the context of the chapter, we compare to a type of attention coined as \textit{dot-product self-attention}, which can typically be found in the encoder part of the transformer architecture ~\citep{vaswani2017}. Transformers are neural network models relying mostly on a succession of an attention layer followed by a fully-connected layer. Transformers can be used in sequence-to-sequence tasks --- in this setting, they have an encoder with self-attention and a decoder part with a variant of self-attention ---, or in sequence to label tasks, with only the encoder part. The chapter deals with the latter. The name self-attention means that the attention is computed using a dot-product of linear transformations of $\mathbf{x}_i$ and $\mathbf{x}_j$, and $\mathbf{x}$ attends to itself only. In its matrix formulation, dot-product self-attention is a non-local operation whose matching vector is
\begin{equation*}
    \mathbf{W}(\mathbf{x})_i = \text{Softmax}\left( \frac{W_Q \mathbf{x}_i \mathbf{x}^\top W_K^\top}{\sqrt{d_k}} \right),
    \label{eq:attention}
\end{equation*}
where $W_Q \in \Real^{n \times d_k}$ and $W_K \in \Real^{n \times d_k}$ are learned by the network.
In order to know which $\mathbf{x}_j$ are relevant to $\mathbf{x}_i$, the network computes scores between a query for $\mathbf{x}_i$ ($W_Q \mathbf{x}_i$) and keys of all the elements of $\mathbf{x}$ ($W_K \mathbf{x}$). The softmax turns the scores into a weight vector in the simplex. Moreover, a linear mapping $\mathbf{V}(\mathbf{x}) = W_V \mathbf{x}$, the values, is also learned.  $W_Q$ and $W_K$ are often shared \citep{kitaev2020reformer}. A drawback of such attention is that for a sequence of length $n$, the model has to store an attention matrix $\mathbf{W}$ with size $O(n^2)$. More details can be found in~\cite{vaswani2017}. 

\subsection{Optimal Transport Embedding and Associated Kernel}

We now present the OTKE, an embedding and pooling layer which aggregates a variable-size set or sequence of features into a fixed-size embedding. We start with an infinite-dimensional variant living in a RKHS, before introducing the finite-dimensional embedding that we use in practice.
\vs
\paragraph{Infinite-dimensional embedding in RKHS.}
Given a set $\xb$ and a (learned) reference $\zb$ in $\Xcal$ with~$p$ elements, we consider an embedding $\Phi_\zb(\xb)$ which performs the following operations: (i) initial embedding of the elements of~$\xb$ and $\zb$ to a RKHS $\Hcal$; (ii) alignment of the elements of~$\xb$ to the elements of $\zb$ via optimal transport; (iii) weighted linear pooling of the elements $\xb$ into $p$ bins, producing an embedding $\Phi_\zb(\xb)$ in~$\Hcal^p$, which is illustrated in Figure~\ref{fig:otk}.

Before introducing more formal details, we note that our embedding relies on two main ideas:
\begin{itemize}
    \item[-] \emph{Global similarity-based pooling using references.} Learning on large sets with long-range interactions may benefit from pooling to reduce the number of feature vectors. Our pooling rule follows an inductive bias akin to that of self-attention: elements that are relevant to each other for the task at hand should be pooled together. 
To this end, each element in the reference set corresponds to a pooling cell, where the elements of the input set are aggregated through a weighted sum. The weights simply reflect the similarity between the vectors of the input set and the current vector in the reference. Importantly, using a reference set enables to reduce the size of the ``attention matrix'' from quadratic to linear in the length of the input sequence.
\item[-] \emph{Computing similarity weights via optimal transport.} A computationally efficient similarity score between two elements is their dot-product~\citep{vaswani2017}. In this chapter, we rather consider that elements of the input set should be pooled together if they align well with the same part of the reference. Alignment scores can efficiently be obtained by computing the transport plan between the input and the reference sets: Sinkhorn's algorithm indeed enjoys fast solvers~\citep{Cuturi2013b}.
\end{itemize}

We are now in shape to give a formal definition.
\begin{definition}[\bfseries The optimal transport kernel embedding] Let $\mathbf{x} = (\mathbf{x}_1, \dots, \mathbf{x}_n)$ in $\Xcal$ be an input set of feature vectors and $\mathbf{z} = (\zb_1, \ldots, \zb_p)$ in $\Xcal$ be a reference set with $p$ elements. Let $\kappa$ be a positive definite kernel, \emph{e.g.}, Gaussian kernel, with RKHS $\mathcal{H}$ and  $\varphi: \Real^d \to \mathcal{H}$, its associated kernel embedding. Let $\kappab$ be the $n \times p$ matrix which carries the comparisons $\kappa(\xb_i,\zb_j)$, before alignment.

Then, the transport plan between $\xb$ and $\zb$, denoted by the $n \times p$ matrix $\Pb(\xb,\zb)$, is defined as the unique solution of~(\ref{eq:ot}) when choosing the cost  $\mathbf{C} = -\kappab$, and our embedding is defined as
    \begin{equation*}
    \Phi_{\mathbf{z}}(\mathbf{x}) := \sqrt{p} \times \left( \sum_{i =1}^n \mathbf{P}(\mathbf{x}, \mathbf{z})_{i1} \varphi(\mathbf{x}_i), ~\dots~, \sum_{i=1}^n \mathbf{P}(\mathbf{x}, \mathbf{\mathbf{z}})_{ip}  \varphi(\mathbf{x}_i) \right) = \sqrt{p} \times \mathbf{P}(\mathbf{x}, \mathbf{\mathbf{z}})^\top \varphi(\mathbf{x}),
    \end{equation*}
    where $\varphi(\mathbf{x}) := [\varphi(\mathbf{x}_1), \dots, \varphi(\mathbf{x}_n)]^\top$.
    \label{eq:ot_emb}
\end{definition}
    Interestingly, it is easy to show that our embedding
     $\Phi_{\mathbf{z}}(\mathbf{x})$ is associated to the positive definite kernel  
    \begin{equation}
    K_{\mathbf{z}}(\mathbf{x}, \mathbf{x'}) :=  \sum_{i,i'=1}^n \Wb_{\zb}(\xb,\xb')_{ii'} \kappa(\xb_i, \xb'_{i'})
= \langle \Phi_{\mathbf{z}}(\mathbf{x}), \Phi_{\mathbf{z}}(\mathbf{x'}) \rangle,
    \label{eq:ot_kernel_approx}
    \end{equation}
with $\mathbf{P}_{\mathbf{z}}(\mathbf{x}, \mathbf{x'}) := p \times \mathbf{P}(\mathbf{x}, \mathbf{z}) \mathbf{P}(\mathbf{x'}, \mathbf{z})^\top$. This is a weighted match kernel, with weights given by
optimal transport in $\Hcal$.
The notion of pooling in the RKHS $\mathcal{H}$ of $\kappa$ arises naturally if $p \leq n$. The elements of $\mathbf{x}$ are non-linearly embedded and then aggregated in ``buckets'', one for each element in the reference $\mathbf{z}$, given the values of $\mathbf{P}(\mathbf{x}, \mathbf{z})$. This process is illustrated in Figure~\ref{fig:otk}. We acknowledge here the concurrent work by~\citet{kolouri2021wasserstein}, where a similar embedding is used for graph representation. We now expose the benefits of this kernel formulation, and its relation to classical non-positive definite kernel.

\begin{figure}
\centering
\begin{minipage}{0.5\linewidth}
\includegraphics[scale=0.7]{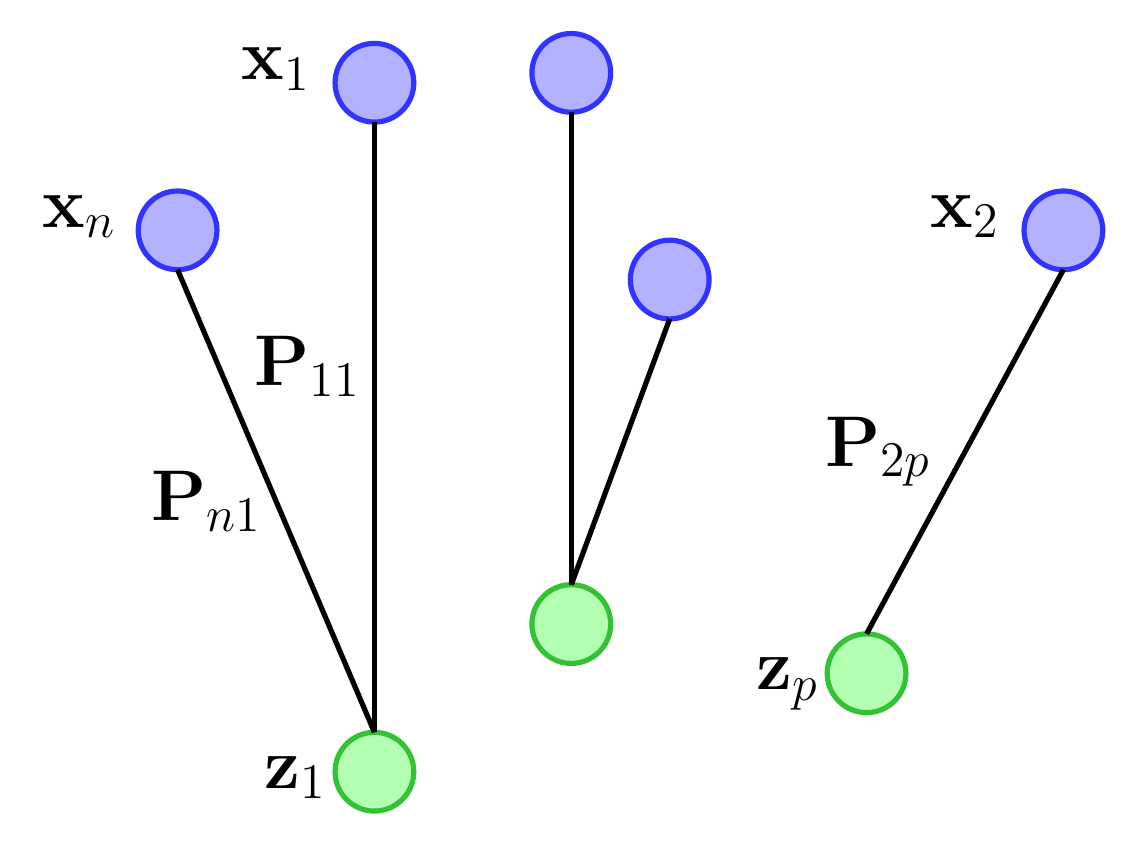}
\end{minipage} 
\begin{minipage}{0.5\linewidth}
\vspace{2cm}
\centering
\includegraphics[scale=0.7]{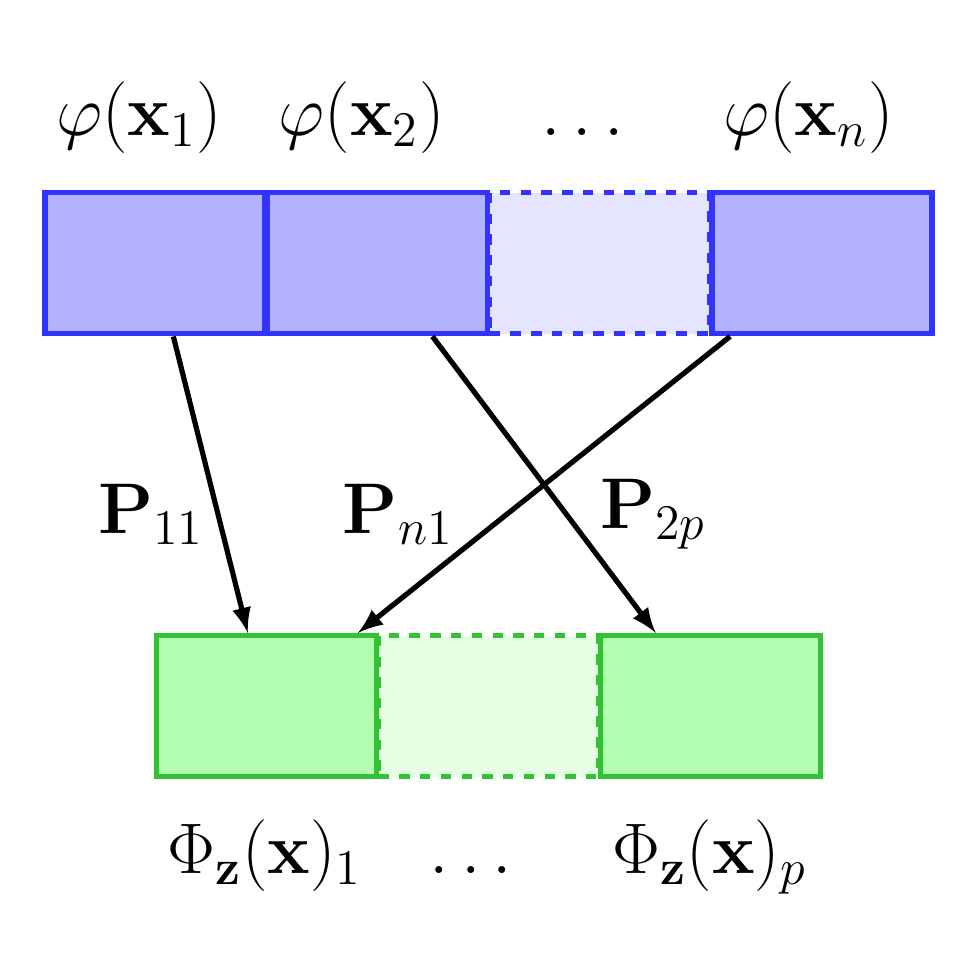}
\end{minipage}
\caption{The input point cloud $\mathbf{x}$ is transported onto the reference $\mathbf{z} = (\mathbf{z}_1, \dots, \mathbf{z}_p)$ (top), yielding the optimal transport plan $\mathbf{P}_{\kappa}(\mathbf{x}, \mathbf{z})$ used to aggregate the embedded features and form $\Phi_{\mathbf{z}}(\mathbf{x})$ (bottom).}\label{fig:otk}
\end{figure}

\paragraph{Kernel interpretation.}
Thanks to the gluing lemma \citep[see, \emph{e.g.},][]{Peyre2019}, $\mathbf{P}_{\mathbf{z}}(\mathbf{x}, \mathbf{x'})$ is a valid transport plan and, empirically, a rough approximation of $\mathbf{P}(\mathbf{x}, \mathbf{x'})$. $K_{\mathbf{z}}$ can therefore be seen as a surrogate of a well-known kernel~\citep{Rubner2000}, defined as
\begin{equation}
\label{eq:ot_kernel}
    K_{\text{OT}}(\mathbf{x}, \mathbf{x'}) := \sum_{i,i'=1}^n {\mathbf{P} (\mathbf{x}, \mathbf{x'})}_{ii'} \kappa(\mathbf{x}_i, \mathbf{x}'_{i'}).
\end{equation}
When the entropic regularization $\varepsilon$ is equal to $0$, $K_{\text{OT}}$ is equivalent to the 2-Wasserstein distance $W_2(\mathbf{x},\mathbf{x'})$ with ground metric $d_{\kappa}$ induced by kernel~$\kappa$. $K_{\text{OT}}$ is generally not positive definite (see~\citet{Peyre2019}, Chapter~$8.3$) and computationally costly (the number of transport plans to compute is quadratic in the number of sets to process whereas it is linear for $K_{\mathbf{z}}$). Now, we show the relationship between this kernel and our kernel $K_\zb$, which is proved in Appendix~\ref{subsection:ot_approx}.
\begin{lemma}[Relation between $\mathbf{P}(\mathbf{x}, \mathbf{x'})$ and $\mathbf{P}_{\mathbf{z}}(\mathbf{x}, \mathbf{x'})$ when $\varepsilon=0$]
\label{lemma:kot_approx}
    For any $\mathbf{x}$, $\mathbf{x'}$ and $\mathbf{z}$ in $\mathcal{X}$ with lengths $n$, $n'$ and $p$, by denoting $W_{2}^{\mathbf{z}}(\mathbf{x}, \mathbf{x'}):=\langle \mathbf{P}_{\mathbf{z}}(\mathbf{x}, \mathbf{x'}), d_{\kappa}^2(\mathbf{x}, \mathbf{x'}) \rangle^{\nicefrac{1}{2}}$ we have
    \begin{equation}
        |W_2(\mathbf{x},\mathbf{x'}) -W_{2}^{\mathbf{z}}(\mathbf{x}, \mathbf{x'})|
        \leq 2 \min (W_2(\mathbf{x}, \mathbf{z}), W_2(\mathbf{x'}, \mathbf{z})).\label{eq:approx}
    \end{equation}
\end{lemma}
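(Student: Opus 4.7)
The plan is to sandwich $W_{2}^{\mathbf{z}}(\mathbf{x}, \mathbf{x'})$ between $W_2(\mathbf{x}, \mathbf{x'})$ and $W_2(\mathbf{x}, \mathbf{z}) + W_2(\mathbf{x'}, \mathbf{z})$, and then finish using the triangle inequality for $W_2$.

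For the lower bound, I would first verify that $\mathbf{P}_{\mathbf{z}}(\mathbf{x}, \mathbf{x'}) = p \cdot \mathbf{P}(\mathbf{x}, \mathbf{z}) \mathbf{P}(\mathbf{x'}, \mathbf{z})^\top$ is an admissible coupling between the uniform measures on $\mathbf{x}$ and $\mathbf{x'}$. Since both $\mathbf{P}(\mathbf{x}, \mathbf{z})$ and $\mathbf{P}(\mathbf{x'}, \mathbf{z})$ have $\frac{1}{p}\mathbf{1}_p$ as their $\mathbf{z}$-marginal, a one-line computation yields marginals $\frac{1}{n}\mathbf{1}_n$ and $\frac{1}{n'}\mathbf{1}_{n'}$ for $\mathbf{P}_{\mathbf{z}}(\mathbf{x}, \mathbf{x'})$; this is the gluing construction alluded to after the lemma. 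Since $W_2(\mathbf{x}, \mathbf{x'})$ is the infimum of the transport cost over admissible couplings, one immediately obtains $W_2(\mathbf{x}, \mathbf{x'}) \leq W_{2}^{\mathbf{z}}(\mathbf{x}, \mathbf{x'})$.

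For the upper bound, I would introduce the three-way coupling $\pi_{ijk} := p \cdot \mathbf{P}(\mathbf{x}, \mathbf{z})_{ij} \mathbf{P}(\mathbf{x'}, \mathbf{z})_{kj}$, whose $(i,j)$- and $(k,j)$-marginals recover the optimal plans $\mathbf{P}(\mathbf{x}, \mathbf{z})$ and $\mathbf{P}(\mathbf{x'}, \mathbf{z})$, and whose $(i,k)$-marginal is exactly $\mathbf{P}_{\mathbf{z}}(\mathbf{x}, \mathbf{x'})$. Viewing $\pi$ as a probability measure, the pointwise triangle inequality $d_{\kappa}(\mathbf{x}_i, \mathbf{x'}_k) \leq d_{\kappa}(\mathbf{x}_i, \mathbf{z}_j) + d_{\kappa}(\mathbf{z}_j, \mathbf{x'}_k)$ combined with Minkowski's inequality in $L^2(\pi)$ gives
\begin{equation*}
    W_{2}^{\mathbf{z}}(\mathbf{x}, \mathbf{x'}) \leq W_2(\mathbf{x}, \mathbf{z}) + W_2(\mathbf{x'}, \mathbf{z}),
\end{equation*}
where the two terms on the right arise because the corresponding marginals of $\pi$ are precisely the optimal transport plans at $\varepsilon = 0$.

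To conclude, assume without loss of generality that $W_2(\mathbf{x}, \mathbf{z}) \leq W_2(\mathbf{x'}, \mathbf{z})$. The lower bound forces $W_{2}^{\mathbf{z}}(\mathbf{x}, \mathbf{x'}) - W_2(\mathbf{x}, \mathbf{x'}) \geq 0$, while combining the upper bound with the triangle inequality $W_2(\mathbf{x'}, \mathbf{z}) \leq W_2(\mathbf{x}, \mathbf{x'}) + W_2(\mathbf{x}, \mathbf{z})$ yields $W_{2}^{\mathbf{z}}(\mathbf{x}, \mathbf{x'}) - W_2(\mathbf{x}, \mathbf{x'}) \leq 2 W_2(\mathbf{x}, \mathbf{z})$, which is exactly the claim. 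The only real obstacle I anticipate is bookkeeping of the factor $p$ in $\mathbf{P}_{\mathbf{z}}$ and $\pi$ so that the relevant marginals really are probability measures; apart from that, the argument boils down to the classical triangle inequality proof for $W_2$, restricted to the specific glued coupling $\mathbf{P}_{\mathbf{z}}$ rather than to an optimal one.
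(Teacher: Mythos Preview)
Your argument is correct. Both your proof and the paper's rest on the same three-way coupling $\pi_{ijk}=p\,\mathbf{P}(\mathbf{x},\mathbf{z})_{ij}\mathbf{P}(\mathbf{x'},\mathbf{z})_{kj}$, Minkowski in $L^2(\pi)$, the triangle inequality for $d_\kappa$, and the triangle inequality for $W_2$, so the core is the same. The organization differs slightly: you sandwich $W_2\le W_2^{\mathbf{z}}\le W_2(\mathbf{x},\mathbf{z})+W_2(\mathbf{x'},\mathbf{z})$, using optimality of $W_2$ for the lower bound, and then subtract $W_2(\mathbf{x},\mathbf{x'})$; the paper instead pivots through $W_2(\mathbf{x},\mathbf{z})$, proving $|W_2(\mathbf{x},\mathbf{z})-W_2^{\mathbf{z}}(\mathbf{x},\mathbf{x'})|\le W_2(\mathbf{x'},\mathbf{z})$ via the reverse triangle inequality $|\|\mathbf u\|-\|\mathbf v\||\le\|\mathbf u-\mathbf v\|$ on the same coupling, and then adds $|W_2(\mathbf{x},\mathbf{x'})-W_2(\mathbf{x},\mathbf{z})|\le W_2(\mathbf{x'},\mathbf{z})$. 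Your route has the small bonus of revealing the sign ($W_2^{\mathbf{z}}\ge W_2$ always), while the paper's route never needs to invoke the optimality of $W_2$ and handles both sides of the absolute value in one stroke.
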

This lemma shows that the distance $W_2^{\mathbf z}$ resulting from $K_{\mathbf{z}}$ is related to the Wasserstein distance~$W_2$; yet, this relation should not be interpreted as an approximation error as our goal is not to approximate~$W_2$, but rather to derive a trainable embedding $\Phi_{\zb}(\xb)$ with good computational properties. Lemma~\ref{lemma:kot_approx} roots our features and to some extent self-attention in a rich optimal transport literature. In fact, $W_2^{\mathbf z}$ is equivalent to a distance introduced by~\citet{Wang2013}, whose properties are further studied by~\citet{moosmuller2020linear}. A major difference is that $W_2^{\mathbf z}$ crucially relies on Sinkhorn's algorithm so that the references can be learned end-to-end, as explained below.

\subsection{From infinite-dimensional kernel embedding to finite dimension}
\label{subsubsection:nyst}
In some cases, $\varphi(\mathbf{x})$ is already finite-dimensional, which 
allows to compute the embedding $\Phi_{\mathbf z}(\mathbf x)$ explicitly.
This is particularly useful when dealing with large-scale data, as it 
enables us to use our method for supervised learning tasks without computing the
Gram matrix, which grows quadratically in size with the number of samples. When $\varphi$ is infinite or high-dimensional, it is nevertheless possible to use an approximation based on the Nystr\"om method~\citep{williams2001using}, which provides an embedding $\psi: {\mathbb R^d} \to \Real^k$ such that
\begin{displaymath}
    \langle \psi(\mathbf x_i), \psi(\mathbf x_j')  \rangle_{\Real^k} \approx \kappa( {\mathbf x}_i, {\mathbf x}_j').
\end{displaymath}
Concretely, the Nystr\"om method consists in projecting points from the RKHS~$\mathcal H$ onto a linear subspace~$\mathcal
F$, which is parametrized by $k$ anchor points ${\mathcal F} = \text{Span}( \varphi( \mathbf{w}_1 ), \ldots, \varphi(
\mathbf{w}_k))$. The corresponding embedding admits an explicit form $\psi({\mathbf x}_i) = \kappa({\mathbf w},{\mathbf
w})^{-1/2} \kappa({\mathbf w},{\mathbf x}_i)$, where $\kappa({\mathbf w},{\mathbf w})$ is the $k \times k$ Gram matrix of
$\kappa$ computed on the set $\wb=\{\wb_1, \ldots, \wb_k\}$ of anchor points and $\kappa(\wb,{\mathbf x}_i)$ is
in~$\Real^k$. Then, there are several ways to learn the anchor points: (a) they can be chosen as random points from data; (b) they can be defined as centroids obtained by K-means, see~\citet{Zhang2008}; (c) they can be learned by back-propagation for a
supervised task, see~\citet{mairal2016end}. 

Using such an approximation within our framework can be simply achieved by
(i) replacing $\kappa$ by a linear kernel and (ii) replacing each element $\xb_i$ by its embedding $\psi(\xb_i)$ in $\Real^k$ and considering a reference set with elements in $\Real^k$.
By abuse of notation, we still use $\mathbf{z}$ for the new parametrization. The embedding, which we use in practice in all our experiments, becomes simply
\begin{align}
    \Phi_{\mathbf{z}}(\mathbf{x}) & = \sqrt{p} \times \left( \sum_{i =1}^n \mathbf{P}(\psi(\mathbf{x}), \mathbf{z})_{i1} \psi(\mathbf{x}_i), ~\dots~, \sum_{i=1}^n \mathbf{P}(\psi(\mathbf{x}), \mathbf{\mathbf{z}})_{ip}  \psi(\mathbf{x}_i) \right) \nonumber \\
    & = \sqrt{p} \times \mathbf{P}(\psi(\mathbf{x}), \mathbf{\mathbf{z}})^\top \psi(\mathbf{x}) \in \Real^{p\times k},
\label{eq:otk_in_practice}
\end{align}
where $p$ is the number of elements in $\mathbf{z}$.
Next, we discuss how to learn the reference set~$\mathbf z$.

\subsection{Unsupervised and Supervised Learning of Parameter \texorpdfstring{$\mathbf{z}$}.}
\label{subsection:learning}
\paragraph{Unsupervised learning.} 
In the fashion of the Nyström approximation, the $p$ elements of $\mathbf{z}$ can be defined as the centroids obtained by K-means applied to all features from training sets in $\mathcal X$. 
A corollary of Lemma~\ref{lemma:kot_approx} suggests another algorithm: a bound on the deviation term between $W_2$ and $W_2^{\mathbf{z}}$ for $m$ samples ($\mathbf{x}^1, \dots, \mathbf{x}^m$) is indeed
\begin{equation}
\label{eq:approx_error_one_ref}
    \mathcal{E}^2 
    := \frac{1}{m^2} \sum_{i,j=1}^m |W_2(\mathbf{x}^i,\mathbf{x}^j) - W_{2}^{\mathbf{z}}(\mathbf{x}^i, \mathbf{x}^j)|^2
    \leq \frac{4}{m} \sum_{i=1}^m W_2^2(\mathbf{x}^i, \mathbf{z}).
\end{equation}
The right-hand term corresponds to the objective of the Wasserstein barycenter problem~\citep{Cuturi2013a}, which yields the mean of a set of empirical measures (here the $\mathbf{x}$'s) under the OT metric. The Wasserstein barycenter is therefore an attractive candidate for choosing $\mathbf{z}$. K-means can be seen as a particular case of Wasserstein barycenter when $m=1$~\citep{Cuturi2013a,ho2017multilevel} and is faster to compute. In practice, both methods yield similar results, see Appendix~\ref{section:add_experiments}, and we thus chose K-means to learn $\mathbf{z}$ in unsupervised settings throughout the experiments.
The anchor points $\mathbf{w}$ and the references $\mathbf{z}$ may be then computed using similar algorithms; however, their mathematical interpretation differs as exposed above. The task of representing features (learning $\mathbf{w}$ in $\mathbb{R}^d$ for a specific $\kappa$) is decoupled from the task of aggregating (learning the reference $\mathbf z$ in $\mathbb{R}^k$).
\vs
\paragraph{Supervised learning.} 
As mentioned in Section~\ref{section:preliminaries}, $\mathbf{P}(\psi(\mathbf{x}), \mathbf{z})$ is computed using Sinkhorn's algorithm, recalled in introduction, which can be easily adapted to batches of samples $\mathbf{x}$, with possibly varying lengths, leading to GPU-friendly forward computations of the embedding~$\Phi_{\mathbf{z}}$. More important, all Sinkhorn's operations are differentiable, which enables $\mathbf{z}$ to be optimized with stochastic gradient descent through back-propagation~\citep{genevay2018}, \emph{e.g.}, for minimizing a classification or regression loss function when labels are available. In practice, a small number of Sinkhorn iterations (\emph{e.g.}, 10) is sufficient to compute $\mathbf{P}(\psi(\mathbf{x}), \mathbf{z})$. Since the anchors $\mathbf{w}$ in the embedding layer below can also be learned end-to-end~\citep{mairal2016end}, our embedding can thus be used as a module injected into any model, \textit{e.g}, a deep network, as demonstrated in our experiments.

\subsection{Extensions}

\paragraph{Integrating positional information into the embedding.}

The discussed embedding and kernel do not take the position of the features into account, which may be problematic when dealing with structured data such as images or sentences. To this end, we borrow the idea of convolutional kernel networks, or CKN~\citep{mairal2016end,mairal2014convolutional}, and penalize the similarities exponentially with the positional distance between a pair of elements in the input and reference sequences. More precisely, we multiply $\mathbf{P}(\psi(\mathbf{x}), \mathbf{z})$ element-wise by a distance matrix $\mathbf{S}$ defined as:
\begin{equation*}
    \mathbf{S}_{ij}=e^{-\frac{1}{\sigma_{\text{pos}}^2}(\nicefrac{i}{n} - \nicefrac{j}{p})^2},
\end{equation*}
and replace it in the embedding. With similarity weights based \textit{both} on content and position, the kernel associated to our embedding can be viewed as a generalization of the CKNs (whose similarity weights are based on position only), with feature alignment based on optimal transport. When dealing with multi-dimensional objects such as images, we just replace the index scalar $i$ with an index vector of the same spatial dimension as the object, representing the positions of each dimension.
\vs
\paragraph{Using multiple references.} A naive reconstruction using different references $\mathbf{z}^1, \dots, \mathbf{z}^q$ in $\mathcal{X}$ may yield a better approximation of the transport plan. In this case, the embedding of $\mathbf{x}$ becomes
\begin{equation}
    \Phi_{\mathbf{z}^1, \dots, {\mathbf{z}}^q}(\mathbf{x})
    = \nicefrac{1}{\sqrt{q}}\left( \Phi_{\mathbf{z}^1}(\mathbf{x}), \dots, \Phi_{\mathbf{z}^q}(\mathbf{x}) \right),\label{eq:otkm}
\end{equation}
with $q$ the number of references (the factor $\nicefrac{1}{\sqrt{q}}$ comes from the mean).
Using~Eq.~(\ref{eq:approx}), we can obtain a bound similar to~(\ref{eq:approx_error_one_ref}) for a data set of $m$ samples ($\mathbf{x}^1, \dots, \mathbf{x}^m$) and $q$ references (see Appendix~\ref{subsection:multi_ref} for details). To choose multiple references, we tried a K-means algorithm with 2-Wasserstein distance for assigning clusters, and we updated the centroids as in the single-reference case. Using multiple references appears to be useful when optimizing $\mathbf{z}$ with supervision (see Appendix~\ref{section:add_experiments}).

\section{Relation between our Embedding and Self-Attention}
\label{section:relationship}
Our embedding and a single layer of transformer encoder, recalled in introduction, share the same type of inductive bias, \textit{i.e}, aggregating features relying on similarity weights. We now clarify their relationship. Our embedding is arguably simpler (see respectively size of attention and number of parameters in Table~\ref{fig:relationship}), and may compete in some settings with the transformer self-attention as illustrated in Section~\ref{section:experiments}.

\begin{table}
    \caption{Relationship between $\Phi_{\mathbf{z}}$ and transformer self-attention. $k$: a function describing how the transformer integrates positional information; $n$: sequence length; $q$: number of references or attention heads; $d$: dimension of the embeddings; $p$: number of supports in $\mathbf{z}$. Typically, $p \ll d$. In recent transformer architectures, positional encoding requires learning additional parameters ($\sim qd^2$).}
    \centering
    \begin{tabular}{c|cc} \toprule
    {} & {Self-Attention} & {$\Phi_{\mathbf{z}}$}  \\ \midrule
    {Attention score} & {$\mathbf{W}=W^\top Q$} & {$\mathbf{P}$} \\
    {Size of score} & {$O(n^2)$} & {$O(np)$} \\ 
    {Alignment w.r.t:} & {$\mathbf{x}$ itself} & {$\mathbf{z}$} \\ 
    {Learned + Shared} & {$W$ and $Q$} & {$\mathbf{z}$} \\
    {Nonlinear mapping} & {Feed-forward} & {$\varphi$ or $\psi$} \\
    {Position encoding} & {$k(t_i, {t'}_j)$} & {$e^{-\frac{1}{ \sigma_{\text{pos}}^2} (\frac{i}{n} - \frac{j}{n'})^2}$} \\
    {Nb. parameters} & {$\sim qd^2$} & {$qpd$} \\
    {Supervision} & {Needed} & {Not needed} \\
    \bottomrule
    \end{tabular}
    \label{fig:relationship}
\end{table}
\vs
\paragraph{Shared reference versus self-attention.} 

There is a correspondence between the values, attention matrix in the transformer and $\varphi$, $\mathbf{P}$ in Definition~\ref{eq:ot_emb}, yet also noticeable differences. On the one hand, $\Phi_{\mathbf{z}}$ aligns a given sequence $\mathbf{x}$ with respect to a reference $\mathbf{z}$, learned with or without supervision, and shared across the data set. Our weights are computed using optimal transport. On the other hand, a transformer encoder performs self-alignment: for a given $\mathbf{x}_i$, features are aggregated depending on a similarity score between $\mathbf{x}_i$ and the elements of $\mathbf{x}$ only. The similarity score is a matrix product between queries $Q$ and keys $K$ matrices, learned with supervision and shared across the data set. 
In this regard, our work complements a recent line of research questioning the dot-product, learned self-attention~\citep{Raganato2020, You2020}. Self-attention-like weights can also be obtained with our embedding by computing $ \mathbf{P}(\mathbf{x},\mathbf{z}_i)\mathbf{P}(\mathbf{x},\mathbf{z}_i)^{\top}$ for each reference $i$. In that sense, our work is related to recent research on efficient self-attention~\citep{wang2020, choro2020}, where a low-rank approximation of the self-attention matrix is computed.
\vs
\paragraph{Position smoothing and relative positional encoding.}
Transformers can add an absolute positional encoding to the input features~\citep{vaswani2017}. Yet, relative positional encoding~\citep{Dai2019} is a current standard for integrating positional information: the position offset between the query element and a given key can be injected in the attention score~\citep{tsai2019transformer}, which is equivalent to our approach. The link between CKNs and our kernel, provided by this positional encoding, stands in line with recent works casting attention and convolution into a unified framework~\citep{Andreoli2020}. In particular, \cite{cordonnier2020on} show that attention learns convolution in the setting of image classification: the kernel pattern is learned at the same time as the filters.
\vs
\paragraph{Multiple references and attention heads.} 
In the transformer architecture, the succession of blocks composed of an attention layer followed by a fully-connected layer is called a head, with each head potentially focusing on different parts of the input. Successful architectures have a few heads in parallel. The outputs of the heads are then aggregated to output a final embedding. A layer of our embedding with non-linear kernel $\kappa$ can be seen as such a block, with the references playing the role of the heads. As some recent works question the role of attention heads~\citep{Voita2019, Michel2019}, exploring the content of our learned references $\mathbf{z}$ may provide another perspective on this question. More generally, visualization and interpretation of the learned references could be of interest for biological sequences.

\paragraph{Efficient transformers.} \citep{mialon2021} was proposed at the same time as a line of work coined as ``efficient transformers". These works sought to improve transformers both in terms of memory and computational cost~\citep{Beltagy2020Longformer,kitaev2020reformer}. We refer the reader to the following survey~\citep{tay2020efficient}. Although our work does not aim at challenging transformers since we are rather interested in data constrained settings, it is interesting to note that OTKE fits in different families of the taxonomy proposed by~\cite{tay2020efficient}: anchors $\mathbf{z}$ can be seen as a memory and $\mathbf{P}(\mathbf{x},\mathbf{z}_i)$ as a factorized approximation of an attention matrix.

\section{Experiments}
\label{section:experiments}
We now show the effectiveness of our embedding OTKE in tasks where samples can be expressed as large sets with potentially few labels, such as in bioinformatics. We evaluate our embedding alone in unsupervised or supervised settings, or within a model in the supervised setting. We also consider NLP tasks involving shorter sequences and relatively more labels. 

\begin{center}
\begin{tcolorbox}[width=\linewidth, sharp corners=all, colback=white!95!black]

    Note that experiments in bioinformatics (SCOP1.75 and DeepSEA) were conducted by Dexiong Chen in the context of~\citet{mialon2021} and already presented in Dexiong Chen's PhD thesis~\citep{chen2020structured}. Hence, they are not a contribution of the present thesis. However, as these supplementary experiments provide insightful results and discussion, we kept it in the main manuscript for completeness.

\end{tcolorbox}
\end{center}

\subsection{Datasets, Experimental Setup and Baselines}
In unsupervised settings, we train a linear classifier with the cross entropy loss between true labels and predictions on top of the features provided by our embedding (where the references $\zb$ and Nyström anchors $\wb$ have been learned without supervision), or an unsupervised baseline. In supervised settings, the same model is initialized with our unsupervised method and then trained end-to-end (including $\zb$ and $\wb$) by minimizing the same loss. We use an alternating optimization strategy to update the parameters for both SCOP and SST datasets, as used by~\citet{Dexiong2019a,Dexiong2019b}. We train for 100 epochs with Adam on both data sets: the initial learning rate is 0.01, and get halved as long as there is no decrease in the validation loss for 5 epochs. The hyper-parameters we tuned include number of supports and references $p,q$, entropic regularization in OT $\varepsilon$, the bandwidth of Gaussian kernels and the regularization parameter of the linear classifier. The best values of $\varepsilon$ and the bandwidth were found stable across tasks, while the regularization parameter needed to be more carefully cross-validated. Additional results and  implementation details can be found in Appendix~\ref{section:add_experiments}.

\paragraph{Experiments on Kernel Matrices (only for small data sets).}

Here, we compare the optimal transport kernel $K_{\text{OT}}$~(\ref{eq:ot_kernel}) and its surrogate $K_{\mathbf{z}}$~(\ref{eq:ot_kernel_approx}) (with $\mathbf{z}$ learned without supervision) to common and other OT kernels. Although our embedding $\Phi_{\mathbf{z}}$ is scalable, the exact kernel require the computation of Gram matrices.
For this toy experiment, we therefore use $5000$ samples only of CIFAR-10 (images with $32 \times 32$ pixels), encoded without supervision using a two-layer convolutional kernel network~\citep{mairal2016end}. The resulting features are $3 \times 3$ patches living in $\Real^d$ with $d=256 \text{ or } 8192$. $K_{\text{OT}}$ and $K_{\mathbf{z}}$ aggregate existing features depending on the ground cost defined by $-\kappa$ (Gaussian kernel) given the computed weight matrix $\mathbf{P}$. In that sense, we can say that these kernels work as an adaptive pooling. We therefore compare it to kernel matrices corresponding to mean pooling and no pooling at all (linear kernel). We also compare to a recent positive definite and fast optimal transport based kernel, the Sliced Wasserstein Kernel~\citep{Kolouri2016} with $10$, $100$ and $1000$ projection directions. We add a positional encoding to it so as to have a fair comparison with our kernels. A linear classifier is trained from this matrices. Although we cannot prove that $K_{\text{OT}}$ is positive definite, the classifier trained on the kernel matrix converges when $\varepsilon$ is not too small. The results can be seen on Table~\ref{tab:kernel_comp}. Without positional information, our kernels do better than Mean pooling. When the positions are encoded, the Linear kernel is also outperformed. Note that including positions in Mean pooling and Linear kernel means interpolating between these two kernels: in the Linear kernel, only patches with same index are compared while in the Mean pooling, all patches are compared. All interpolations did worse than the Linear kernel. The runtimes illustrate the scalability of $K_{\mathbf{z}}$.

\begin{table}
    \small
    \centering
    \caption{Classification accuracies for $5000$ samples of CIFAR-10 using CKN features~\citep{mairal2016end} and forming Gram matrix. A random baseline would yield $10 \%$.}
    \begin{tabular}{l|cc} \toprule
    {Dataset} & \multicolumn{2}{c}{($3 \times 3$), $256$}  \\
    \midrule
    {Kernel} & {Accuracy} & {Runtime} \\
    \midrule
    {Mean Pooling} & {58.5} & {$\sim$ 30 sec} \\
    {Flatten} & {67.6} & {$\sim$ 30 sec} \\
    {Sliced-Wasserstein~\citep{Kolouri2016}} & {63.8} & {$\sim$ 2 min} \\
    {Sliced-Wasserstein~\citep{Kolouri2016} + sin. pos enc.~\citep{devlin2018}} & {66.0} & {$\sim$ 2 min} \\
    {$K_{OT}$} & {64.5} & {$\sim$ 20 min} \\
    {$K_{OT}$ + our pos enc.} & {67.1} & {$\sim$ 20 min} \\
    \midrule
    {$K_{\mathbf{z}}$} & {67.9} & {$\sim$ 30 sec} \\
    {$K_{\mathbf{z}}$ + our pos enc.} & {70.2} & {$\sim$ 30 sec} \\
    \bottomrule
    \end{tabular}
    \label{tab:kernel_comp}
\end{table}

\paragraph{CIFAR-10.}

Here, we test our embedding on the same data modality: we use CIFAR-10 features, \textit{i.e.}, $60,000$ images with $32\times32$ pixels and 10 classes encoded using a two-layer CKN~\citep{mairal2016end}, one of the baseline architectures for unsupervised learning of CIFAR-10, and evaluate on the standard test set. The very best configuration of the CKN yields a small number ($3\times3$) of high-dimensional ($16,384$) patches and an accuracy of $85.8\%$. We will illustrate our embedding on a configuration which performs slightly less but provides more patches ($16\times16$), a setting for which it was designed. 

The input of our embedding are unsupervised features extracted from a 2-layer CKN with kernel sizes equal to 3 and 3, and Gaussian pooling size equal to 2 and 1. We consider the following configurations of the number of filters at each layer, to simulate two different input dimensions for our embedding:
\begin{itemize}
    \item 64 filters at first and 256 at second layer, which yields a $16\times 16\times 256$ representation for each image.
    \item 256 filters at first and 1024 at second layer, which yields a $16\times 16\times 1024$ representation for each image.
\end{itemize}
Since the features are the output of a Gaussian embedding, $\kappa$ in our embedding will be the linear kernel. The embedding is learned with one reference and various supports using K-means method described in Section~\ref{section:embedding}, and compared to several classical pooling baselines, including the original CKN's Gaussian pooling with pooling size equal to 6. The hyper-parameters are the entropic regularization $\varepsilon$ and bandwidth for position encoding $\sigma_{\text{pos}}$. Their search grids are shown in Table~\ref{tab:hyper_cifar10} and the results in Table~\ref{tab:cifar-10}. Without supervision, the adaptive pooling of the CKN features by our embedding notably improves their performance. We notice that the position encoding is very important to this task, which substantially improves the performance of its counterpart without it.

\begin{table}
\small
    \centering
    \caption{Hyperparameter search range for CIFAR-10}
    \label{tab:hyper_cifar10}
    \begin{tabular}{lc}\toprule
         Hyperparameter & Search range \\ \midrule
         Entropic regularization $\varepsilon$ & $[1.0;0.1;0.01;0.001]$ \\
         Position encoding bandwidth $\sigma_{\text{pos}}$ & $[0.5;0.6;0.7;0.8;0.9;1.0]$ \\ \bottomrule
    \end{tabular}
\end{table}

\begin{table}
\small
    \centering
    \caption{Classification results using unsupervised representations for CIFAR-10 for two feature configurations (extracted from a $2$-layer unsupervised CKN with different number of filters). We consider here our embedding with one reference and different number of supports, learned with K-means, with or without position encoding (PE).}
    \label{tab:cifar-10}
    \begin{tabular}{lccc} \toprule
    Method & Nb. supports & $16\times 16\times 256$ & $16\times 16\times 1024$ \\ \midrule
    Flatten & & 73.1 & 76.1  \\
    Mean pooling & & 64.9 & 73.4 \\
    Gaussian pooling~\citep{mairal2016end} & & 77.5 & 82.0 \\ \midrule
    Ours & \multirow{2}{*}{9} & 75.6 & 79.3 \\
    Ours (with PE) & & 78.0 & 82.2 \\ 
    Ours & \multirow{2}{*}{64} & 77.9 & 80.1 \\
    Ours (with PE) & & 81.4 & 83.2 \\  
    Ours & \multirow{2}{*}{144} & 78.4 & 80.7 \\
    Ours (with PE) & & 81.8 & 83.4 \\ 
    \bottomrule
    \end{tabular}
\end{table}

\paragraph{Sentiment analysis on Stanford Sentiment Treebank.}
SST-2~\citep{Socher2013} belongs to the NLP GLUE benchmark~\citep{Wang2019} and consists in predicting whether a movie review is positive. The dataset contains $70,042$ reviews. The test predictions need to be submitted on the GLUE leaderboard, so that we remove a portion of the training set for validation purpose and report accuracies on the actual validation set used as a test set. Our model is one layer of our embedding with $\varphi$ a Gaussian kernel mapping with $64$ Nyström filters in the supervised setting, and a linear mapping in the unsupervised setting. The features used in our model and all baselines are word vectors with dimension $768$ provided by the HuggingFace implementation~\citep{wolf2019} of the pre-trained transformer model BERT~\citep{devlin2018}. State-of-the-art accuracies are usually obtained after supervised fine-tuning of pre-trained transformers. 
Training a linear model on pre-trained features after simple pooling (\textit{e.g}, mean) also yields good results. [CLS], which denotes the BERT embedding used for classification, is also a common baseline. The results are shown in Table~\ref{tab:sst-2}.

\begin{table}
    \small
    \centering
    \caption{Classification accuracies for SST-2 reported on standard validation set, averaged from 10 different runs ($q$ references $\times$ $p$ supports). 
    }
    \begin{tabular}{l|c|c} \toprule
    {Method} & {Unsupervised} & {Supervised} \\
    \midrule
    {[CLS] embedding~\citep{devlin2018}} & {84.6$\pm$0.3} & {90.3$\pm$0.1} \\
    {Mean Pooling of BERT features~\citep{devlin2018}} & {85.3$\pm$0.4} & {\textbf{90.8$\pm$0.1}} \\
    {Approximate Rep the Set~\citep{Skianis2020}} & {Not available.} & {86.8$\pm$0.9} \\
    {Rep the Set~\citep{Skianis2020}} & {Not available.} & {87.1$\pm$0.5} \\
    {Set Transformer~\citep{Lee2019}} & {Not available.} & {87.9$\pm$0.8} \\
    \midrule
    {Ours (dot-product instead of OT)} & {85.7$\pm$0.9} & {86.9$\pm$1.1} \\
    {Ours (Unsupervised: $1 \times 300$. Supervised: $4 \times 30$)} & {\textbf{86.8$\pm$0.3}} & {88.1$\pm$0.8} \\
    \bottomrule
    \end{tabular}
    \label{tab:sst-2}
    \vs
\end{table}

\paragraph{Additional experiments: Protein fold classification on SCOP 1.75.}
We follow the protocol described by~\citet{Hou2017} for this important task in bioinformatics. The dataset contains $19,245$ sequences from $1,195$ different classes of fold (hence less than 20 labels in average per class). The sequence lengths vary from tens to thousands. Each element of a sequence is a $45$-dimensional vector. The objective is to classify the sequences to fold classes, which corresponds to a multiclass classification problem. The features fed to the linear classifier are the output of our embedding with $\varphi$ the Gaussian kernel mapping on $k$-mers (subsequences of length $k$) with $k$ fixed to be $10$, which is known to perform well in this task~\citep{Dexiong2019a}. The number of anchor points for Nystr\"om method is fixed to 1024 and 512 respectively for unsupervised and supervised setting. In the unsupervised setting, we compare our method to state-of-the-art unsupervised method for this task: CKN~\citep{Dexiong2019a}, which performs a global mean pooling in contrast to the global adaptive pooling performed by our embedding. In the supervised setting, we compare the same model to the following supervised models: CKN, Recurrent Kernel Networks (RKN)~\citep{Dexiong2019b}, a CNN with $10$ convolutional layers named DeepSF~\citep{Hou2017}, Rep the Set~\citep{Skianis2020} and Set Transformer~\citep{Lee2019}, using the public implementations by their authors. Rep the Set and Set Transformer are used on the top of a convolutional layer of the same filter size as CKN to extract $k$-mer features. Their model hyper-parameters, weight decay and learning rate are tuned in the same way as for our models (see Appendix for details). The default architecture of Set Transformer did not perform well due to overfitting. We thus used a shallower architecture with one Induced Set Attention Block (ISAB), one Pooling by Multihead Attention (PMA) and one linear layer, similar to the one-layer architectures of CKN and our model. The results are shown in Table~\ref{tab:scop}.

\begin{table}[ht]
    \centering
    \caption{Classification accuracy (top 1/5/10) on test set for SCOP 1.75 for different unsupervised and supervised baselines, averaged from 10 different runs  ($q$ references $\times$ $p$ supports).
    }
    \resizebox{\textwidth}{!}{
    \begin{tabular}{l|c|c} \toprule
    {Method} & {Unsupervised} & {Supervised} \\
    \midrule
    {DeepSF~\citep{Hou2017}} & {Not available.} & {73.0/90.3/94.5} \\
    {CKN~\citep{Dexiong2019a}} & {81.8$\pm$0.8/92.8$\pm$0.2/95.0$\pm$0.2} & {84.1$\pm$0.1/94.3$\pm$0.2/96.4$\pm$0.1} \\
    {RKN~\citep{Dexiong2019b}} & {Not available.} & {85.3$\pm$0.3/95.0$\pm$0.2/96.5$\pm$0.1} \\
    {Set Transformer~\citep{Lee2019}} & {Not available.} & {79.2$\pm$4.6/91.5$\pm$1.4/94.3$\pm$0.6} \\
    {Approximate Rep the Set~\citep{Skianis2020}} & {Not available.} & {84.5$\pm$0.6/94.0$\pm$0.4/95.7$\pm$0.4} \\
    \midrule
    {Ours (dot-product instead of OT)} & {78.2$\pm$1.9/93.1$\pm$0.7/96.0$\pm$0.4} & {87.5$\pm$0.3/95.5$\pm$0.2/96.9$\pm$0.1} \\
    {Ours (Unsup.: $1 \times 100$ / Sup.: $5\times10$)} & {\textbf{85.8$\pm$0.2/95.3$\pm$0.1/96.8$\pm$0.1}} & {\textbf{88.7$\pm$0.3/95.9$\pm$0.2/97.3$\pm$0.1}} \\
    \bottomrule
    \end{tabular}
    }
    \label{tab:scop}
\end{table}
\vs
\paragraph{Additional experiments: Detection of chromatin profiles.}
Predicting the chromatin features such as transcription factor (TF) binding from raw genomic sequences has been studied extensively in recent years. CNNs with max pooling operations have been shown effective for this task. Here, we consider DeepSEA dataset~\citep{zhou2015predicting} consisting in simultaneously predicting 919 chromatin profiles, which can be formulated as a multi-label classification task. DeepSEA contains $4,935,024$ DNA sequences of length 1000 and each of them is associated with $919$ different labels (chromatin profiles). Each sequence is represented as a $1000\times 4$ binary matrix through one-hot encoding and the objective is to predict which profiles a given sequence possesses. As this problem is very imbalanced for each profile, learning an unsupervised model could require an extremely large number of parameters. We thus only consider our supervised embedding as an adaptive pooling layer and inject it into a deep neural network, between one convolutional layer and one fully connected layer, as detailed in Appendix~\ref{subsection:app_deepsea}. In our embedding, $\varphi$ is chosen to be identity and the positional encoding described in Section~\ref{section:embedding} is used.
We compare our model to a state-of-the-art CNN with 3 convolutional layers and two fully-connected layers~\citep{zhou2015predicting}. The results are shown in Table~\ref{tab:deepsea}. 

\begin{table}
\vs
    \small
    \centering
    \caption{Results for prediction of chromatin profiles on the DeepSEA dataset. The metrics are area under ROC (auROC) and area under PR curve (auPRC), averaged over 919 chromatin profiles. Due to the huge size of the dataset, we only provide results based on a single run. 
    }
    \begin{tabular}{l|c|c} \toprule
    {Method} & {auROC} & {auPRC} \\
    \midrule
    {DeepSEA~\citep{zhou2015predicting}} & {0.933} & {0.342} \\
    {Ours with position encoding (Sinusoidal~\citep{vaswani2017}/Ours)} & {0.917/\textbf{0.936}} & {0.311/\textbf{0.360}} \\
    \bottomrule
    \end{tabular}
    \label{tab:deepsea}
\end{table}

\subsection{Results and discussion}
In protein fold classification, our embedding outperforms all baselines in both unsupervised and supervised settings. Surprisingly, our unsupervised model also achieves better results than most supervised baselines. In contrast, Set Transformer does not perform well, possibly because its implementation was not designed for sets with varying sizes, and tasks with few annotations. 
In detection of chromatin profiles, our model (our embedding within a deep network) has fewer layers than state-of-the-art CNNs while outperforming them, which advocates for the use of attention-based models for such applications. Our results also suggest that positional information is important (CIFAR-10 results and Appendix~\ref{subsection:app_deepsea}), and our Gaussian position encoding outperforms the sinusoidal one introduced in~\citet{vaswani2017}. Note that in contrast to a typical transformer, which would have stored a $1000\times1000$ attention matrix, our attention score with a reference of size $64$ is only $1000\times64$, which illustrates the discussion in Section~\ref{section:relationship}. In NLP, an \textit{a priori} less favorable setting since sequences are shorter and there are more data, our supervised embedding gets close to a strong state-of-the-art, \textit{i.e.} a fully-trained transformer. We observed our method to be much faster than RepSet, as fast as Set Transformer, yet slower than ApproxRepSet (\ref{subsection:app_scop}). 
Using the OT plan as similarity score yields better accuracies than the dot-product between the input sets and the references (see Table~\ref{tab:scop}; \ref{tab:sst-2}). 

\paragraph{Choice of parameters.} This paragraph sums up the impact of hyper-parameter choices. Experiments justifying our claims can be found in Appendix~\ref{section:add_experiments}.
\begin{itemize}
    \item[-] Number of references $q$: for biological sequences, a single reference was found to be enough in the unsupervised case, see Table~\ref{tab:app_scop_unsup}. In the supervised setting, Table~\ref{tab:app_scop_sup_more}  suggests that using $q=5$ provides slightly better results but $q=1$ remains a good baseline, and that the sensitivity to number of references is moderate.
    \item[-] Number of supports $p$ in a reference: Table~\ref{tab:app_scop_unsup} and Table~\ref{tab:app_scop_sup_more} suggest that the sensitivity of the model to the number of supports is also moderate.
    \item[-] Nyström anchors: an anchor can be seen as a neuron in a feed-forward neural network (see expression of $\psi$ in~\ref{subsubsection:nyst}). In unsupervised settings, the more anchors, the better the approximation of the kernel matrix. Then, the performance saturates, see Table~\ref{tab:app_scop_unsup_more}. In supervised settings, the optimal number of anchors points is much smaller, as also observed by~\citet{Dexiong2019a}, Fig~6.
    \item[-] Bandwidth $\sigma$ in Gaussian kernel: $\sigma$ was chosen as in~\citet{Dexiong2019b} and we did not try to optimize it in this work, as it seemed to already provide good results. Nevertheless, slightly better results can be obtained when tuning this parameter, for instance in SST-2. 
\end{itemize}

\paragraph{OTKE and self-supervised methods.}
Our approach should not be positioned against self-supervision and instead brings complementary features: the OTKE may be plugged in state-of-the-art models pre-trained on large unannotated corpus. For instance, on SCOP 1.75, we use ESM-1~\citep{rives2019biological}, pre-trained on 250 millions protein sequences, with mean pooling followed by a linear classifier. As we do not have the computational ressources to fine-tune ESM1-t34, we only train a linear layer on top of the extracted features. Using the same model, we replace the mean pooling by our (unsupervised) OTKE layer, and also only train the linear layer. This results in accuracy improvements as showed in Table~\ref{tab:scop_esm}. While training huge self-supervised learning models on large datasets is very effective, ESM1-t34 admits more than 2500 times more parameters than our single-layer OTKE model (260k parameters versus 670M) and our single-layer OTKE outperforms smaller versions of ESM1 (43M parameters). 
Finally, self-supervised pre-training of a deep model including OTKE on large data sets would be interesting for fair comparison.
\begin{table}[h]
\small
    \centering
    \caption{Classification accuracy (top 1/5/10) results of our unsupervised embedding for SCOP 1.75 with pre-trained ESM models~\citep{rives2019biological}.}
    \label{tab:scop_esm}
    \begin{tabular}{lccc}\toprule
        {Model} & {Nb parameters} & {Mean Pooling} & {Unsupervised OTKE} \\ \midrule
          {ESM1-t6-43M-UR50S} & 43M & 84.01/93.17/95.07 & 85.91/93.72/95.30 \\
          {ESM1-t34-670M-UR50S} & 670M & 94.95/97.32/97.91 & 95.22/97.32/98.03 \\
          \bottomrule
    \end{tabular}\label{table:selfsup}
\end{table}
\paragraph{Multi-layer extension.} Extending the OTKE to a multi-layer embedding is a natural yet not straightforward research direction: 
it is not clear how to find a right definition of intermediate feature aggregation in a multi-layer OTKE model. Note that for DeepSEA, our model with single-layer OTKE already outperforms a multi-layer CNN, which suggests that a multi-layer OTKE is not always needed.

\section{Conclusion} We showed the effectiveness of OTKE for pooling in sequences in the context of bioinformatics and natural language processing: our model is more expressive than mean, max or sum and competitive with attention mechanisms in terms of sample complexity and model size. In the next chapter, we will explain how pooling is also an important component of a family of neural networks for graphs (GNNs). This pooling is often limited to the closest neighbors of the graph nodes, which leads to issues when processing graphs with complex interactions between distant nodes. In the philosophy of this chapter, we will see graphs as sequences and aggregate node features across the whole graph, an inductive bias different from that of GNNs. While OTKE used basic position encoding scheme and focused on the pooling mechanism, next chapter will do the opposite, relying on simple attention mechanisms for feature aggregation and introducing new mechanisms, or inductive biases, for encoding the position of the nodes, and more globally the structure of the graph.

\newpage




\vspace*{0.3cm}
\begin{center}
   {\huge \textbf{Appendix}}
\end{center}
\vspace*{0.5cm}
Appendix~\ref{section:add_proofs} contains the proofs skipped in the main part of the chapter; Appendix~\ref{section:add_experiments} provides additional experimental results as well as details on our protocol for reproducibility.


\section{Proofs}
\label{section:add_proofs}
\subsection{Proof of Lemma~\ref{lemma:kot_approx}}
\label{subsection:ot_approx}
\begin{proof}
First, since $\sum_{j=1}^{n'} p{\mathbf{P}(\mathbf{x'}, \mathbf{z}})_{jk} = 1$ for any $k$, we have
\begin{align*}
    W_2(\mathbf{x}, \mathbf{z})^2 & = \sum_{i=1}^n \sum_{k=1}^p {\mathbf{P}(\mathbf{x}, \mathbf{z}})_{ik} d_{\kappa}^2(\mathbf{x}_i, \mathbf{z}_k) \\
    & = \sum_{i=1}^n \sum_{k=1}^p \sum_{j=1}^{n'} p{\mathbf{P}(\mathbf{x'}, \mathbf{z}})_{jk} {\mathbf{P}(\mathbf{x}, \mathbf{z}})_{ik} d_{\kappa}^2(\mathbf{x}_i, \mathbf{z}_k) \\
    &=\| \mathbf{u} \|_2^2,
\end{align*}
with $\mathbf{u}$ a vector in $\mathbb{R}^{nn'p}$ whose entries are $\sqrt{p{\mathbf{P}(\mathbf{x'}, \mathbf{z}})_{jk} {\mathbf{P}(\mathbf{x}, \mathbf{z}})_{ik}} d_{\kappa}(\mathbf{x}_i, \mathbf{z}_k)$ for $i=1,\dots,n$, $j=1,\dots, n'$ and $k=1,\dots,p$. We can also rewrite $W_{2}^{\mathbf{z}}(\mathbf{x}, \mathbf{x'})$ as an $\ell_2$-norm of a vector $\mathbf{v}$ in $\mathbb{R}^{nn'p}$ whose entries are $\sqrt{p{\mathbf{P}(\mathbf{x'}, \mathbf{z}})_{jk} {\mathbf{P}(\mathbf{x}, \mathbf{z}})_{ik}} d_{\kappa}(\mathbf{x}_i, \mathbf{x}_j')$. Then by Minkowski inequality for the $\ell_2$-norm, we have
\begin{equation*}
\begin{aligned}
    |W_2(\mathbf{x}, \mathbf{z})-W_{2}^{\mathbf{z}}(\mathbf{x}, \mathbf{x'})|&=|\|\mathbf{u}\|_2-\|\mathbf{v}\|_2| \\
    &\leq \|\mathbf{u}-\mathbf{v}\|_2 \\
    &=\left( \sum_{i=1}^n \sum_{k=1}^p \sum_{j=1}^{n'} p{\mathbf{P}(\mathbf{x'}, \mathbf{z}})_{jk} {\mathbf{P}(\mathbf{x}, \mathbf{z}})_{ik} (d_{\kappa}(\mathbf{x}_i, \mathbf{z}_k) - d_{\kappa}(\mathbf{x}_i,\mathbf{x'}_j))^2 \right)^{\nicefrac{1}{2}} \\
    & \leq \left( \sum_{i=1}^n \sum_{k=1}^p \sum_{j=1}^{n'} p{\mathbf{P}(\mathbf{x'}, \mathbf{z}})_{jk} {\mathbf{P}(\mathbf{x}, \mathbf{z}})_{ik} d_{\kappa}^2(\mathbf{x'}_j, \mathbf{z}_k) \right)^{\nicefrac{1}{2}} \\
    &= \left( \sum_{k=1}^p \sum_{j=1}^{n'} {\mathbf{P}(\mathbf{x'}, \mathbf{z}})_{jk} d_{\kappa}^2(\mathbf{x'}_j, \mathbf{z}_k) \right)^{\nicefrac{1}{2}} \\
    &=W_2(\mathbf{x'}, \mathbf{z}),
\end{aligned}
\end{equation*}
where the second inequality is the triangle inequality for the distance $d_{\kappa}$.
Finally, we have
\begin{align*}
    & |W_2(\mathbf{x}, \mathbf{x'}) - W_{2}^{\mathbf{z}}(\mathbf{x}, \mathbf{x'})| \\ \leq & |W_2(\mathbf{x}, \mathbf{x'}) - W_2(\mathbf{x}, \mathbf{z})| + |W_2(\mathbf{x}, \mathbf{z}) -W_{2}^{\mathbf{z}}(\mathbf{x}, \mathbf{x'})| \\
    \leq & W_2(\mathbf{x'}, \mathbf{z})+W_2(\mathbf{x'}, \mathbf{z}) \\
    = & 2W_2(\mathbf{x'}, \mathbf{z}),
\end{align*}
where the second inequality is the triangle inequality for the 2-Wasserstein distance. By symmetry, we also have $|W_2(\mathbf{x}, \mathbf{x'}) - W_{2}^{\mathbf{z}}(\mathbf{x}, \mathbf{x'})| \leq 2W_2(\mathbf{x}, \mathbf{z})$, which concludes the proof.
\end{proof}

\subsection{Relationship between \texorpdfstring{$W_2$}. and \texorpdfstring{$W_2^{\mathbf{z}}$}. for multiple references}
\label{subsection:multi_ref}

Using the relation proved in Appendix~\ref{subsection:ot_approx}, we can obtain a bound on the error term between $W_2$ and $W_2^{\mathbf{z}}$ for a data set of $m$ samples ($\mathbf{x}^1, \dots, \mathbf{x}^m$) and $q$ references ($\mathbf{z}^1, \dots, \mathbf{z}^q$)
\begin{equation}
\label{eq:approx_error}
    \mathcal{E}^2 
    := \frac{1}{m^2} \sum_{i,j=1}^m |W_2(\mathbf{x}^i,\mathbf{x}^j) - W_{2}^{\mathbf{z}^1,\dots,\mathbf{z}^q}(\mathbf{x}^i, \mathbf{x}^j)|^2
    \leq \frac{4}{mq} \sum_{i=1}^m\sum_{j=1}^q W_2^2(\mathbf{x}^i, \mathbf{z}^j).
\end{equation}
When $q=1$, the right-hand term in the inequality is the objective to minimize in the Wasserstein barycenter problem~\citep{Cuturi2013a}, which further explains why we considered it: Once $W_{2}^{\mathbf{z}}$ is close to the Wasserstein distance $W_{2}$, $K_{\mathbf{z}}$ will also be close to $K_{\text{OT}}$. 
We extend here the bound in~\eqref{eq:approx_error_one_ref} in the case of one reference to the multiple-reference case. The approximate 2-Wasserstein distance $W_{2}^{\mathbf{z}}(\mathbf{x}, \mathbf{x'})$ thus becomes
\begin{equation*}
    W_{2}^{\mathbf{z}^1,\dots, \mathbf{z}^q}(\mathbf{x}, \mathbf{x'}):=\left\langle \frac{1}{q}\sum_{j=1}^q \mathbf{P}_{\mathbf{z}^j}(\mathbf{x}, \mathbf{x'}), d_{\kappa}^2(\mathbf{x}, \mathbf{x'}) \right\rangle^{\nicefrac{1}{2}}= \left(\frac{1}{q}\sum_{j=1}^q W_{2}^{\mathbf{z}^j}(\mathbf{x}, \mathbf{x'})^2\right) ^{\nicefrac{1}{2}}.
\end{equation*}
Then by Minkowski inequality for the $\ell_2$-norm we have
\begin{equation*}
\begin{aligned}
    |W_2(\mathbf{x}, \mathbf{x'}) - W_{2}^{\mathbf{z}^1,\dots, \mathbf{z}^q}(\mathbf{x}, \mathbf{x'})| &= \left| \left(\frac{1}{q}\sum_{j=1}^q W_{2}(\mathbf{x}, \mathbf{x'})^2\right) ^{\nicefrac{1}{2}}-\left(\frac{1}{q}\sum_{j=1}^q W_{2}^{\mathbf{z}^j}(\mathbf{x}, \mathbf{x'})^2\right) ^{\nicefrac{1}{2}} \right| \\
    &\leq \left(\frac{1}{q}\sum_{j=1}^q (W_{2}(\mathbf{x}, \mathbf{x'})-W_{2}^{\mathbf{z}^j}(\mathbf{x}, \mathbf{x'}))^2 \right) ^{\nicefrac{1}{2}},
\end{aligned}
\end{equation*}
and by~\eqref{eq:approx_error_one_ref} we have
\begin{equation*}
    |W_2(\mathbf{x}, \mathbf{x'}) - W_{2}^{\mathbf{z}^1,\dots, \mathbf{z}^q}(\mathbf{x}, \mathbf{x'})|\leq \left(\frac{4}{q}\sum_{j=1}^q \min(W_{2}(\mathbf{x}, \mathbf{z}^j), W_{2}(\mathbf{x'}, \mathbf{z}^j))^2 \right) ^{\nicefrac{1}{2}}.
\end{equation*}
Finally the approximation error in terms of Frobenius is bounded by
\begin{equation*}
    \mathcal{E}^2 := \frac{1}{m^2} \sum_{i,j=1}^m |W_2(\mathbf{x}^i,\mathbf{x}^j) - W_{2}^{\mathbf{z}^1,\dots,\mathbf{z}^q}(\mathbf{x}^i, \mathbf{x}^j)|^2
    \leq \frac{4}{mq}\sum_{i=1}^m\sum_{j=1}^q W_2^2(\mathbf{x}^i, \mathbf{z}^j).
\end{equation*}
In particular, when $q=1$ that is the case of single reference, we have
\begin{equation*}
    \mathcal{E}^2 \leq \frac{4}{m}\sum_{i=1}^m W_2^2(\mathbf{x}^i, \mathbf{z}), 
\end{equation*}
where the right term equals to the objective of the Wasserstein barycenter problem, which justifies the choice of $\mathbf{z}$ when learning without supervision.

\section{Additional Experiments and Setup Details}
\label{section:add_experiments}
This section contains additional results for the experiments of the main section; details on our setup, in particular hyper-parameter tuning for our methods and the baselines.

\subsection{Protein fold recognition}\label{subsection:app_scop}

\paragraph{Dataset description.}
Our protein fold recognition experiments consider the Structural Classification Of Proteins (SCOP) version 1.75 and 2.06. We follow the data preprocessing protocols in~\citet{Hou2017}, which yields a training and validation set composed of 14699 and 2013 sequences from SCOP 1.75, and a test set of 2533 sequences from SCOP 2.06. The resulting protein sequences belong to 1195 different folds, thus the problem is formulated as a multi-classification task. The input sequence is represented as a 45-dimensional vector at each amino acid. The vector consists of a 20-dimensional one-hot encoding of the sequence, a 20-dimensional position-specific scoring matrix (PSSM) representing the profile of amino acids, a 3-class secondary structure represented by a one-hot vector and a 2-class solvent accessibility. The lengths of the sequences are varying from tens to thousands.

\paragraph{Models setting and hyperparameters.}
We consider here the one-layer models followed by a global mean pooling for the baseline methods CKN~\citep{Dexiong2019a} and RKN~\citep{Dexiong2019b}. We build our model on top of the one-layer CKN model, where $\kappa$ can be seen as a Gaussian kernel on the k-mers in sequences.
The only difference between our model and CKN is thus the pooling operation, which is given by our embedding introduced in Section~\ref{section:embedding}. The bandwidth parameter of the Gaussian kernel $\kappa$ on k-mers is fixed to 0.6 for unsupervised models and 0.5 for supervised models, the same as used in CKN which were selected by the accuracy on the validation set. The filter size $k$ is fixed to 10 and different numbers of anchor points in Nystr\"om for $\kappa$ are considered in the experiments. The other hyperparameters for our embedding are the entropic regularization parameter $\varepsilon$, the number of supports in a reference $p$, the number of references $q$, the number of iterations for Sinkhorn's algorithm and the regularization parameter $\lambda$ in the linear classifier. The search grid for $\varepsilon$ and $\lambda$ is shown in Table~\ref{tab:hyper_scop} and they are selected by the accuracy on validation set. $\varepsilon$  plays an important role in the performance and is observed to be stable for the same dataset. For this dataset, it is selected to be 0.5 for all the unsupervised and supervised models. The effect of other hyperparameters will be discussed below.

For the baseline methods, the accuracies of PSI-BLAST and DeepSF are taken from~\citet{Hou2017}. The hyperparameters for CKN and RKN can be found in~\citet{Dexiong2019b}. For Rep the Set~\citep{Skianis2020} and Set Transformer~\citep{Lee2019}, we use the public implementations by the authors. These two models are used on the top of a convolutional layer of the same filter size as CKN to extract $k$-mer features. As the exact version of Rep the Set does not provide any implementation for back-propagation to a bottom layer of it, we consider the approximate version of Rep the Set only, which also scales better to our dataset. The default architecture of Set Transformer did not perform well due to overfitting. We therefore used a shallower architecture with one ISAB, one PMA and one linear layer, similar to the one-layer architectures of CKN and our model. We tuned their model hyperparameters, weight decay and learning rate. The search grids for these hyperparameters are shown in Table~\ref{tab:hyper_scop_baselines}.

\begin{table}
\small
    \centering
    \caption{Hyperparameter search grid for SCOP 1.75}
    \label{tab:hyper_scop}
    \begin{tabular}{lc}\toprule
         Hyperparameter & Search range  \\ \midrule
         $\varepsilon$ for Sinkhorn & $[1.0; 0.5; 0.1; 0.05; 0.01]$ \\
         $\lambda$ for classifier (unsupervised setting) & $1/2^{\text{range}(5,20)}$ \\
         $\lambda$ for classifier (supervised setting) & [1e-6;1e-5;1e-4;1e-3] \\ \bottomrule
    \end{tabular}
\end{table}

\begin{table}
    \small
    \centering
    \caption{Hyperparameter search grid for SCOP 1.75 baselines.}
    \label{tab:hyper_scop_baselines}
    \begin{tabular}{lc}\toprule
         Model and Hyperparameter & Search range \\
         \midrule
         {ApproxRepSet: Hidden Sets $\times$ Cardinality} & $[20;30;50;100] \times [10;20;50]$\\
         {ApproxRepSet: Learning Rate} & $[0.0001;0.0005;0.001]$\\
         {ApproxRepSet: Weight Decay} & [1e-5;1e-4;1e-3;1e-2] \\
         {Set Transformer: Heads $\times$ Dim Hidden} & $[1;4;8] \times [64;128;256]$ \\
         {Set Transformer: Learning Rate} & $[0.0001;0.0005;0.001]$ \\
         {Set Transformer: Weight Decay} & [1e-5;1e-4;1e-3;1e-2] \\
         \bottomrule
    \end{tabular}
\end{table}

\paragraph{Unsupervised embedding.}
The kernel embedding $\varphi$, which is infinite dimensional for the Gaussian kernel, is approximated with the Nystr\"om method using K-means on 300000 k-mers extracted from the same training set as in~\cite{Dexiong2019b}. The reference measures are learned by using either K-means or Wasserstein to update centroids in 2-Wasserstein K-means on 3000 subsampled sequences for RAM-saving reason. We evaluate our model on top of features extracted from CKNs of different dimensions, representing the number of anchor points used to approximate $\kappa$. The number of iterations for Sinkhorn is fixed to 100 to ensure the convergence. The results for different combinations of $q$ and $p$ are provided in Table~\ref{tab:app_scop_unsup}. Increasing the number of supports $p$ can improve the performance and then saturate it when $p$ is too large. On the other hand, increasing the number of references while keeping the embedding dimension (\textit{i.e.} $p \times q$) constant is not significantly helpful in this unsupervised setting. We also notice that Wasserstein Barycenter for learning the references does not outperform K-means, while the latter is faster in terms of computation.

\begin{table}
\small
    \centering
    \caption{Classification accuracy (top 1/5/10) results of our unsupervised embedding for SCOP 1.75. We show the results for different combinations of (number of references $q$ $\times$ number of supports $p$). The reference measures $\mathbf{z}$ are learned with either K-means or Wasserstein barycenter for updating centroids.}
    \label{tab:app_scop_unsup}
    \scalebox{0.94}{
    \begin{tabular}{llccccc}\toprule
         \multirow{2}{*}{Nb. filters} & \multirow{2}{*}{Method} & \multirow{2}{*}{$q$} & \multicolumn{4}{c}{Embedding size ($q\times p$)}  \\ \cmidrule{4-7}
         & & & 10 & 50 & 100 & 200 \\ \midrule
         \multirow{6}{*}{128} & \multirow{3}{*}{K-means} & 1 & 76.5/91.5/94.4 & 77.5/91.7/94.5 & 79.4/92.4/94.9 & 78.7/92.1/95.1 \\
         & & 5 & 72.8/89.9/93.7 & 77.8/91.7/94.6 & 78.6/91.9/94.6 & 78.1/92.1/94.7 \\ 
         & & 10 & 62.7/85.8/91.1 & 76.5/91.0/94.2 & 78.1/92.2/94.9 & 78.6/92.2/94.7 \\ \cmidrule{2-7}
         & \multirow{3}{*}{Wass. bary.} & 1 & 64.0/85.9/91.5 & 71.6/88.9/93.2 & 77.2/91.4/94.2 & 77.5/91.9/94.8 \\
         & & 5 & 70.5/89.1/93.0 & 76.6/91.3/94.4 & 78.4/91.7/94.3 & 77.1/91.9/94.7 \\ 
         & & 10 & 63.0/85.7/91.0 & 75.9/91.4/94.3 & 77.5/91.9/94.6 & 77.7/92.0/94.7 \\ \midrule
         \multirow{3}{*}{1024} & \multirow{3}{*}{K-means} & 1 & 84.4/95.0/96.6 & 84.6/95.0/97.0 & 85.7/95.3/96.7 & 85.4/95.2/96.7 \\
         & & 5 & 81.1/94.0/96.2 & 84.9/94.8/96.8 & 84.7/94.4/96.7 & 85.2/95.0/96.7 \\ 
         & & 10  & 79.8/93.5/95.9 & 83.1/94.6/96.6 & 84.4/94.7/96.7 & 84.8/94.9/96.7 \\ \bottomrule
    \end{tabular}
    }
\end{table}

\begin{table}
\small
    \centering
    \caption{Classification accuracy (top 1/5/10) results of our unsupervised embedding for SCOP 1.75. We show the results for different number of Nyström anchors. The number of references and supports are fixed to 1 and 100.}
    \label{tab:app_scop_unsup_more}
    \begin{tabular}{lc}\toprule
        {Number of anchors} & {Accuracies} \\ \midrule
          1024 & 85.8/95.3/96.8 \\
          2048 & 86.6/95.9/97.2 \\
          3072 & 87.8/96.1/97.4 \\
          \bottomrule
    \end{tabular}
\end{table}

\paragraph{Supervised embedding.}
Our supervised embedding is initialized with the unsupervised method and then trained in an alternating fashion which was also used for CKN: we use an Adam algorithm to update anchor points in Nystr\"om and reference measures $\mathbf{z}$, and the L-BFGS algorithm to optimize the classifier. The learning rate for Adam is initialized with 0.01 and halved as long as there is no decrease of the validation loss for 5 successive epochs. In practice, we notice that using a small number of Sinkhorn iterations can achieve similar performance to a large number of iteration, while being much faster to compute. We thus fix it to 10 throughout the experiments. The accuracy results are obtained by averaging on 10 runs with different seeds following the setting in~\cite{Dexiong2019b}. The results are shown in Table~\ref{tab:app_scop_sup} with error bars. The effect of the number of supports $q$ is similar to the unsupervised case, while increasing the number of references can indeed improve performance.

\begin{table}
    \small
    \centering
    \caption{Classification accuracy (top 1/5/10) of supervised models for SCOP 1.75. The accuracies obtained by averaging 10 different runs. We show the results of using either one reference with 50 supports or 5 references with 10 supports. Here DeepSF is a 10-layer CNN model.}\label{tab:app_scop_sup}
    \resizebox{\textwidth}{!}{
    \begin{tabular}{lccc} \toprule
         Method & Runtime & \multicolumn{2}{c}{Top 1/5/10 accuracy on SCOP 2.06}  \\ 
         \midrule
         PSI-BLAST~\citep{Hou2017} & - & \multicolumn{2}{c}{84.53/86.48/87.34} \\
         DeepSF~\citep{Hou2017} & - &  \multicolumn{2}{c}{73.00/90.25/94.51} \\
         Set Transformer~\citep{Lee2019} & 3.3h & \multicolumn{2}{c}{79.15$\pm$4.61/91.54$\pm$1.40/94.33$\pm$0.63} \\
         ApproxRepSet~\citep{Skianis2020} & 2h & \multicolumn{2}{c}{84.51$\pm$0.58/94.03$\pm$0.44/95.73$\pm$0.37} \\
         \midrule
         Number of filters & & 128 & 512 \\ \midrule
         CKN~\citep{Dexiong2019a} & 1.5h & 76.30$\pm$0.70/92.17$\pm$0.16/95.27$\pm$0.17 & 84.11$\pm$0.11/94.29$\pm$0.20/96.36$\pm$0.13 \\
         RKN~\citep{Dexiong2019b} & - & 77.82$\pm$0.35/92.89$\pm$0.19/95.51$\pm$0.20 & 85.29$\pm$0.27/94.95$\pm$0.15/96.54$\pm$0.12 \\ \midrule
         Ours \\
         $\Phi_{\mathbf{z}}$ (1 $\times$ 50) & 3.5h & 82.83$\pm$0.41/93.89$\pm$0.33/96.23$\pm$0.12 & 88.40$\pm$0.22/95.76$\pm$0.13/97.10$\pm$0.15 \\
         $\Phi_{\mathbf{z}}$ (5 $\times$ 10) & 4h & \textbf{84.68$\pm$0.50/94.68$\pm$0.29/96.49$\pm$0.18} & \textbf{88.66$\pm$0.25/95.90$\pm$0.15/97.33$\pm$0.14} \\ \bottomrule
    \end{tabular}
    }
\end{table}

\begin{table}
\small
    \centering
    \caption{Classification accuracy (top 1/5/10) results of our supervised embedding for SCOP 1.75. We show the results for different combinations of (number of references $q$ $\times$ number of supports $p$). The reference measures $\mathbf{z}$ are learned with K-means.}
    \label{tab:app_scop_sup_more}
    \scalebox{0.94}{
    \begin{tabular}{lcccc}\toprule
        {Embedding size ($q \times p$)} & 10 & 50 & 100 & 200 \\ \midrule
          $q=1$ & 88.3/95.5/97.0 & 88.4/95.8/97.2 & 87.1/94.9/96.7 & 87.7/94.9/96.3 \\
          $q=2$ & 87.8/95.8/97.0 & 89.6/96.2/97.5 & 86.5/94.9/96.6 & 87.6/94.9/96.3 \\
          $q=5$ & 87.0/95.1/96.7 & 88.8/96.0/97.2 & 87.4/95.4/97.0 & 87.4/94.7/96.2 \\
          $q=10$ & 84.5/93.6/95.6 & 89.8/96.0/97.2 & 88.0/95.7/97.0 & 85.6/94.4/96.1 \\
          \bottomrule
    \end{tabular}
    }
\end{table}

\subsection{Detection of chromatin profiles}\label{subsection:app_deepsea}
\paragraph{Dataset description.}
Predicting the functional effects of non-coding variants from only genomic sequences is a central task in human genetics. A fundamental step for this task is to simultaneously predict large-scale chromatin features from DNA sequences~\citep{zhou2015predicting}. We consider here the DeepSEA dataset, which consists in simultaneously predicting 919 chromatin profiles including 690 transcription factor (TF) binding profiles for 160 different TFs, 125 DNase I sensitivity profiles and 104 histone-mark profiles. In total, there are 4.4 million, 8000 and 455024 samples for training, validation and test. Each sample consists of a 1000-bp DNA sequence from the human GRCh37 reference. Each sequence is represented as a $1000\times 4$ binary matrix using one-hot encoding on DNA characters. The dataset is available at \url{http://deepsea.princeton.edu/media/code/deepsea_train_bundle.v0.9.tar.gz}. Note that the labels for each profile are very imbalanced in this task with few positive samples. For this reason, learning unsupervised models could be intractable as they may require an extremely large number of parameters if junk or redundant sequences cannot be filtered out.

\paragraph{Model architecture and hyperparameters.}
For the above reason and fair comparison, we use here our supervised embedding as a module in Deep NNs. The architecture of our model is shown in Table~\ref{tab:app_deepsea_arch}. We use an Adam optimizer with initial learning rate equal to 0.01 and halved at epoch 1, 4, 8 for 15 epochs in total. The number of iterations for Sinkhorn is fixed to 30. The whole training process takes about 30 hours on a single GTX2080TI GPU. The dropout rate is selected to be 0.4 from the grid $[0.1;0.2;0.3;0.4;0.5]$ and the weight decay is 1e-06, the same as~\cite{zhou2015predicting}. The $\sigma_{\text{pos}}$ for position encoding is selected to be 0.1, by the validation accuracy on the grid $[0.05;0.1;0.2;0.3;0.4;0.5]$. The checkpoint with the best validation accuracy is used to evaluate on the test set. Area under ROC (auROC) and area under precision curve (auPRC), averaged over 919 chromatin profiles, are used to measure the performance. The hidden size $d$ is chosen to be either 1024 or 1536.
\begin{table}
\small
    \centering
    \caption{Model architecture for DeepSEA dataset.}
    \label{tab:app_deepsea_arch}
    \begin{tabular}{l}\toprule
         Model architecture  \\ \midrule
         Conv1d(in channels=4, out channels=$d$, kernel size=16) + ReLU \\
         (Ours) EmbeddingLayer(in channels=$d$, supports=64, references=1, $\varepsilon=1.0$, PE=True, $\sigma_{\text{pos}}=0.1$) \\
         Linear(in channels=$d$, out channels=$d$) + ReLU \\
         Dropout(0.4) \\
         Linear(in channels=$d\times 64$, out channels=919) + ReLU \\
         Linear(in channels=919, out channels=919) \\
         \bottomrule
    \end{tabular}
\end{table}

\paragraph{Results and importance of position encoding.}
We compare our model to the state-of-the-art CNN model DeepSEA~\citep{zhou2015predicting} with 3 convolutional layers, whose best hyper-parameters can be found in the corresponding paper. Our model outperforms DeepSEA, while requiring fewer layers.
The positional information is known to be important in this task. To show the efficacy of our position encoding, we compare it to the sinusoidal encoding used in the original transformer~\citep{vaswani2017}. We observe that our encoding with properly tuned $\sigma_{\text{pos}}$ requires fewer layers, while being interpretable from a kernel point of view. We also find that larger hidden size $d$ performs better, as shown in Table~\ref{tab:app_deepsea}. ROC and PR curves for all the chromatin profiles and stratified by transcription factors, DNase I-hypersensitive sites and histone-marks can also be found in Figure~\ref{fig:app_curves_deepsea}.
\begin{table}
\small
    \centering
    \caption{Results for prediction of chromatin profiles on the DeepSEA dataset. The metrics are area under ROC (auROC) and area under PR curve (auPRC), averaged over 919 chromatin profiles.  The accuracies are averaged from 10 different runs. Armed with the positional encoding (PE) described in Section~\ref{section:embedding}, our embedding outperforms the state-of-the-art model and another model of our embedding with the PE proposed in~\citet{vaswani2017}.}
    \label{tab:app_deepsea}
    \resizebox{\textwidth}{!}{
    \begin{tabular}{lcccc} \toprule
         Method & DeepSEA & Ours & Ours ($d=1024$) & Ours ($d=1536$) \\
         Position encoding & - & Sinusoidal \citep{vaswani2017} & Ours & Ours \\ \midrule
         auROC & 0.933 & 0.917 & 0.935 & \textbf{0.936} \\
         auPRC & 0.342 & 0.311 & 0.354 & \textbf{0.360} \\ \bottomrule
    \end{tabular}
    }
\end{table}
\begin{figure}
    \centering
    \includegraphics[width=0.3\textwidth]{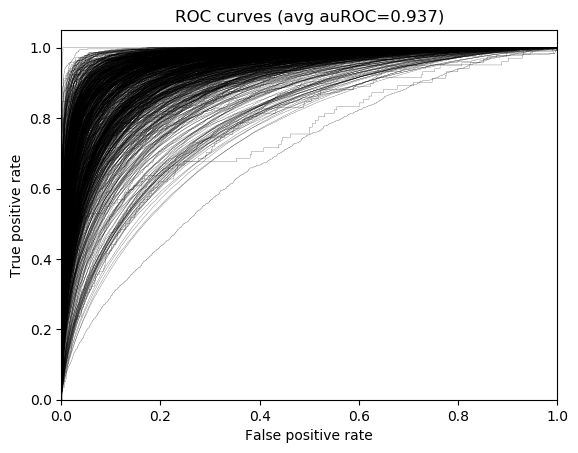}
    \includegraphics[width=0.3\textwidth]{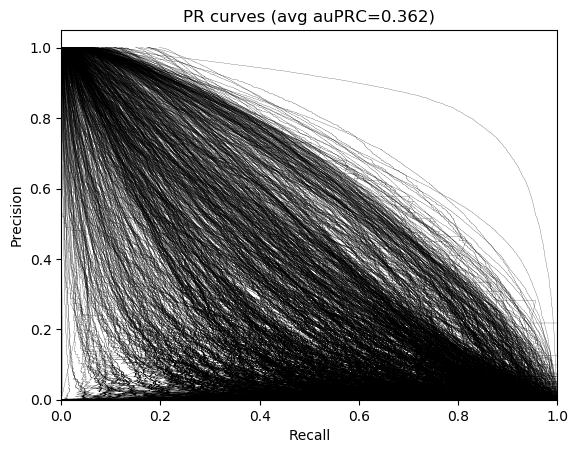} \\
    \includegraphics[width=0.3\textwidth]{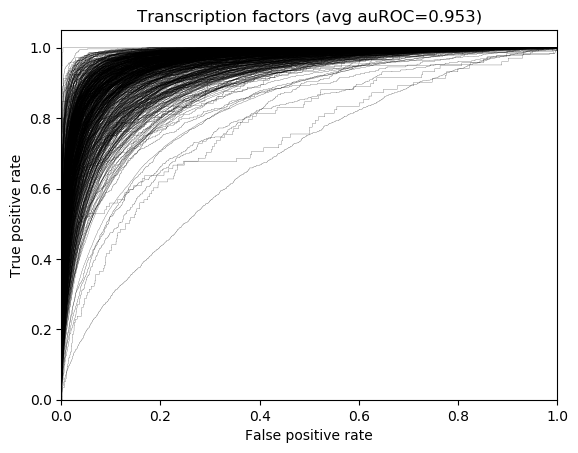}
    \includegraphics[width=0.3\textwidth]{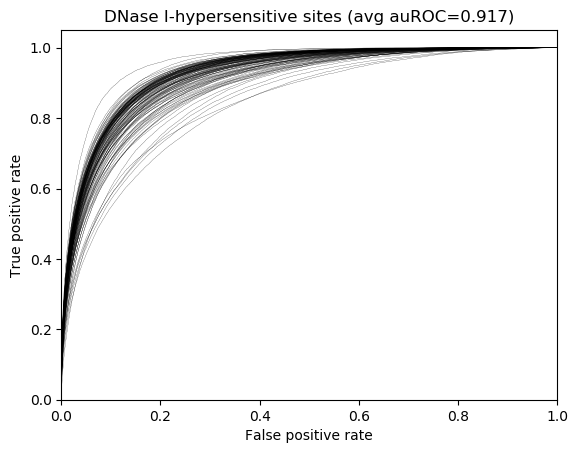}
    \includegraphics[width=0.3\textwidth]{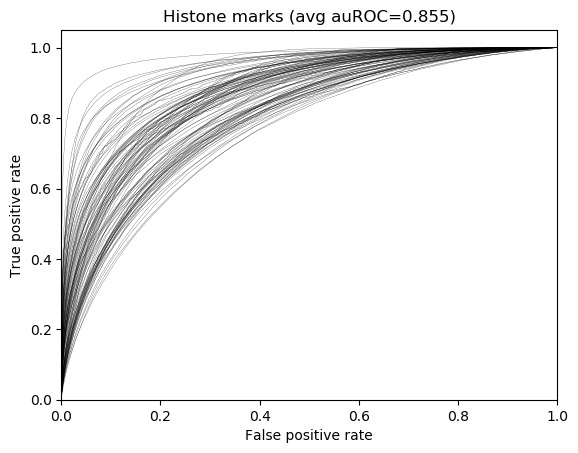}
    \includegraphics[width=0.3\textwidth]{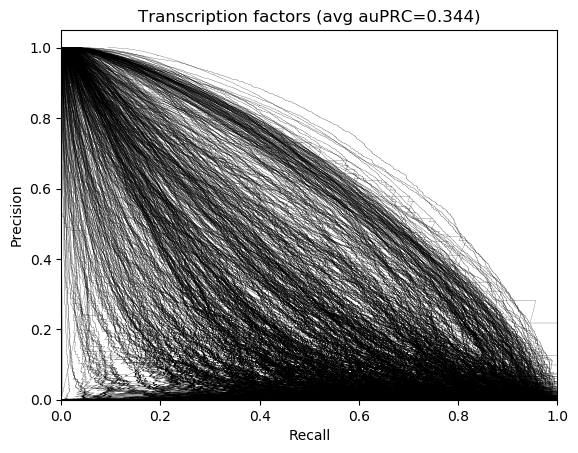}
    \includegraphics[width=0.3\textwidth]{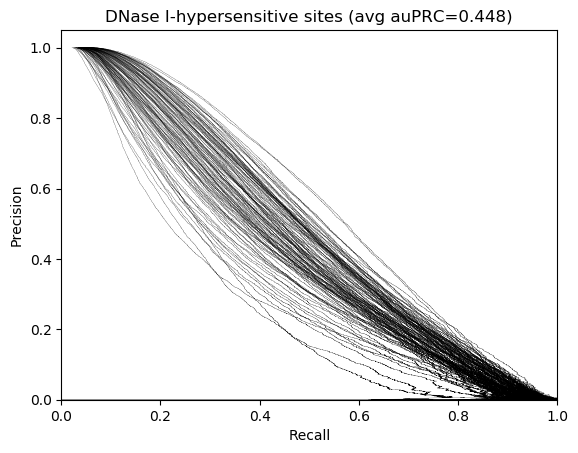}
    \includegraphics[width=0.3\textwidth]{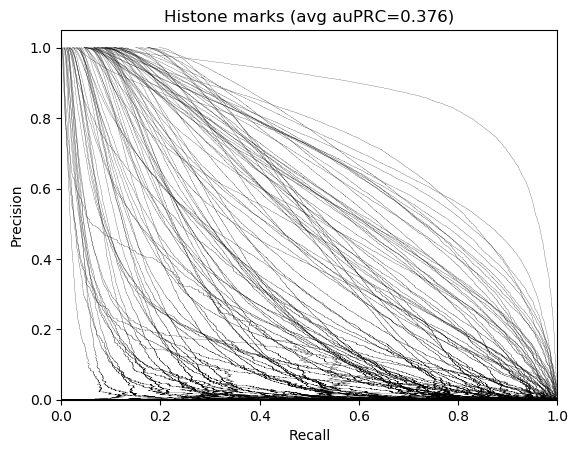}
    \caption{ROC and PR curves for all the chromatin profiles (first row) and stratified by transcription factors (left column), DNase I-hypersensitive sites (middle column) and histone-marks (right column). The profiles with positive samples fewer than 50 on the test set are not taken into account.}
    \label{fig:app_curves_deepsea}
\end{figure}

\subsection{SST-2}

\paragraph{Dataset description.} The data set contains 67,349 training samples and 872 validation samples and can be found at \url{https://gluebenchmark.com/tasks}. The test set contains 1,821 samples for which the predictions need to be submitted on the GLUE leaderboard, with limited number of submissions. As a consequence, our training and validation set are extracted from the original training set ($80\%$ of the original training set is used for our training set and the remaining 20\% is used for our validation set), and we report accuracies on the standard validation set, used as a test set. The reviews are padded with zeros when their length is shorter than the chosen sequence length (we choose $30$ and $66$, the latter being the maximum review length in the data set) and the BERT implementation requires to add special tokens [CLS] and [SEP] at the beginning and the end of each review. 

\paragraph{Model architecture and hyperparameters.} In most transformers such as BERT, the embedding associated to the token [CLS] is used for classification and can be seen in some sense as an embedding of the review adapted to the task. The features we used are the word features provided by the BERT base-uncased version, available at~\url{https://huggingface.co/transformers/pretrained_models.html}. For this version, the dimension of the word features is $768$. 
Our model is one layer of our embedding, with $\varphi$ the Gaussian kernel mapping with varying number of Nyström filters in the supervised setting, and the Linear kernel in the unsupervised setting. We do not add positional encoding as it is already integrated in BERT features. In the unsupervised setting, the output features are used to train a large-scale linear classifier, a Pytorch linear layer. We choose the best hyper-parameters based on the accuracy of a validation set. In the supervised case, the parameters of the previous model, $\mathbf{w}$ and $\mathbf{z}$, are optimized end-to-end. In this case, we tune the learning rate. In both case, we tune the entropic regularization parameter of optimal transport and the bandwidth of the Gaussian kernel. The parameters in the search grid are summed up in Table~\ref{tab:hyper_sst-2}. The best entropic regularization and Gaussian kernel bandwidth are typically and respectively $3.0$ and $0.5$ for this data set. The supervised training process takes between half an hour for smaller models (typically $128$ filters in $\mathbf{w}$ and $3$ supports in $\mathbf{z}$) and a few hours for larger models ($256$ filters and $100$ supports) on a single GTX2080TI GPU. The hyper-parameters of the baselines were similarly tuned, see~\ref{tab:hyper_sst-2_baselines}. Mean Pooling and [CLS] embedding did not require any tuning except for the regularization $\lambda$ of the classifier, which followed the same grid as in Table~\ref{tab:hyper_sst-2}.

\paragraph{Results and discussion.} As explained in Section~\ref{section:experiments}, our unsupervised embedding improves the BERT pre-trained features while still using a simple linear model as shown in Table~\ref{tab:sst-2_refs_supp}, and its supervised counterpart enables to get even closer to the state-of-the art (for the BERT base-uncased model) accuracy, which is usually obtained after fine-tuning of the BERT model on the whole data set. This can be seen in Tables~\ref{tab:sst-2_sup};~\ref{tab:sst-2_baselines}. We also add a baseline consisting of one layer of classical self-attention, which did not do well hence was not reported in the main text.

\begin{table}
\small
    \centering
    \caption{Accuracies on standard validation set for SST-2 with our unsupervised features depending on the number of references and supports. The references were computed using K-means on samples for multiple references and K-means on patches for multiple supports. The size of the input BERT features is ($\text{length} \times \text{dimension}$). The accuracies are averaged from $10$ different runs. ($q$ references $\times$ $p$ supports)}
    \begin{tabular}{lcccc} \toprule
    {BERT Input Feature Size} & \multicolumn{2}{c}{(30 $\times$ 768)} & \multicolumn{2}{c}{(66 $\times$ 768)} \\ \midrule
    {Features} & {Pre-trained} & {Fine-tuned} & {Pre-trained} & {Fine-tuned} \\ \midrule
    \text{[CLS]} & {84.6$\pm$0.3} & {90.3$\pm$0.1} & {86.0$\pm$0.2} & {\textbf{92.8$\pm$0.1}} \\
    \text{Flatten} & {84.9$\pm$0.4} & {\textbf{91.0$\pm$0.1}} & {85.2$\pm$0.3} & {92.5$\pm$0.1} \\
    \text{Mean pooling} & {85.3$\pm$0.3} & {90.8$\pm$0.1} & {85.4$\pm$0.3} & {92.6$\pm$0.2} \\
    \midrule
    \text{$\Phi_{\mathbf{z}}$  (1 $\times$ 3)} & {85.5$\pm$0.1} & {90.9$\pm$0.1} & {86.5$\pm$0.1} & {92.6$\pm$0.1} \\
    \text{$\Phi_{\mathbf{z}}$  (1 $\times$ 10)} & {85.1$\pm$0.4} & {90.9$\pm$0.1} & {85.9$\pm$0.3} & {92.6$\pm$0.1} \\
    \text{$\Phi_{\mathbf{z}}$  (1 $\times$ 30)} & {86.3$\pm$0.3} & {90.8$\pm$0.1} & {86.6$\pm$0.5} & {92.6$\pm$0.1} \\
    \text{$\Phi_{\mathbf{z}}$  (1 $\times$ 100)} & {85.7$\pm$0.7} & {90.9$\pm$0.1} & {86.6$\pm$0.1} & {92.7$\pm$0.1} \\
    \text{$\Phi_{\mathbf{z}}$  (1 $\times$ 300)} & {\textbf{86.8$\pm$0.3}} & {90.9$\pm$0.1} & {\textbf{87.2$\pm$0.1}} & {92.7$\pm$0.1} \\
    \bottomrule
    \end{tabular}
    \label{tab:sst-2_refs_supp}
\end{table}

\begin{table}
\small
    \centering
    \caption{Hyperparameter search grid for SST-2.}
    \label{tab:hyper_sst-2}
    \begin{tabular}{lc}\toprule
         Hyperparameter & Search range  \\ \midrule
         Entropic regularization $\varepsilon$ & $[0.5; 1.0; 3.0; 10.0]$ \\
         $\lambda$ for classifier (unsupervised setting) & $10^{\text{range}(-10,1)}$ \\
         Gaussian kernel bandwidth & $[0.4; 0.5; 0.6; 0.7; 0.8]$ \\
         Learning rate (supervised setting) & $[0.1;0.01;0.001]$ \\
         \bottomrule
    \end{tabular}
\end{table}

\begin{table}
\small
    \centering
    \caption{Hyperparameter search grid for SST-2 baselines.}
    \label{tab:hyper_sst-2_baselines}
    \begin{tabular}{lc}\toprule
         Model and Hyperparameter & Search range \\
         \midrule
         {RepSet and ApproxRepSet: Hidden Sets $\times$ Cardinality} & $[4; 20; 30; 50; 100] \times [3; 10; 20; 30; 50]$\\
         {ApproxRepSet: Learning Rate} & $[0.0001; 0.001; 0.01]$\\
         {Set Transformer: Heads $\times$ Dim Hidden} & $[1; 4] \times [8; 16; 64; 128]$ \\
         {Set Transformer: Learning Rate} & $[0.001; 0.01]$ \\
         \bottomrule
    \end{tabular}
\end{table}

\begin{table}
    \small
    \centering
    \caption{Classification accuracy on standard validation set of supervised models for SST-2, with pre-trained BERT ($30 \times 768$) features. The accuracies of our embedding were averaged from 3 different runs before being run 10 times for the best results for comparison with baselines, cf. Section~\ref{section:experiments}. 10 Sinkhorn iterations were used. ($q$ references $\times$ $p$ supports).}
    \label{tab:sst-2_sup}
    \begin{tabular}{lccc} \toprule
         Method & \multicolumn{3}{c}{Accuracy on SST-2}  \\
         \midrule
         Number of Nyström filters & 32 & 64 & 128 \\ \midrule
         $\Phi_{\mathbf{z}}$ (1 $\times$ 3) & {88.38} & {88.38} & {88.18}  \\
         $\Phi_{\mathbf{z}}$ (1 $\times$ 10) & {88.11} & {88.15} & {87.61} \\
         $\Phi_{\mathbf{z}}$ (1 $\times$ 30) & {88.30} & {88.30} & {88.26} \\
         $\Phi_{\mathbf{z}}$ (4 $\times$ 3) & {88.07} & {88.26} & {88.30} \\
         $\Phi_{\mathbf{z}}$ (4 $\times$ 10) & {87.6} & {87.84} & {88.11} \\
         $\Phi_{\mathbf{z}}$ (4 $\times$ 30) & {88.18} & {$\mathbf{88.68}$} & {88.07} \\ \bottomrule 
    \end{tabular}
\end{table}

\begin{table}
    \small
    \centering
    \caption{Classification accuracy on standard validation set of all baselines for SST-2, with pre-trained BERT ($30 \times 768$) features, averaged from 10 different runs.}
    \label{tab:sst-2_baselines}
    \begin{tabular}{lccc} \toprule
         Method & \multicolumn{3}{c}{Accuracy on SST-2}  \\ 
         \midrule
         {[CLS] embedding~\citep{devlin2018}} & \multicolumn{3}{c}{90.3$\pm$0.1} \\
        {Mean Pooling of BERT features~\citep{devlin2018}} & \multicolumn{3}{c}{$\mathbf{90.8\pm0.1}$} \\
        {One Self-Attention Layer~\citep{vaswani2017}} & \multicolumn{3}{c}{83.7$\pm$0.1} \\
        {Approximate Rep the Set~\citep{Skianis2020}} & \multicolumn{3}{c}{86.8$\pm$0.9} \\
        {Rep the Set~\citep{Skianis2020}} & \multicolumn{3}{c}{87.1$\pm$0.5} \\
        {Set Transformer~\citep{Lee2019}} & \multicolumn{3}{c}{87.9$\pm$0.8} \\
        {$\Phi_{\mathbf{z}}$ (1 $\times$ 30) (dot-product instead of OT)} & \multicolumn{3}{c}{86.9$\pm$1.1} \\
        \bottomrule 
    \end{tabular}
\end{table}

\cleardoublepage

\chapter{Encoding Graph Structure in Transformers with Kernels on Graphs}
\label{chapt:3_graphit}
We show that viewing graphs as sets of node features and incorporating structural and positional information into a transformer architecture is able to outperform representations learned with classical graph neural networks (GNNs). Our model, GraphiT, encodes such information by (i) leveraging relative positional encoding strategies in self-attention scores based on positive definite kernels on graphs, and (ii) enumerating and encoding local sub-structures such as paths of short length. We thoroughly evaluate these two ideas on many classification and regression tasks, demonstrating the effectiveness of each of them independently, as well as their combination. In addition to performing well on standard benchmarks, our model also admits natural visualization mechanisms for interpreting graph motifs explaining the predictions, making it a potentially strong candidate for scientific applications where interpretation is important.\footnote{Code available at \url{https://github.com/inria-thoth/GraphiT}.}

\paragraph{This chapter is based on the following material:} 

\begin{center}
\begin{tcolorbox}[width=\linewidth, sharp corners=all, colback=white!95!black]
    \begin{itemize}
        \item[-] G. Mialon*, D. Chen*, M. Selosse*, J. Mairal. "GraphiT: Encoding Graph Structure in Transformers" (preprint arXiv:2106.05667).
    \end{itemize}
\end{tcolorbox}
\end{center}

\section{Introduction} 

Graph-structured data are present in numerous scientific applications and are the subject of growing interest. Examples of such data are as varied as proteins in computational biology~\citep{senior2020improved}, which may be seen as a sequence of amino acids, but also as a graph representing their tertiary structure, molecules in chemo-informatics~\citep{duvenaud2015convolutional}, shapes in computer vision and computer graphics~\citep{verma2018feastnet}, electronic health records, or communities in social networks. In the case of molecules for examples, the atoms can be seens as nodes and chemical bonds as edges as in Figure~\ref{fig:theobromin}. Designing graph representations for machine learning is a particularly active area of research, even though not new~\citep{borgwardt2020graph}, with a strong effort currently focused on graph neural networks~\citep{bronstein17,Chen2020,kipf2017semisupervised,scarselli2008graph,Velickovic2018,xu2019powerful}. A major difficulty is to find graph representations that are computationally tractable, adaptable to a given task, and capable of distinguishing graphs with different topological structures and local characteristics.

\begin{figure}
    \centering
    \includegraphics[scale=.25]{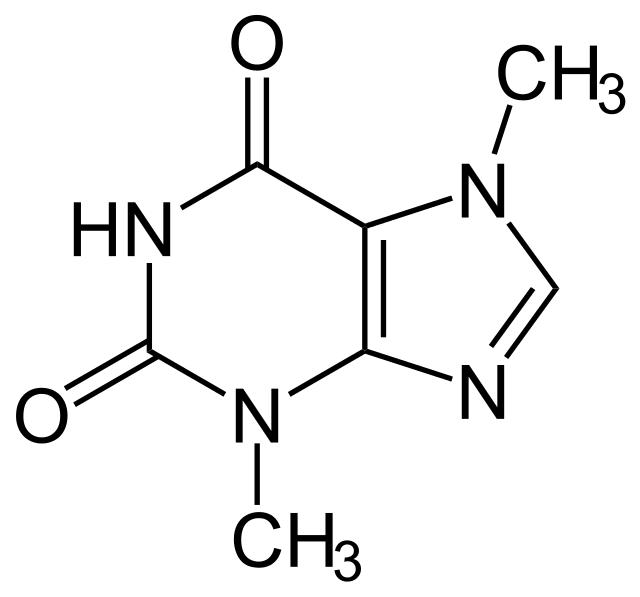}
    \caption{A molecule of Theobromin, hypothetically responsible for the positive effect of chocolate on the mood (from Wikipedia).}
    \label{fig:theobromin}
\end{figure}

In this work, we are interested in the transformer, which has become the standard architecture for natural language processing~\citep{vaswani2017}, and is gaining ground in computer vision~\citep{dosovitskiy2021an} and computational biology~\citep{rives2019biological}. The ability of a transformer to aggregate information across long contexts for sequence data makes it an interesting challenger for GNNs that successively aggregate local information from neighbors in a multilayer fashion. In contrast, a single self-attention layer of the transformer can potentially allow all the nodes of an input graph to communicate. 
The price to pay is that this core component is invariant to permutations of the input nodes, hence does not take the topological structure of the graph into account and looses crucial information. 

For sequence data, this issue is addressed by encoding positional information of each token and giving it to the transformer architecture. For graphs, the problem is more challenging as there is no single way to encode node positions. For this reason, there have been few attempts to use vanilla transformers for graph representation. To the best of our knowledge, the closest work to ours seems to be the graph transformer architecture of~\cite{dwivedi2021generalization}, who 
propose an elegant positional encoding strategy based on the eigenvectors of the graph Laplacian~\citep{belkin2003}. However, they focus on applying attention to neighboring nodes only, as in GNNs, and their results suggest that letting all nodes communicate is not a competitive approach.


Our work provides another perspective and reaches a slightly different conclusion; we show that even though local communication is indeed often more effective, the transformer with global communication can also achieve good results. For that, we introduce a set of techniques to encode the local graph structure within our model, GraphiT (encoding \textbf{graph} structure \textbf{i}n \textbf{t}ransformers). More precisely, GraphiT relies on two ingredients that may be combined, or used independently.
First, we propose relative positional encoding strategies for weighting attention scores by using positive definite kernels, a viewpoint introduced for sequences in~\cite{tsai2019transformer}. This concept is particularly appealing for graphs since it allows to leverage the rich literature on kernels on graphs~\cite{kondor04,smola2003graphkernel}, which are powerful tools for encoding the similarity between nodes. The second idea consists in computing features encoding the local structure in the graph. To achieve this, we leverage the principle of graph convolutional kernel networks (GCKN) of~\cite{Chen2020}, which consists in enumerating and encoding small sub-structure (for instance, paths or subtree patterns), which may then be used as an input to the transformer model.

We demonstrate the effectiveness of our approach on several classification and regression benchmarks, showing that GraphiT with global or local attention layer can outperform GNNs in various tasks, and also show that basic visualization mechanisms allow us to automatically discover discriminative graph motifs, which is potentially useful for scientific applications where interpretation is important.

\paragraph{Contributions.} In summary, our contribution is three-folds:
\begin{itemize}
    \item[-] We introduce two mechanism within the transformer architecture to integrate structural information from graphs.
    \item[-] We demonstrate the effectiveness of this approach, which is able to outperform GNNs in various tasks.
    \item[-] We show that basic visualization mechanisms allows to recover meaningful graph motifs.
\end{itemize}

\section{Related work}\label{sec:related}

\paragraph{Graph kernels.} 
A classical way to represent graphs for machine learning tasks consists in defining a high-dimensional embedding for graphs, which may then be used to perform prediction with a linear models (\emph{e.g.}, support vector machines). Graph kernels typically provide such embeddings by counting the number of occurrences of local substructures that are present within a graph~\citep{borgwardt2020graph}. The goal is to choose substructures leading to expressive representations sufficiently discriminative, while allowing fast algorithms to compute the inner-products between these embeddings. For instance, walks have been used for such a purpose~\citep{gartner2003graph}, as well as shortest paths~\citep{borgwardt2005shortest}, subtrees~\citep{harchaoui2007image,mahe2009graph,shervashidze2011weisfeiler}, or graphlets~\citep{shervashidze2009efficient}. Our work uses short paths, but other substructures could be used in principle. Note that graph kernels used in the context of comparing graphs, is a line of work different from the kernels on graphs that we will introduce in Section~\ref{sec:preliminaries} for computing embeddings of nodes.

\vs
\paragraph{Graph neural networks.}
Originally introduced in~\cite{scarselli2008graph}, GNNs have been derived as an extension of convolutions for graph-structured data: they use a message passing paradigm in which vectors (messages) are exchanged (passed) between neighboring nodes whose representations are updated using neural networks. Many strategies have been proposed to aggregate features of neighboring nodes (see, \textit{e.g},~\cite{ bronstein17,duvenaud2015convolutional}). The graph attention network (GAT)~\citep{Velickovic2018} is the first model to use an attention mechanism for aggregating local information. Recently, hybrid approaches between graph neural networks and graph kernels were proposed in~\cite{Chen2020,du2019graph}. Diffusion processes on graphs that are related to the diffusion kernel we consider in our model were also used within GNNs in~\cite{klicpera2019diffusion}.


\vs
\paragraph{Transformers for graph-structured data.} 

Prior to~\cite{dwivedi2021generalization}, there were some attempts to use transformers in the context of graph-structured data. The authors of~\cite{li2019} propose to apply attention to all nodes, yet without position encoding. In \cite{zhang2020graphbert}, a transformer architecture called Graph-BERT is fed with sampled subgraphs of a single graph in the context of node classification and graph clustering. They also propose to encode positions by aggregating different encoding schemes. However, these encoding schemes are either impossible to use in our settings as they require having sampled subgraphs of regular structures as input, or less competitive than Laplacian eigenvectors as observed in~\cite{dwivedi2021generalization}. The transformer model introduced in \cite{yun2019} needs to first transform an {heterogeneous} graph into a new graph structure via meta-paths, which does not directly operate on node features. To the best of our knowledge, our work is the first to demonstrate that vanilla transformers with appropriate node position encoding can compete with GNNs in graph prediction tasks.

\section{Preliminaries about Kernels on Graphs}
\label{sec:preliminaries}


\paragraph{Spectral graph analysis.}
 The Laplacian of a graph with $n$ nodes is defined as $L = D - A$, where~$D$ is a $n \times n$ diagonal matrix that carries the degrees of each node on the diagonal and $A$ is the adjacency matrix. Interestingly, $L$ is a positive semi-definite matrix such that for all vector $u$ in $\Real^n$, $u^\top L u = \sum_{i\sim j} (u[i]-u[j])^2$, which can be interpreted as the amount of ``oscillation'' of the vector~$u$, hence its ``smoothness', when seen as a function on the nodes of the graph.
 
 The Laplacian is often used via its eigenvalue decomposition $L= \sum_{i} \lambda_i u_i u_i^\top$, where the eigenvalue $\lambda_i = u_i^\top L u_i$ characterizes the amount of oscillation of the corresponding eigenvector~$u_i$. For this reason, this decomposition is traditionally viewed as the discrete equivalent to the sine/cosine Fourier basis in $\mathbb{R}^n$ and associated frequencies. Note that very often the normalized Laplacian $I - D^{-\frac{1}{2}} A D^{-\frac{1}{2}}$ is used instead of $L$, which does not change the above interpretation.
 
Interestingly, it is then possible to define a whole family of positive definite kernels on the graph~\citep{smola2003graphkernel} by applying a regularization function $r$ to the spectrum of $L$. We therefore get a rich class of kernels
\begin{equation}
    K_r = \sum_{i=1}^m {r(\lambda_i)} u_i u_i^\top,
\end{equation}
associated with the following norm $\|f\|_r^2 = \sum_{i=1}^m {({f}_i^\top u_i)^2}/{r(\lambda_i)}$ from a reproducing kernel Hilbert space (RKHS),
where $r : \mathbb{R} \mapsto \mathbb{R}_*^+ $ is a non-increasing function such that smoother functions on the graph would have smaller norms in the RKHS. We now introduce two examples. 


\vs
\paragraph{Diffusion Kernel~\cite{kondor04}.} 
It corresponds to the case $r(\lambda_i) = e^{-\beta \lambda_i}$, which gives:
\begin{equation}
    K_{D} = \sum_{i=1}^m e^{- \beta \lambda_i} u_i u_i^\top = e^{- \beta L} = \lim_{p \to +\infty} \left(  I - \frac{\beta}{p}L\right)^p.\label{eq:diffusion}
\end{equation}
The diffusion kernel can be seen as a discrete equivalent of the Gaussian kernel, a solution of the heat equation in the continuous setting, hence its name. Intuitively, the diffusion kernel between two nodes can be interpreted as the quantity of some substance that would accumulate at the first node after a given amount of time (controlled by $\beta$) if we injected the substance at the other node and let it diffuse through the graph. An illustration of the diffusion kernel is presented in Figure~\ref{fig:diffusion}. It is related to the random walk kernel that will be presented below.


\begin{figure}
\centering
\includegraphics[scale=.4]{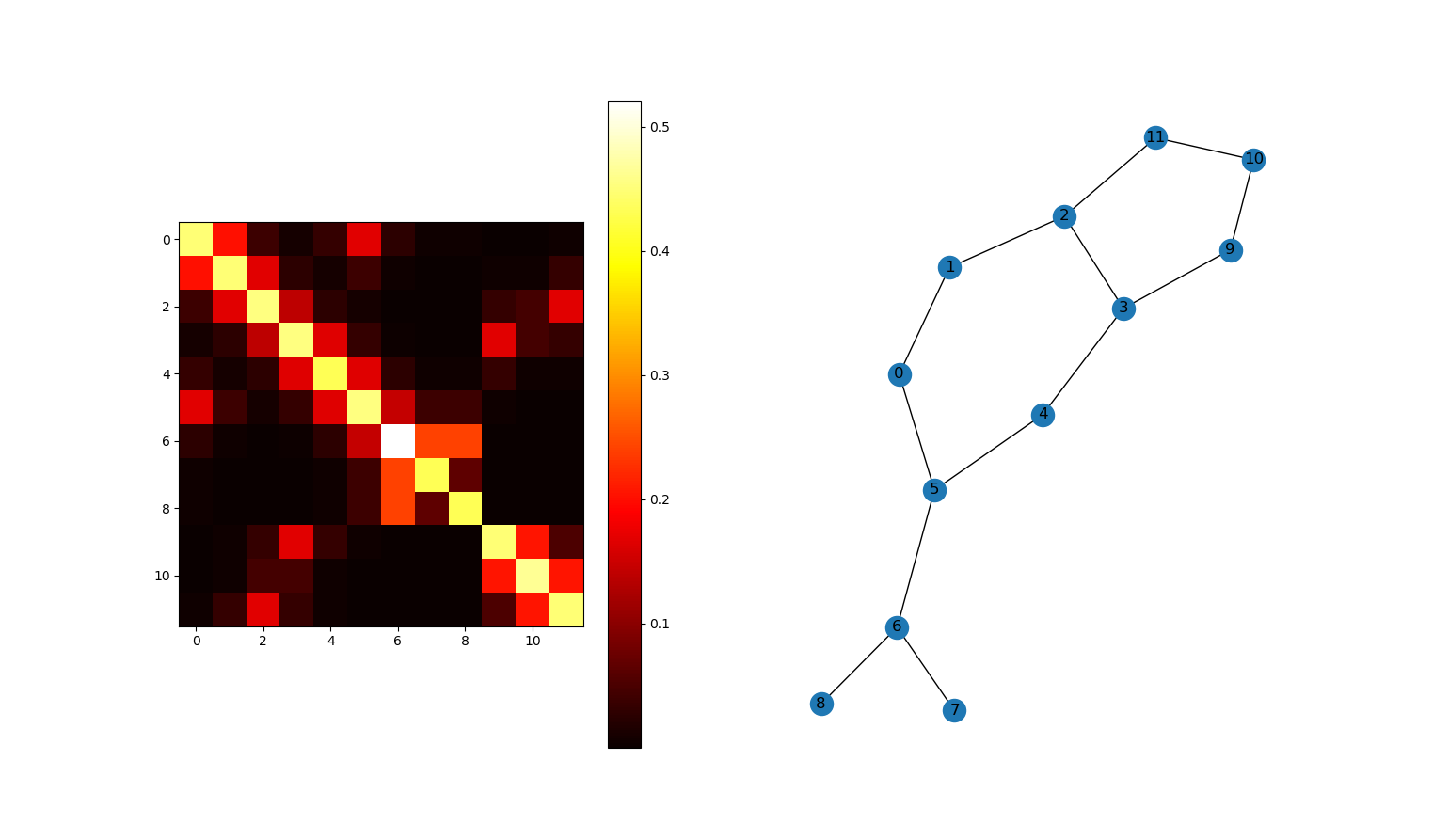}
\caption{Diffusion kernel ($\beta = 1$) for a molecular graph from the MUTAG dataset used in our experiments. Neighboring nodes have higher values. Note that nodes that are structurally similar, such as $7$ and $8$, have similar entries.}
\label{fig:diffusion}
\end{figure}

\paragraph{p-step random walk kernel.} 
By taking $r(\lambda_i) = (1 - \gamma \lambda_i)^{p}$ , we obtain a kernel that admits an interpretation in terms of p steps of a particular random walk on the graph:
\begin{equation}
    K_{pRW} = (I  - \gamma L)^{p}.
\end{equation}
It is related to the diffusion kernel by choosing $\gamma=\beta/p$ and taking the limit with $p \to +\infty$, according to~(\ref{eq:diffusion}). Unlike the diffusion kernel which yields a dense matrix, the random walk kernel is sparse and has limited connectivity, meaning that two nodes are connected in $K_{pRW}$ only if there exists a path of length~$p$ between them.
As these kernels on graphs reflect the structural similarity of the nodes independently of their features,
it is natural to use the resulting Gram matrix to encode such a structure within the transformer model, as detailed next.

\section{Encoding Graph Structure in Transformers}

In this section, we detail the architecture and structure encoding strategies behind GraphiT. In particular, we build upon the findings of~\cite{tsai2019transformer} for our architectural choices and propose new strategies for encoding structural information in the transformer architecture.

\subsection{Transformer Architectures for Graphs}

We process graphs with a vanilla transformer encoder architecture~\citep{vaswani2017} for solving classification and regression tasks, by seeing graphs as sets of node features. 
We first present the transformer without encoding the graph structure, before introducing mechanisms for encoding nodes positions in Section~\ref{subsec:position}, and then topological structures in Section~\ref{subsec:structure}.
Specifically, a transformer is composed of a sequence of layers, which process an input set of $d_{\text{in}}$ features represented by a matrix ${X}$ in $\Real^{n \times d_{\text{in}}}$, and compute another set in $\Real^{n \times d_{\text{out}}}$. A critical component is the attention mechanism:
\begin{equation}
    \text{Attention}(Q, V) = \text{softmax}\left( \frac{QQ^\top}{\sqrt{d_{out}}} \right) V \in \Real^{n \times d_{\text{out}}},
    \label{eq:attention}
\end{equation}
with $Q^\top = W_Q {X}^\top$ is called the query matrix, $V^\top = W_V {X}^\top$ the value matrix, and $W_Q, W_V$ in $\Real^{d_{\text{out}} \times d_{\text{in}}}$ are projection matrices that are learned. Note that following the recommendation of~\cite{tsai2019transformer}, we use the same matrix for keys and queries. This allows us to define a symmetric and positive definite kernel on pairs of nodes and thus to combine with other kernels on graph. This also reduces the number of parameters in our models without hurting the performance in practice. During the forward pass, the feature map $X$ is updated in a residual fashion (with either $d_{\text{in}}=d_{\text{out}}$ or with an additional projection matrix when the dimensions do not match, omitted here for simplicity) as follows:
\begin{equation*}
    {X} = {X} + \text{Attention}(Q, V).
\end{equation*}

Note that transformers without position encoding and GNNs are tightly connected: a GNN can be seen as a transformer where aggregation is performed on neighboring nodes only, and a transformer as a GNN processing a fully-connected graph. However, even in this case, differences would remain between common GNNs and Transformers, as the latter use for example LayerNorm and skip-connections. In this work, we will either adopt the local aggregation strategy, or let all nodes communicate in order to test this inductive bias in the context of graphs, where capturing long range interactions between nodes may be useful for some tasks. 

\subsection{Encoding Node Positions}\label{subsec:position}

The output of the transformer is invariant to permutations in the input data. It is therefore crucial to provide the transformer with information about the data structure. In this section, we revisit previously proposed strategies for sequences and graphs and devise new methods for positional encoding. Note that a natural baseline is simply to adopt a local aggregation strategy similar to~\cite{Velickovic2018} or~\cite{dwivedi2021generalization} for example.

\subsubsection{Existing Strategies for Sequences and Graphs}

\paragraph{Absolute and relative positional encoding in transformers for sequences.} 
In NLP, positional information was initially encoded by adding a vector based on sinusoidal functions to each of the input token features~\cite{vaswani2017}.
This approach was coined as absolute position encoding and proved to be also useful for other data modalities~\citep{dosovitskiy2021an,dwivedi2021generalization}. In relative position encoding, which was proposed later~\citep{parikh-etal-2016-decomposable, shaw2018selfattention}, positional information is added to the attention logits and in the values. This information only depends on the relative distance between the two considered elements in the input sequence. 



\paragraph{Absolute positional encoding for graphs.}

Whereas positional encodings can be hand-crafted or learned relatively easily for sequences and images, which respectively admit a chain or grid structure, this task becomes non-trivial for graphs, whose structure may vary a lot inside a data set, besides the fact that the concept of node position is ill-defined. To address these issues, an absolute position encoding scheme was recently proposed in \cite{dwivedi2021generalization}, by leveraging the graph Laplacian (LapPE). More precisely, each node of each graph is assigned a vector containing the first $k$ coordinates of the node in the eigenbasis of the graph normalized Laplacian sorted in ascending order of the eigenvalues. Since the first eigenvector associated to the eigenvalue 0 is constant, the first coordinate is omitted.

As detailed in Section~\ref{sec:preliminaries}, these eigenvectors oscillate more and more and the corresponding coordinates are often interpreted as Fourier coefficients representing frequency components in increasing order. Note that eigenvectors of the Laplacian computed on different graphs could not be compared to each other in principle, and are also only defined up to a $\pm 1$ factor.
While this raises a conceptual issue for using them in an absolute positional encoding scheme, it is shown in~\cite{dwivedi2021generalization}---and confirmed in our experiments---that the issue is mitigated by the Fourier interpretation, and that
the coordinates used in LapPE are effective in practice for discriminating between nodes in the same way as the position encoding proposed in~\cite{vaswani2017} for sequences. 
Yet, because the eigenvectors are defined up to a $\pm$1 factor, the sign of the encodings needs to be randomly flipped during the training of the network. 
In the next section, we introduce a novel strategy in the spirit of relative positional encoding that does not suffer from the previous conceptual issue, and can also be combined with LapPE if needed.




\subsubsection{Relative Position Encoding Strategies by Using Kernels on Graphs}



\paragraph{Modulating the transformer attention.} 
To avoid the issue of transferability of the absolute positional encoding between graphs, we use information on the nodes structural similarity to bias the attention scores. More precisely, and in the fashion of the last chapter, or~\citet{tsai2019transformer} for sequences, we modulate the attention kernel using the Gram matrix of some kernel on graphs described in Section~\ref{sec:preliminaries} as follows:
\begin{equation}
    \text{PosAttention}(Q, V, K_r) = \text{normalize}\left(\text{exp}\left( \frac{QQ^\top}{\sqrt{d_{\text{out}}}}\right) \odot K_r \right) V \in \Real^{n \times d_{\text{out}}},
    \label{eq:pos_attention}
\end{equation}
with the same $Q$ and $V$ matrices, and $K_r$ a kernel on the graph. ``$\text{exp}$'' denotes the elementwise exponential of the matrix entries and $\text{normalize}$ means $\ell_1$-normalization on rows such that normalization(exp(u))=softmax(u). The reason for multiplying the exponential of the attention logits before $\ell_1$-normalizing the rows is that it corresponds to a classical kernel smoothing. Indeed, if we consider the PosAttention output for a node $i$:
\begin{equation*}
    \text{PosAttention}(Q, V, K_r)_i = \sum_{j=1}^n \frac{\text{exp}\left(\nicefrac{Q_i Q_j^\top}{\sqrt{d_{out}}} \right) \times K_r(i, j)}{\sum_{j'=1}^n \text{exp}\left(\nicefrac{Q_i Q_{j'}^\top}{\sqrt{d_{out}}} \right) \times K_r(i, j')} V_j \in \Real^{d_{\text{out}}},
    \label{eq:pos_attention_i}
\end{equation*}
we obtain a typical smoothing, \textit{i.e}, a linear combination of features with weights determined by a non-negative kernel, here $k(i, j) := \text{exp} (\nicefrac{Q_i Q_j^\top}{\sqrt{d_{out}}}) \times K_r(i, j)$, and summing to 1. In fact, $k$ can be considered as a new kernel between nodes $i$ and $j$ made by multiplying a kernel based on positions ($K_r$) and a kernel based on content (via $Q$). As observed in~\cite{tsai2019transformer} for sequences, modulating the attention logits with a kernel on positions is related to relative positional encoding~\citep{shaw2018selfattention}, where we bias the attention matrix with a term depending only on the relative difference in positions between the two elements in the input set. Moreover, during the forward pass, the feature map ${X}$ is updated as follows:
\begin{equation}
    {X} = {X} + D^{-\frac{1}{2}} \text{PosAttention}(Q, V, K_r),
    \label{eq:pos_forward}
\end{equation}
where $D$ is the matrix of node degrees. We found such a normalization with $D^{-1/2}$ to be beneficial in our experiments since it reduces the overwhelming influence of highly connected graph components.
Note that, as opposed to  absolute position encoding, we do not add positional information to the values and the model does not rely on the transferability of eigenvectors between different Laplacians.

 
 \paragraph{Choice of kernels and parameters.} Interestingly, the choice of the kernel enables to encode a priori knowledge directly within the model architecture, while the parameter has an influence on the attention span. For example, in the diffusion kernel, $\beta$ can be seen as the duration of the diffusion process. The smaller it is, the more focused the attention on the close neighbors. Conversely, a large~$\beta$ corresponds to an homogeneous attention span. As another example, it is clear that the choice of $p$ in the random walk kernel corresponds to visiting at best $p$-degree neighbors. In our experiments, the best kernel may vary across datasets, suggesting that long-range global interactions are of different importance depending on the task. For example, on the dataset PROTEINS (see Section~\ref{sec:experiments}), the vanilla transformer without using any structural information performs very well.





\subsection{Encoding Topological Structures} \label{subsec:structure}

Position encoding aims at adding positional only information to the feature vector of an input node or to the attentions scores. Substructures is a different type of information, carrying local positional information and content, which has been heavily used within graph kernels, see Section~\ref{sec:related}. 
In the context of graphs neural networks, this idea was exploited in the graph convolutional kernel network model (GCKN) of~\cite{Chen2020}, which is a hybrid approach between GNNs and graph kernels based on substructure enumeration (\textit{e.g.}, paths). Among different strategies we experimented, enriching nodes features with the output of a GCKN layer turned out to be a very effective strategy.


\paragraph{Graph convolutional kernel networks (GCKN).}
 GCKNs~\cite{Chen2020} is a multi-layer model that produces a sequence of graph feature maps akin to a GNN. The main difference is that each layer enumerates local sub-structures at each node (here, paths of length~$k$), encodes them using a kernel embedding, and aggregates the resulting sub-structure representations. This results in a feature map that carries more information about the graph structure than traditional neighborhood aggregation based GNNs, which is appealing for transformers since the vanilla version is blind to the graph structure.
 
 Formally, let us consider a graph $G$ with $n$ nodes, and let us denote by $\mathcal{P}_k(u)$ the set of paths shorter than or equal to $k$ that start with node $u$. With an abuse of notation, $p$ in $\mathcal{P}_k(u)$ will denote the concatenation of all node features encountered along the path. Then, a layer of GCKN defines a feature map 
 $X$ in $\Real^{n \times d}$ such that 
\begin{equation*}
    X(u)=\sum_{p \in \mathcal{P}_k(u)}  \psi(p),
\end{equation*}
where $X(u)$ is the column of $X$ corresponding to node $u$ and $\psi$ is a $d$-dimensional
embedding of the path features~$p$. More precisely, the path features in~\cite{Chen2020} are embedded to a RKHS by using a Gaussian kernel, and a finite-dimensional approximation is obtained by using the Nystr\"om method~\cite{williams2001using}. The embedding is parametrized and can be learned without or with supervision (see~\cite{Chen2020} for details). Moreover, path features of varying lengths up to a maximal length can be used with GCKN. In this work, we evaluate the strategy of encoding a node as the concatenation of its original features and those produced by one GCKN layer. This strategy has proven to be very successful in practice.



\section{Experiments}
\label{sec:experiments}

In this section, we evaluate instances of GraphiT as well as popular GNNs and position encoding baselines on various graph classification and regression tasks. We want to answer several questions:
\begin{itemize}[leftmargin=0.7cm,itemsep=0pt,parsep=0pt,topsep=0pt]
    \item[\textbf{Q1}:] Can vanilla transformers, when equipped with appropriate position encoding and/or structural information, outperform GNNs in graph classification and regression tasks?
    \item [\textbf{Q2}:] Is kernel-based relative positional encoding more effective than the absolute position encoding provided by the eigenvectors of the Laplacian (LapPE)?
    \item [\textbf{Q3}:] What is the most effective way to encode graph structure information within transformers? 
\end{itemize}
We also discuss our results and conduct ablation studies. Finally, we demonstrate the ability of attention scores to highlight meaningful graph features when using kernel-based positional encoding.

\subsection{Methodology}

\paragraph{Benchmark and baselines.} 
We benchmark our methods on various graph classification datasets with discrete node labels (MUTAG, PROTEINS, PTC, NCI1) and one regression dataset with discrete node labels (ZINC). These datasets can be obtained \textit{e.g} via the Pytorch Geometric toolbox~\citep{Fey2019}. 
We compare our models to the following GNN models: Molecular Fingerprint (MF)~\citep{duvenaud2015convolutional}, Graph Convolutional Networks (GCN)~\citep{kipf2017semisupervised}, Graph Attention Networks (GAT)~\citep{Velickovic2018}, Graph Isomorphism Networks (GIN)~\citep{xu2019powerful} and finally Graph Convolutional Kernel Networks (GCKN)~\citep{Chen2020}. In particular, GAT is an important baseline as it uses attention to aggregate neighboring node information. We compare GraphiT to the transformer architecture proposed in~\cite{dwivedi2021generalization} and also use their Laplacian absolute position encoding as a baseline for evaluating our graph kernel relative position encoding. 
All models are implemented in Pytorch and our code is available online at~\url{https://github.com/inria-thoth/GraphiT}.


\paragraph{Reporting scores.} 
For all datasets except ZINC, we sample ten times random train/val/test splits, of size $80/10/10$, respectively. 
For each split, we evaluate all methods by (i) training several models on the train fold with various hyperparameters; (ii) performing model selection on val, by averaging the validation accuracy of a model on its last 50 epochs; (iii) retraining the selected model on train+val; (iv) estimate the test score by averaging the test accuracy over the last 50 epochs. 
The results reported in our tables are then averaged over the ten splits.
This procedure is a compromise between a double-nested cross validation procedure that would be too computationally expensive and reporting 10-fold cross validation scores that would overestimate the test accuracy.
For ZINC, we use the same train/val/test splits as in~\cite{dwivedi2021generalization}, train GraphiT with 10 layers, 8 heads and 64 hidden dimensions as in~\cite{dwivedi2021generalization}, and report the average test mean absolute error on 4 independent runs.

\paragraph{Optimization procedure and hyperparameter search.} 
Our models are trained with the Adam optimizer by decreasing the learning rate by a factor of 2 each 50 epochs. For classification tasks, we train about the same number (81) of models with different hyperparameters for each GNN and transformer method, thus spending a similar engineering effort on each method.
For GNN models, we select the best type of global pooling, number of layers and hidden dimensions from three different values. Regarding transformers, we select the best number of heads instead of global pooling type for three different values. For all considered models, we also select the best learning rate and weight decay from three different values and the number of epochs is fixed to 300 to guarantee the convergence. For the ZINC dataset, we found that a standard warmup strategy suggested for transformers in~\cite{vaswani2017} leads to more stable convergence for larger models. The rest of the hyperparameters remains the same as used in~\cite{dwivedi2021generalization}.
More details and precise grids for hyperparameter search can be found in Appendix~\ref{sec:add_exp}.

\subsection{Results and Discussion}

\begin{table}[t]
    \small
    \centering
    \caption{Average mean classification accuracy/mean absolute error.
    }
    \resizebox{\textwidth}{!}{
    \begin{tabular}{l|c|c|c|c|c} \toprule
    {Method / Dataset} & {MUTAG} & {PROTEINS} & {PTC} & {NCI1} & {ZINC (no edge feat.)} \\
    \midrule
    {Size} & {188} & {1113} & {344} & {4110} & {12k} \\
    {Classes} & {2} & {2} & {2} & {2} & {Reg.} \\
    {Max. number of nodes} & {28} & {620} & {109} & {111} & {37} \\
    \midrule
    {MF~\citep{duvenaud2015convolutional}} & {81.5$\pm$11.0} & {71.9$\pm$5.2} & {57.3$\pm$6.9} & {80.6$\pm$2.5} & {0.387$\pm$0.019} \\
    {GCN~\citep{kipf2017semisupervised}} & {78.9$\pm$10.1} & {75.8$\pm$5.5} & {54.0$\pm$6.3} & {75.9$\pm$1.6}  & {0.367$\pm$0.011} \\
    {GAT~\citep{Velickovic2018}} & {80.3$\pm$8.5} & {74.8$\pm$4.1} & {55.0$\pm$6.0} & {76.8$\pm$2.1}  & {0.384$\pm$0.007} \\
    {GIN~\citep{xu2019powerful}} & {82.6$\pm$6.2} & {73.1$\pm$4.6} & {55.0$\pm$8.7} & {\textbf{81.7$\pm$1.7}}  & {0.387$\pm$0.015} \\
    {GCKN-subtree~\citep{Chen2020}} & {87.8$\pm$9.4} & {72.0$\pm$3.7} & {\textbf{62.1$\pm$6.4}} & {79.6$\pm$1.8}  & {0.474$\pm$0.001} \\ 
    \midrule
    \citep{dwivedi2021generalization} & 79.3$\pm$11.6 & 65.8$\pm$3.1 & 58.4$\pm$8.2 & 78.9$\pm$1.1 & 0.359$\pm$0.014 \\
    \citep{dwivedi2021generalization} + LapPE & 83.9$\pm$6.5 & 70.1$\pm$3.2 & 57.7$\pm$3.1 & 80.0$\pm$1.9 & 0.323$\pm$0.013 \\
    \midrule
    {Transformers (T)} & 82.2$\pm$6.3 & 75.6$\pm$4.9 & 58.1$\pm$10.5 & 70.0$\pm$4.5  & 0.696$\pm$0.007 \\
    {T + LapPE} & 85.8$\pm$5.9 & 74.6$\pm$2.7 & 55.6$\pm$5.0 & 74.6$\pm$1.9 & 0.507$\pm$0.003 \\
    {T + Adj PE} & 87.2$\pm$9.8 & 72.4$\pm$4.9 & 59.9$\pm$5.9 & 79.7$\pm$2.0 & 0.243$\pm$0.005\\
    {T + 2-step RW kernel} & 85.3$\pm$6.9 & 72.8$\pm$4.5 & 62.0$\pm$9.4 & 78.0$\pm$1.5 & 0.243$\pm$0.010\\
    {T + 3-step RW kernel} & 83.3$\pm$6.3 & \textbf{76.2$\pm$4.4} & 61.0$\pm$6.2 & 77.6$\pm$3.6 & 0.244$\pm$0.011\\
    {T + Diffusion kernel} & 82.7$\pm$7.6 & 74.6$\pm$4.2 & 59.1$\pm$7.4 & 78.9$\pm$1.6  & {0.255$\pm$0.010} \\
    {T + GCKN } & {84.4$\pm$7.8} & {69.5$\pm$3.8} & {61.5$\pm$5.8} & {78.1$\pm$5.1}  & 0.274$\pm$0.011 \\
    {T + GCKN + 2-step RW kernel} & {90.4$\pm$5.8} & {72.5$\pm$4.6} & {58.4$\pm$7.6} & {81.0$\pm$1.8} & {0.213$\pm$0.016}\\
    {T + GCKN + Adj PE} & {\textbf{90.5$\pm$7.0}} & {71.1$\pm$6.9} & {57.9$\pm$4.2} & {81.4$\pm$2.2} & {\textbf{0.211$\pm$0.010}}\\
    \bottomrule
    \end{tabular}
    }
    \label{tab:sup}
\end{table}

\begin{table}[t]
    \small
    \centering
    \caption{Ablation: comparison of different structural encoding schemes and their combinations.
    }
    \resizebox{\textwidth}{!}{
    \begin{tabular}{l|c|c|c|c|c|c} \toprule
         \multirow{2}{*}{Dataset} & \multirow{2}{3cm}{Structure encoding in node features} & \multicolumn{5}{c}{Relative positional encoding in attention scores}  \\
         & & T vanilla & T + Adj & T + 2-step & T + 3-step & T + Diffusion   \\ \midrule
         \multirow{3}{*}{MUTAG} & None & 82.2$\pm$6.3  & 87.2$\pm$9.8 &  85.3$\pm$6.9 & 83.3$\pm$6.3 & 82.7$\pm$7.6 \\
         & LapPE & 85.8$\pm$5.9 & 86.0$\pm$4.2 & 84.7$\pm$4.7 & 83.5$\pm$5.2 & 84.2$\pm$7.2 \\
         & GCKN & 84.4$\pm$7.8 & \textbf{90.5$\pm$7.0} & 90.4$\pm$5.8 &  90.0$\pm$6.3 & 90.0$\pm$6.8 \\ \midrule
         \multirow{3}{*}{PROTEINS} & None & 75.6$\pm$4.9 & 72.4$\pm$4.9 & 72.8$\pm$4.5 & \textbf{76.2$\pm$4.4} & 74.6$\pm$4.2\\
         & LapPE & 74.6$\pm$2.7 & 74.7$\pm$5.2 & 75.0$\pm$4.7 & 74.3$\pm$5.3 & 74.7$\pm$5.3\\
         & GCKN & 69.5$\pm$3.8 & 71.1$\pm$6.9 & 72.5$\pm$4.6 & 70.0$\pm$5.1 & 72.4$\pm$4.9 \\ \midrule
         \multirow{3}{*}{PTC} & None & 58.1$\pm$10.5 & 59.9$\pm$5.9 & \textbf{62.0$\pm$9.4} & 61.0$\pm$6.2 & 59.1$\pm$7.4 \\
         & LapPE & 55.6$\pm$5.0 & 57.1$\pm$3.8 & 58.8$\pm$6.6 & 57.1$\pm$5.3 & 57.3$\pm$7.8 \\
         & GCKN & 61.5$\pm$5.8 & 57.9$\pm$4.2 & 58.4$\pm$7.6 & 55.2$\pm$8.8 & 55.9$\pm$8.1 \\ \midrule
         \multirow{3}{*}{NCI1} & None & 70.0$\pm$4.5 & 79.7$\pm$2.0 & 78.0$\pm$1.5 & 77.6$\pm$3.6 & 78.9$\pm$1.6 \\
         & LapPE & 74.6$\pm$1.9 & 78.7$\pm$1.9 & 78.4$\pm$1.3 & 78.7$\pm$1.5 & 77.8$\pm$1.0 \\
         & GCKN & 78.1$\pm$5.1 & \textbf{81.4$\pm$2.2} & 81.0$\pm$1.8 & 81.0$\pm$1.8 & 81.0$\pm$2.0 \\ \midrule
         \multirow{3}{*}{ZINC} & None & 0.696$\pm$0.007 & 0.243$\pm$0.005 & 0.243$\pm$0.010 & 0.244$\pm$0.011 & 0.255$\pm$0.010 \\
         & LapPE & 0.507$\pm$0.003 & \textbf{0.202$\pm$0.011} & 0.227$\pm$0.030 & 0.210$\pm$0.003 & 0.221$\pm$0.038 \\
         & GCKN & 0.274$\pm$0.011 & 0.211$\pm$0.010 & 0.213$\pm$0.016 &  0.203$\pm$0.011 & 0.218$\pm$0.006 \\ \bottomrule
    \end{tabular}
    }
    \label{tab:structural_encoding}
\end{table}

\paragraph{Comparison of GraphiT and baselines methods.}
We show our results in Table~\ref{tab:sup}. For smaller datasets such as MUTAG, PROTEINS or PTC, our Transformer without positional encoding performs reasonably well compared to GNNs, whereas for NCI1 and ZINC, incorporating structural information into the model is key to good performance. On all datasets, GraphiT is able to perform as well as or better than the baseline GNNs. In particular on ZINC, GraphiT outperforms all previous baseline methods by a large margin. For this, it seems that the factor $D^{-1/2}$ in~\eqref{eq:pos_forward} is important, allowing to capture more information about the graph structure. The answer to \textbf{Q1} is therefore positive.

\vs
\paragraph{Comparison of relative position encoding schemes.}
Here, we compare our transformer used with different relative positional encoding strategies, including adjacency (1-step RW kernel with $\gamma=1.0$ corresponding to a normalized adjacency matrix $D^{-1/2}A D^{-1/2}$) which is symmetric but not positive semi-definite, 2 and 3-step RW kernel with $\gamma=0.5$ and a diffusion kernel with $\beta=1$.
Unlike the vanilla transformer that works poorly on big datasets including NCI1 and ZINC, keeping all nodes communicate through our diffusion kernel positional encoding can still achieve performances close to encoding methods relying on local communications such as adjacency or p-step kernel encoding. Interestingly, our adjacency encoding, which could be seen as a variant of the neighborhood aggregation of node features used in GAT, is shown to be very effective on many datasets. In general, sparse local positional encoding seems to be useful for our prediction tasks, which tempers the answer to \textbf{Q1}. 

\vs
\paragraph{Comparison of structure encoding schemes in node features.} 
%
In this paragraph, we compare different ways of injecting graph structures to the vanilla transformer, including Laplacian PE~\citep{dwivedi2021generalization} and unsupervised GCKN-path features~\citep{Chen2020}. We observe that
incorporating topological structures directly into the features of input nodes is a very useful strategy for vanilla transformers. This yields significant performance improvement on almost all datasets except PROTEINS, by either using the Laplacian PE or GCKN-path features. Among them, GCKN features are observed to outperform Laplacian PE by a large margin, except on MUTAG and PROTEINS (third column of Table~\ref{tab:structural_encoding}).
A possible reason for this exception is that PROTEINS seems to be very specific, such that prediction models do not really benefit from encoding positional information. In particular, GCKN brings a pronounced performance boost on ZINC, suggesting the importance of encoding substructures like paths for prediction tasks on molecules.

\vs
\paragraph{Combining relative position encoding and structure encoding in node features.}
Table~\ref{tab:structural_encoding} reports the ablation study of the transformer used with or without structure encoding and coupled or not with a relative positional encoding. The results show that relative PE outperforms the topological Laplacian PE, suggesting a positive answer to \textbf{Q2}. However, using both simultaneously improves the results considerably, especially on ZINC. In fact, combining relative position encoding and a structure encoding scheme globally improves the performances. In particular, using the GCKN-path layer features works remarkably well for all datasets except PROTEINS. More precisely, we see that the combination of GCKN with the adjacent matrix PE yields the best results among the other combinations for MUTAG and NCI1. In addition, the GCKN coupled with the 3-step RW kernel achieves the second best performance for ZINC. The answer to \textbf{Q3} is therefore combining a structure encoding scheme in node features, such as GCKN, with relative positional encoding.
\vs
\paragraph{Discussion.}
Combining transformer and GCKN features results in substantial improvement over the simple sum or mean global pooling used in original GCKN models on ZINC dataset as shown in Table~\ref{tab:sup}, which suggests that transformers can be considered as a very effective method for aggregating local substructure features on large datasets at the cost of using much more parameters. This point of view has also been explored in the previous chapter, which introduces a different form of attention for sequence and text classification.
A potential limitation of GraphiT is its application to large graphs, 
as the complexity of the self-attention mechanism scales quadratically with the size of the input sequence. However, and as mentionned in Chapter~\ref{chapt:2_otke}, a recent line of work coined as efficient transformers alleviated these issues both in terms of memory and computational cost and we refer the reader to the following survey~\citep{tay2020efficient}.

\subsection{Visualizing attention scores for the Mutagenicity dataset.}
\label{subsec:visu}

We now show that the attention yields visualization mechanisms for detecting important graph motifs.

\vs
\paragraph{Mutagenicity.} In chemistry, a mutagen is a compound that causes genetic mutations. This important property is related to the molecular substructures of the compound. The Mutagenicity dataset~\citep{KKMMN2016} contains 4337 molecules to be classified as mutagen or not, and aims at better understanding which substructures cause mutagenicity. We train GraphiT with diffusion position encoding on this dataset with the aim to study whether important substructures can be detected. To this end, we feed the model with molecules of the dataset and collect the attention scores of each layer averaged by heads, as can be seen in Figure~\ref{fig:attentions} for the molecules of Figure~\ref{fig:molecules}. 

\begin{figure}
\begin{subfigure}{1.\textwidth}
  \centering
  \includegraphics[scale=0.4]{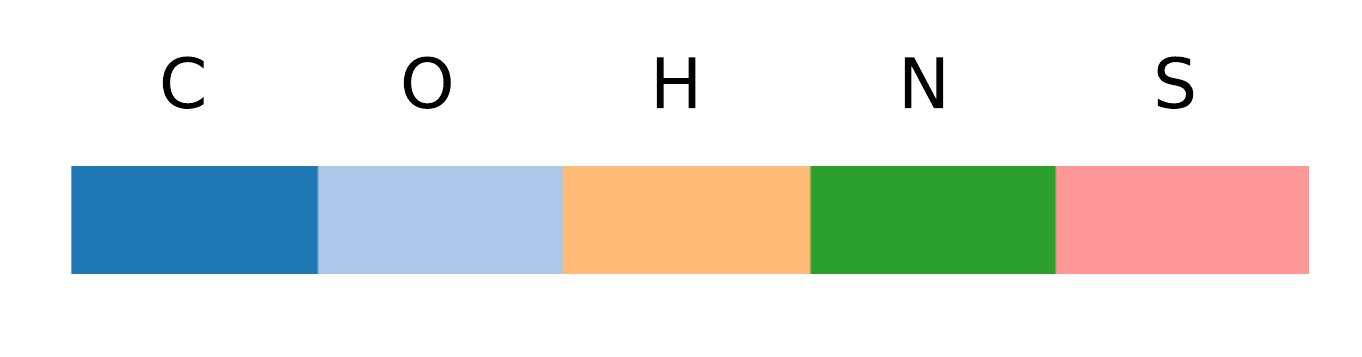}
\end{subfigure}%

\vspace{1cm}

\centering
\begin{subfigure}{1.\textwidth}
  \centering
  \includegraphics[scale=0.9]{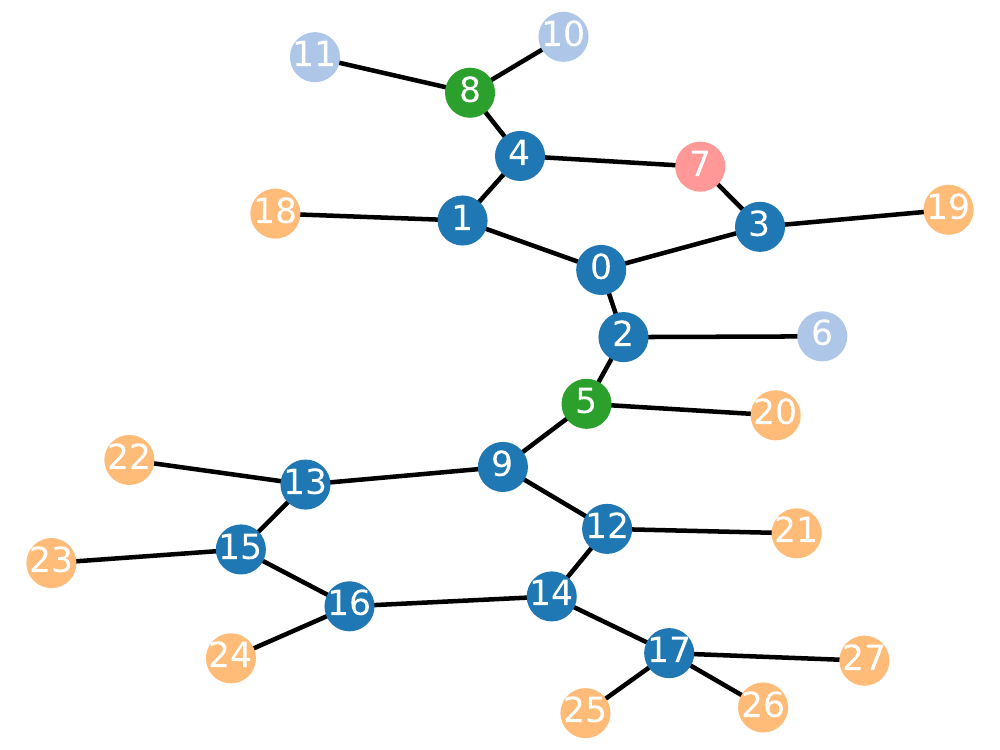}
  \caption{Nitrothiopheneamide-methylbenzene}
  \label{fig:NO2}
\end{subfigure}%

\vspace{1cm}

\begin{subfigure}{1.\textwidth}
  \centering
  \includegraphics[scale=0.9]{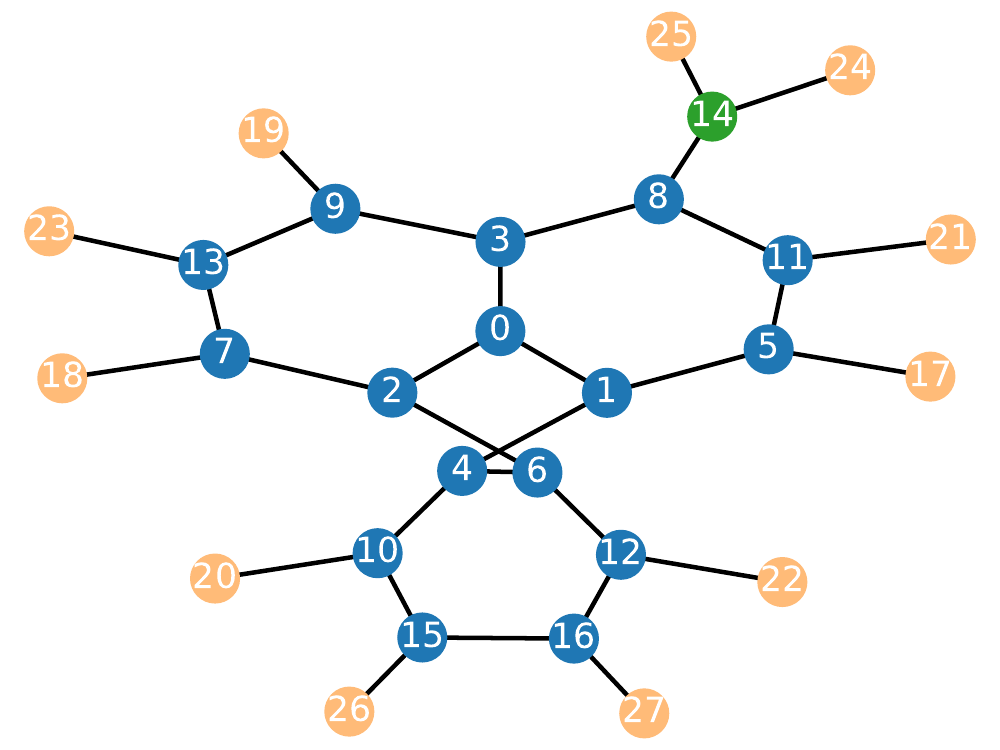}
  \caption{Aminofluoranthene}
  \label{fig:NH2}
\end{subfigure}
\caption{Examples of molecules from Mutagenicity correctly classified as mutagenetic by our model.}
\label{fig:molecules}
\end{figure}

\begin{figure}
\centering
\begin{subfigure}{.4\textwidth}
    \includegraphics[width=1\linewidth]{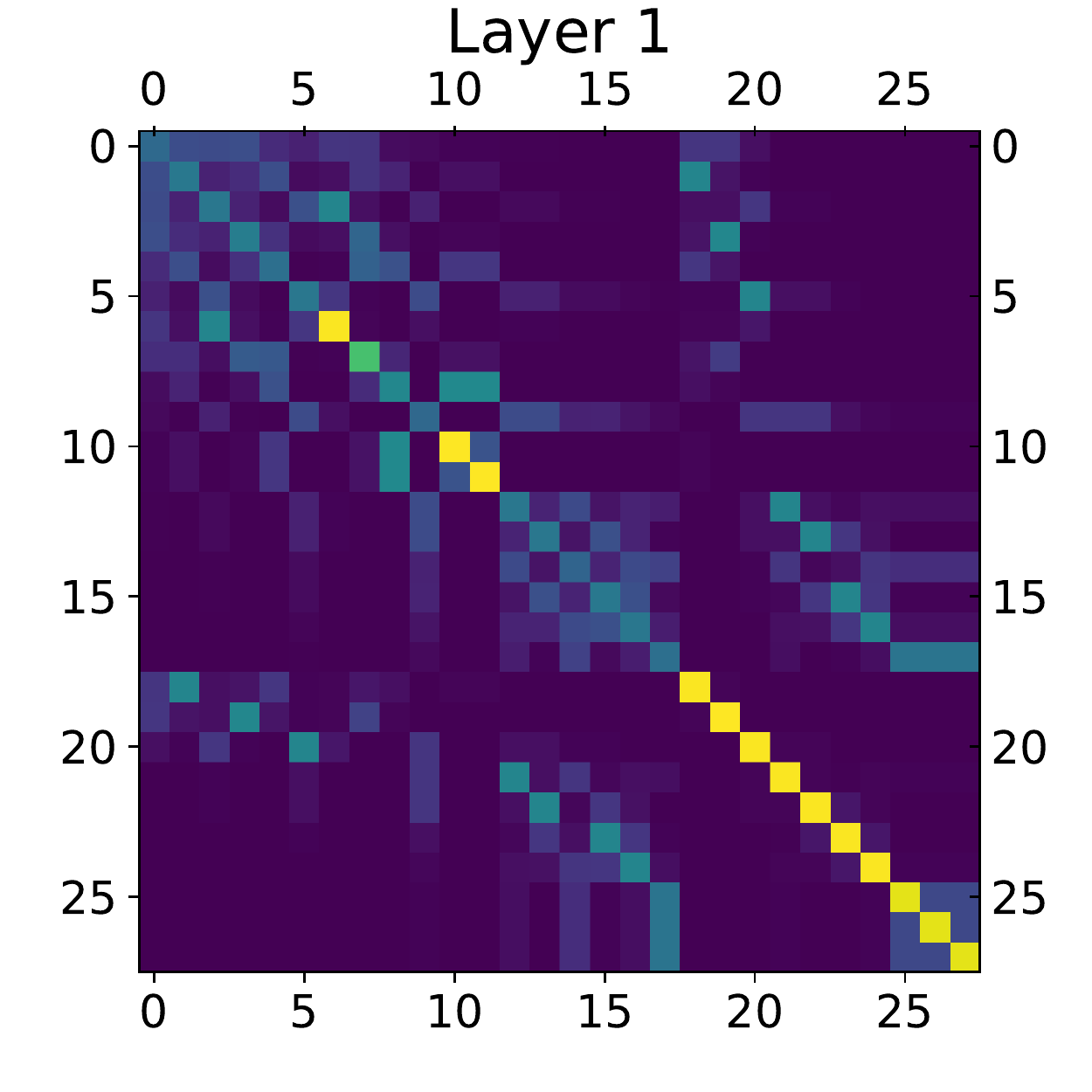}
\end{subfigure}
\begin{subfigure}{.4\textwidth}
    \includegraphics[width=1\linewidth]{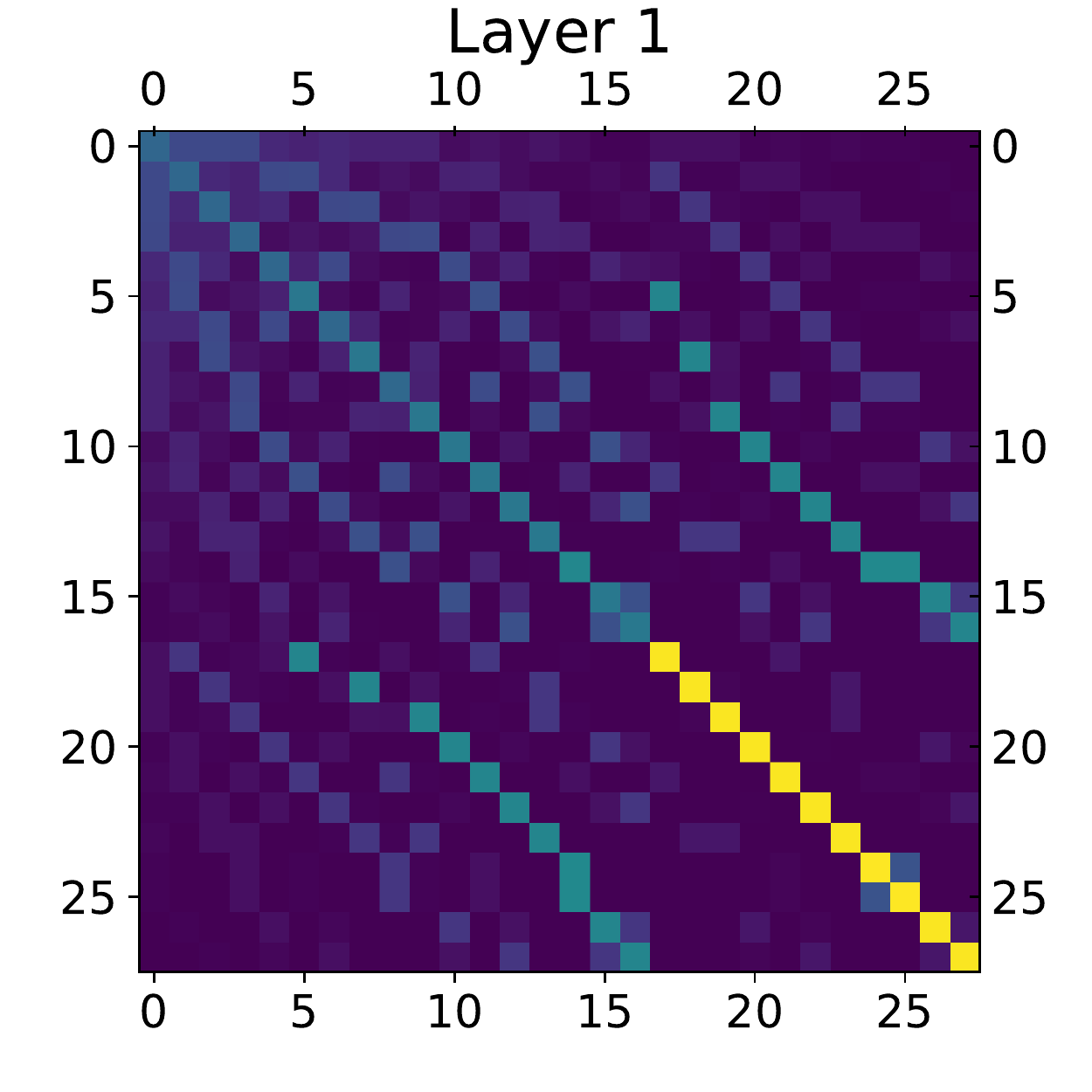}
\end{subfigure}
\begin{subfigure}{.4\textwidth}
    \includegraphics[width=1\linewidth]{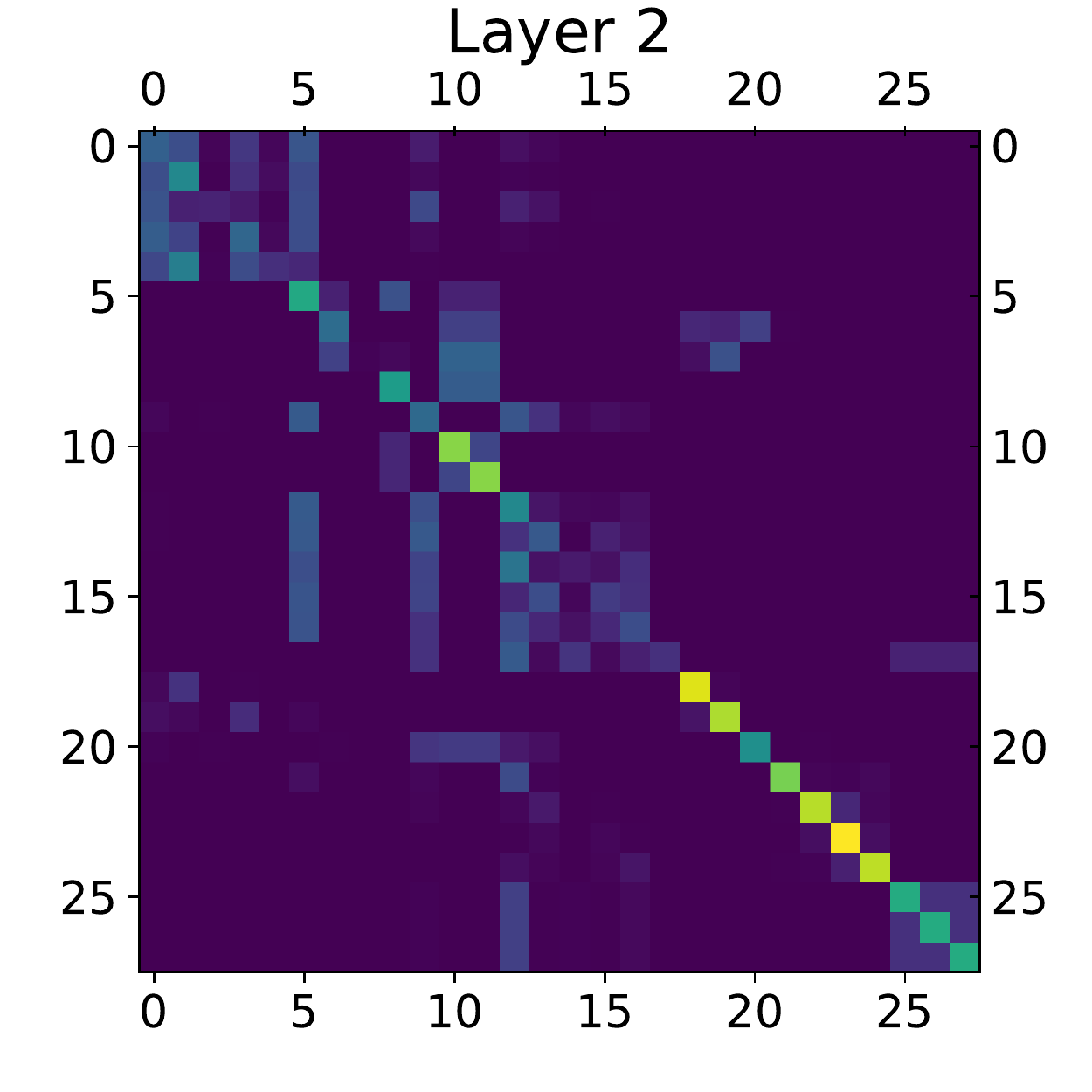}
\end{subfigure}
\begin{subfigure}{.4\textwidth}
    \includegraphics[width=1\linewidth]{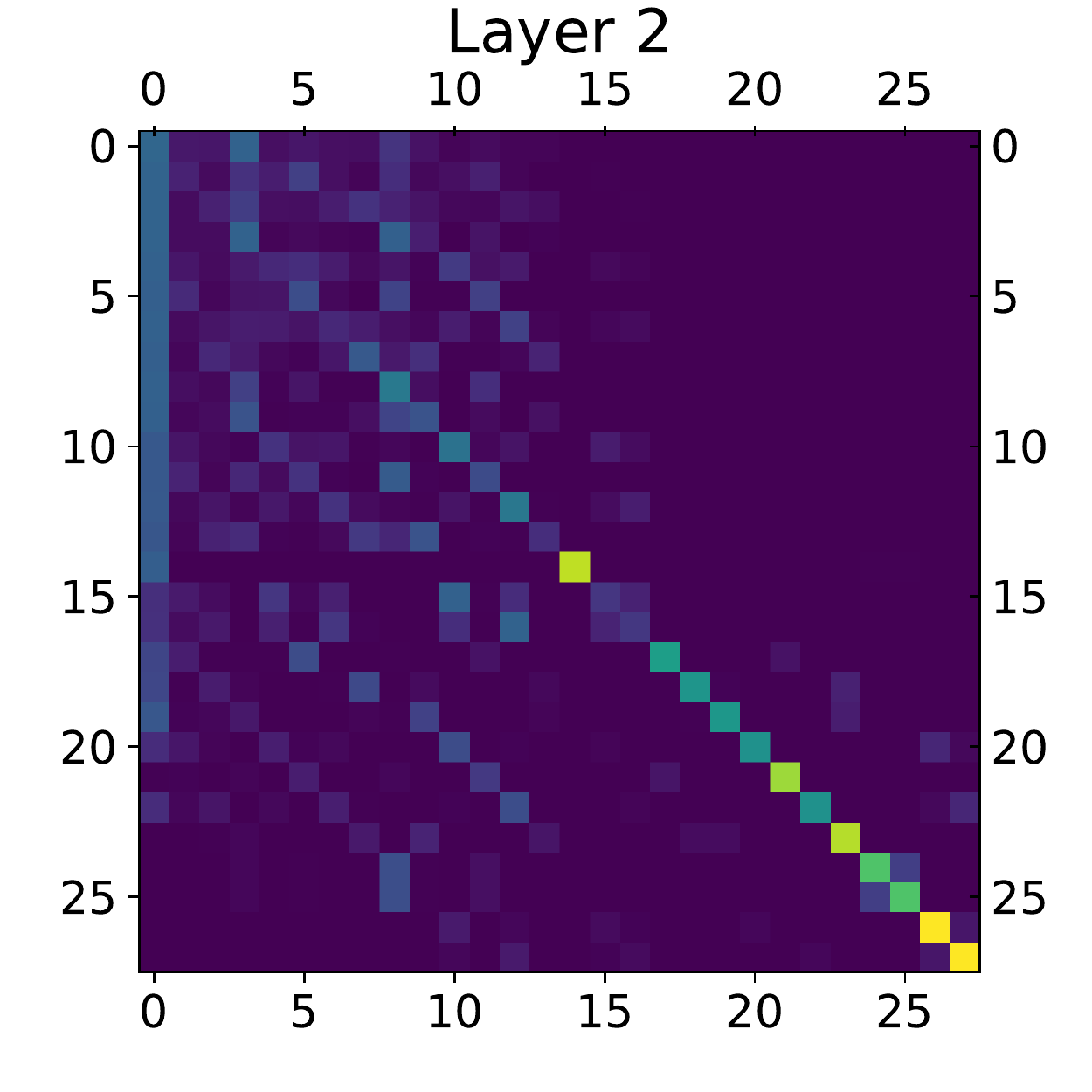}
\end{subfigure}
\begin{subfigure}{.4\textwidth}
    \includegraphics[width=1\linewidth]{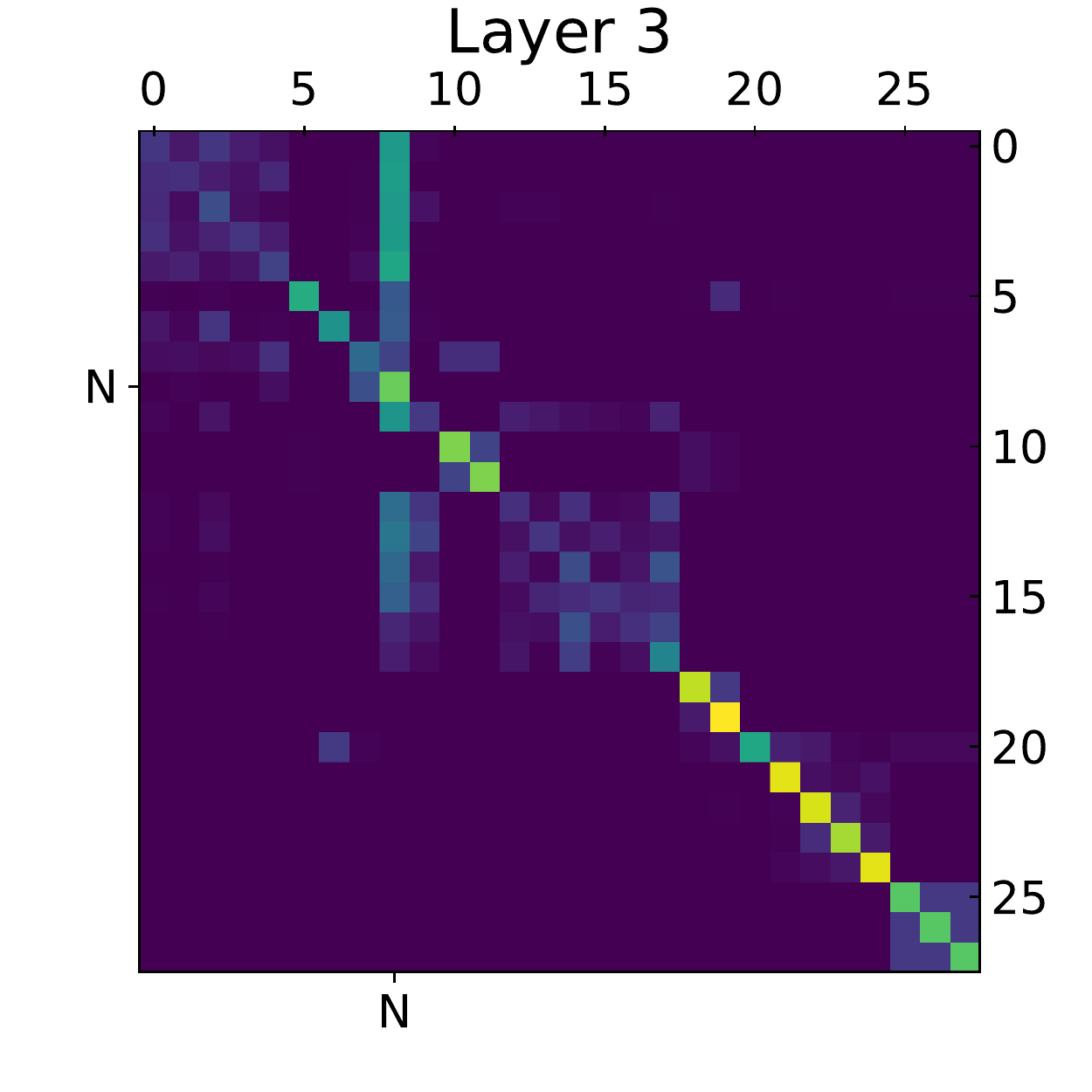}
\end{subfigure}
\begin{subfigure}{.4\textwidth}
    \includegraphics[width=1\linewidth]{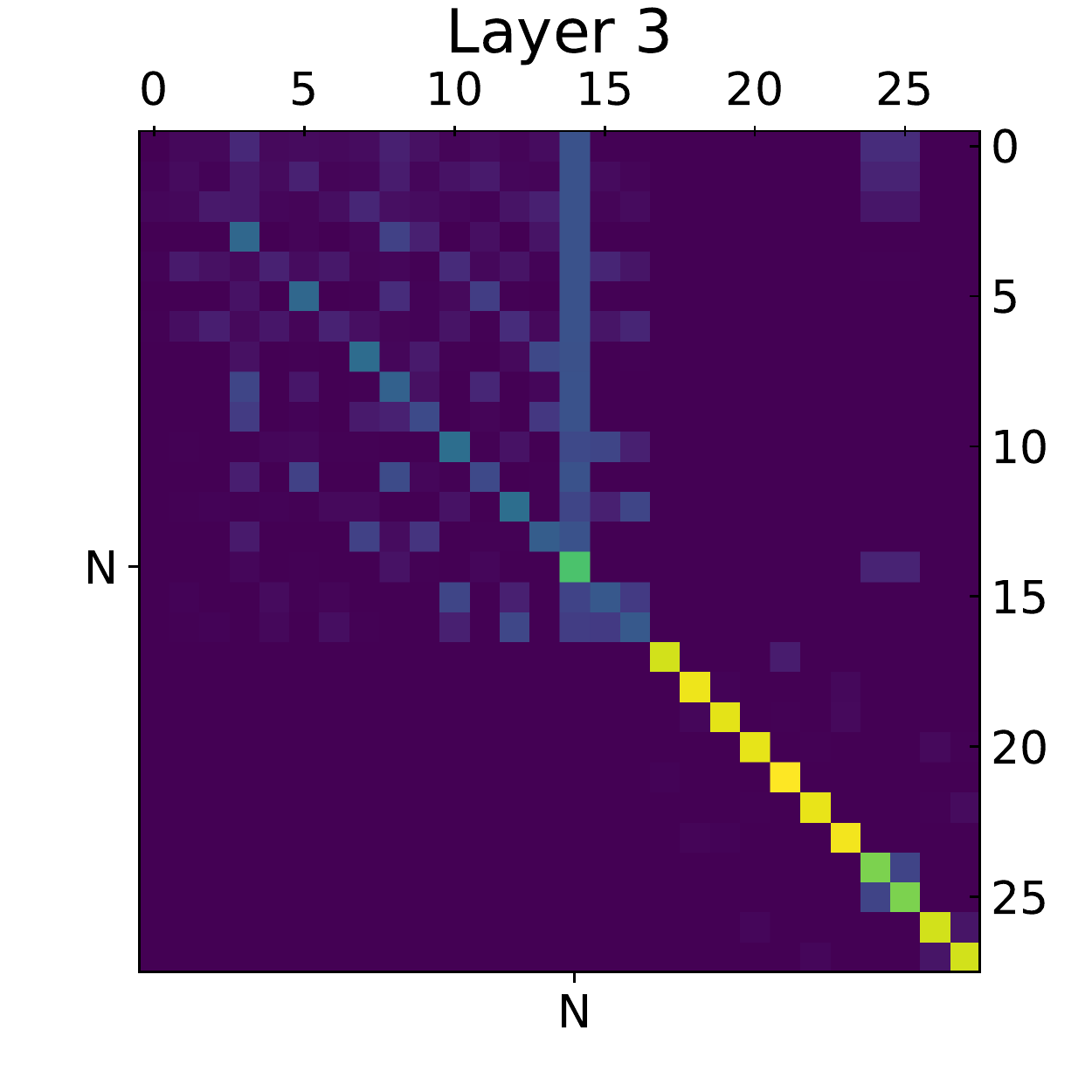}
\end{subfigure}

\caption{Attention scores averaged by heads for each layer of our trained model for the compounds in Figures~\ref{fig:NO2} (\textit{Left}) and~\ref{fig:NH2} (\textit{Right)}. \textit{Top Left}: diffusion kernel for~\ref{fig:NO2}. 
\textit{Bottom Left}: node $8$ (N of NO$_2$) is salient. 
\textit{Top Right}: diffusion kernel for~\ref{fig:NH2}. 
\textit{Bottom Right}: node $14$ (N of NH$_2$) is salient.}
\label{fig:attentions}
\end{figure}

\vs
\paragraph{Attention scores interpretation.} While the scores in the first layer are close to the diffusion kernel, the following layers get sparser. Since the attention matrix is multiplied on the right by the values, the coefficients of the node aggregation step for the n-th node is given by the n-th \textit{row} of the attention matrix. As a consequence, salient \textit{columns} suggest the presence of important nodes. After verification, for many samples of Mutagenicity fed to our model, salient atoms indeed represent important groups in chemistry, many of them being known for causing mutagenicity.

\paragraph{Nitrothiopheneamide-methylbenzene.} In compound~\ref{fig:NO2}, the N atom of the nitro group (NO$_2$) is salient. 
\ref{fig:NO2} was correctly classified as a mutagen by our model and the nitro group is indeed known for its mutagenetic properties~\citep{chung1996}. Note how information flows from O atoms to N in the first two layers and then, at the last layer, how every element of the carbon skeleton look at N, \textit{i.e} the NO$_2$ group. 

\paragraph{Aminofluoranthene.} We can apply the same reasoning for the amino group (NH$_2$) in compound~\ref{fig:NH2}~\citep{berry1985}. We were also able to identify long-range intramolecular hydrogen bounds such as between H and Cl in other samples.

\paragraph{1,2-Dibromo-3-Chloropropane (DBCP).} 
DBCP in Figure~\ref{fig:dbcp} was used as a soil fumigant in various countries. It was banned from use in the United Stated in 1979 after evidences that DBCP causes diseases, male sterility, or blindness, which can be instances of mutagenicity. In Figure~\ref{fig:attentions_dbcp}, attention focuses on the carbon skeleton and on the two Bromine (Br) atoms, the latter being indeed known for being associated with mutagenicity~\citep{LAG199473}.

\begin{figure}
\centering
\begin{subfigure}{1.0\textwidth}
\centering
\includegraphics[scale=0.4]{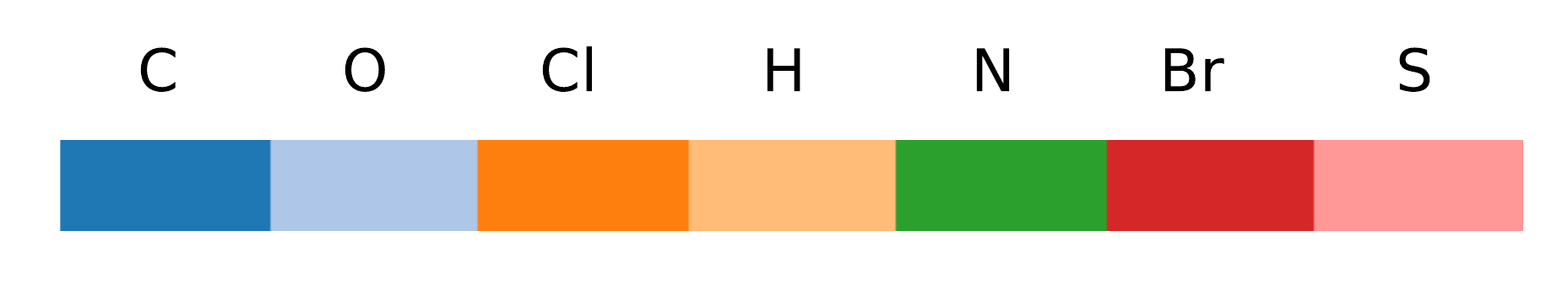}
\end{subfigure}

\vspace{1cm}

\begin{subfigure}{1.0\textwidth}
\centering
\includegraphics[scale=0.9]{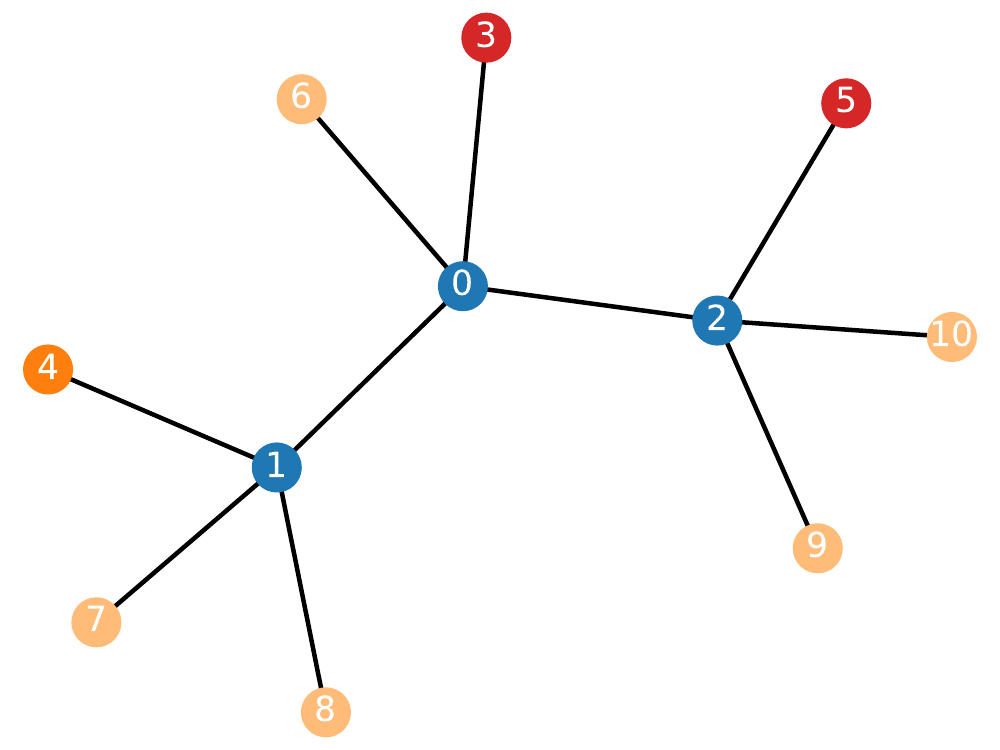}
\caption{1,2-Dibromo-3-Chloropropane.}
\label{fig:dbcp}
\end{subfigure}

\vspace{1cm}

\begin{subfigure}{1.0\textwidth}
\centering
\includegraphics[scale=0.9]{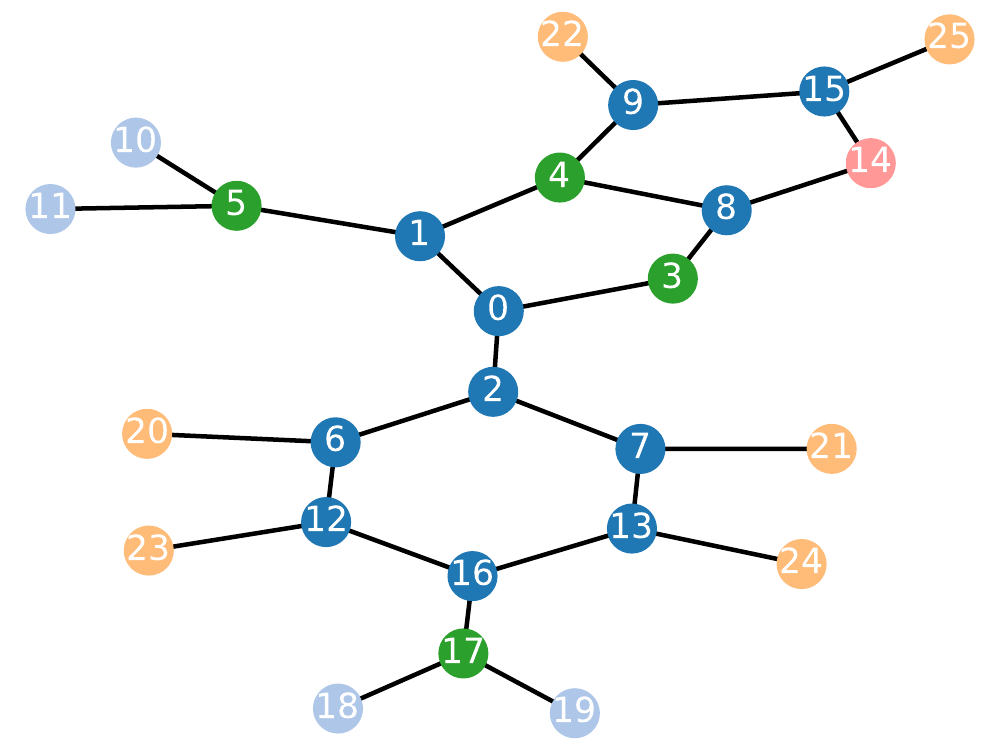}
\caption{Nitrobenzene-nitroimidazothiazole.}
\label{fig:2NO2}
\end{subfigure}
\end{figure}

\begin{figure}
\centering
\begin{subfigure}{0.4\textwidth}
    \includegraphics[width=1\linewidth]{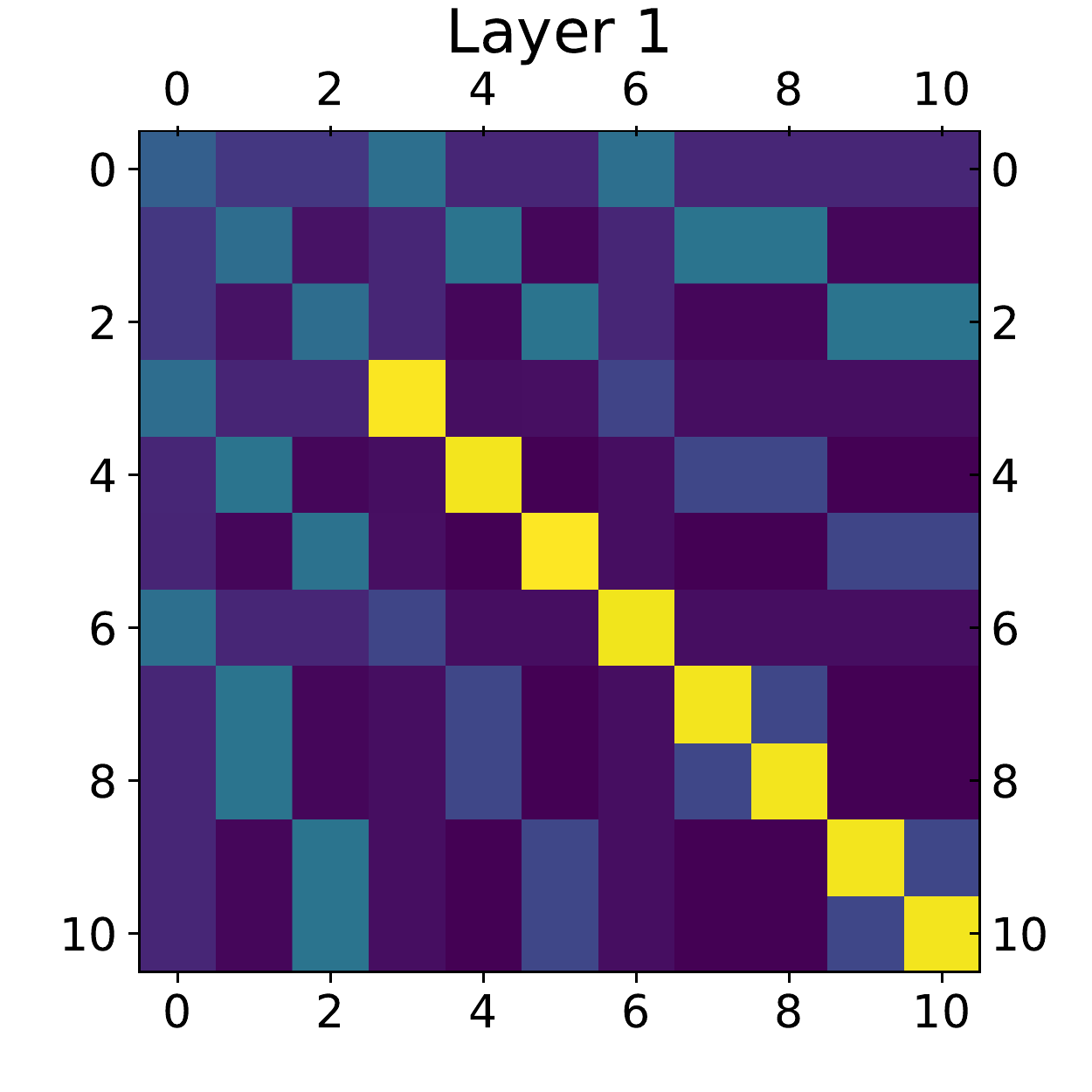}
\end{subfigure}
\begin{subfigure}{0.4\textwidth}
    \includegraphics[width=1\linewidth]{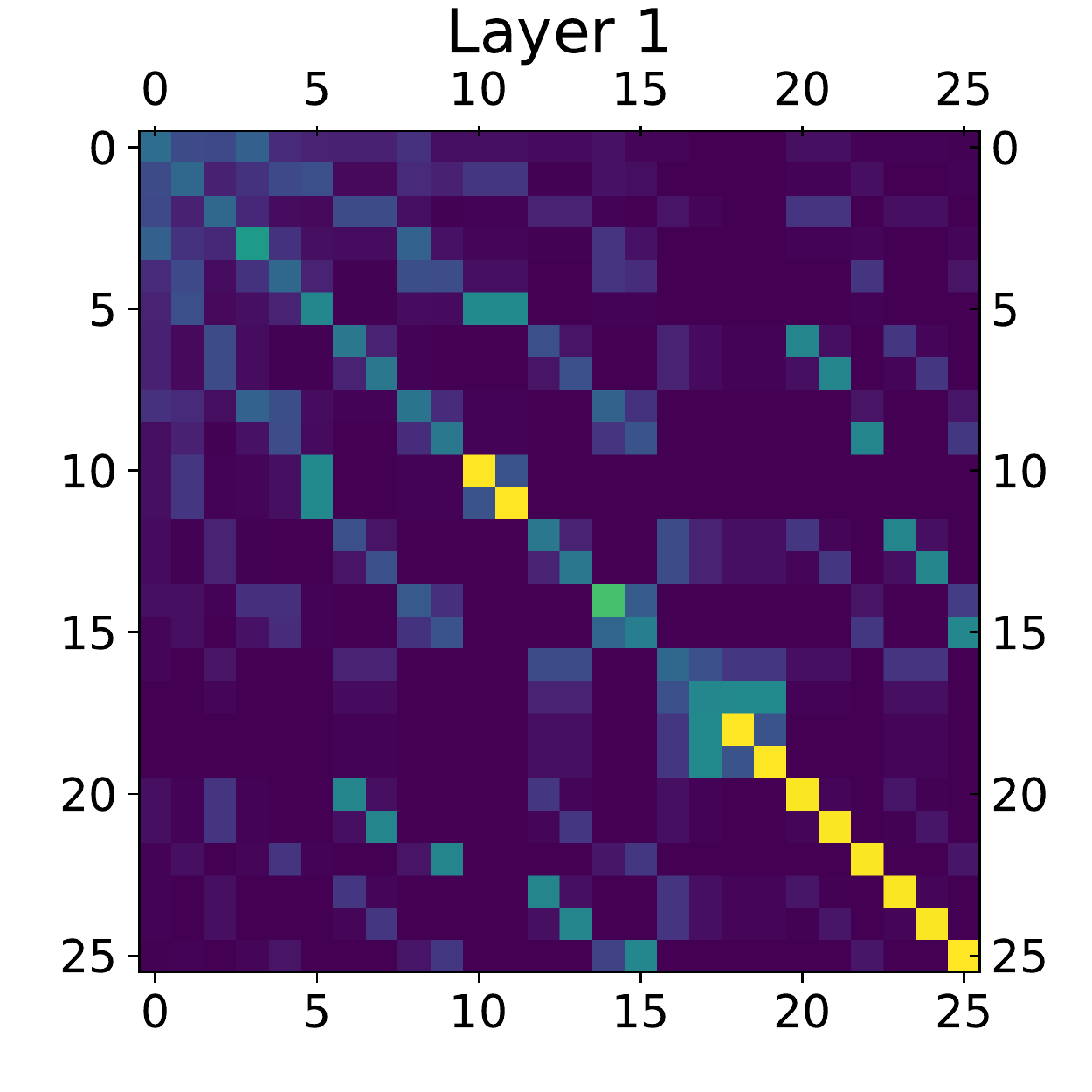}
\end{subfigure}
\begin{subfigure}{0.4\textwidth}
    \includegraphics[width=1\linewidth]{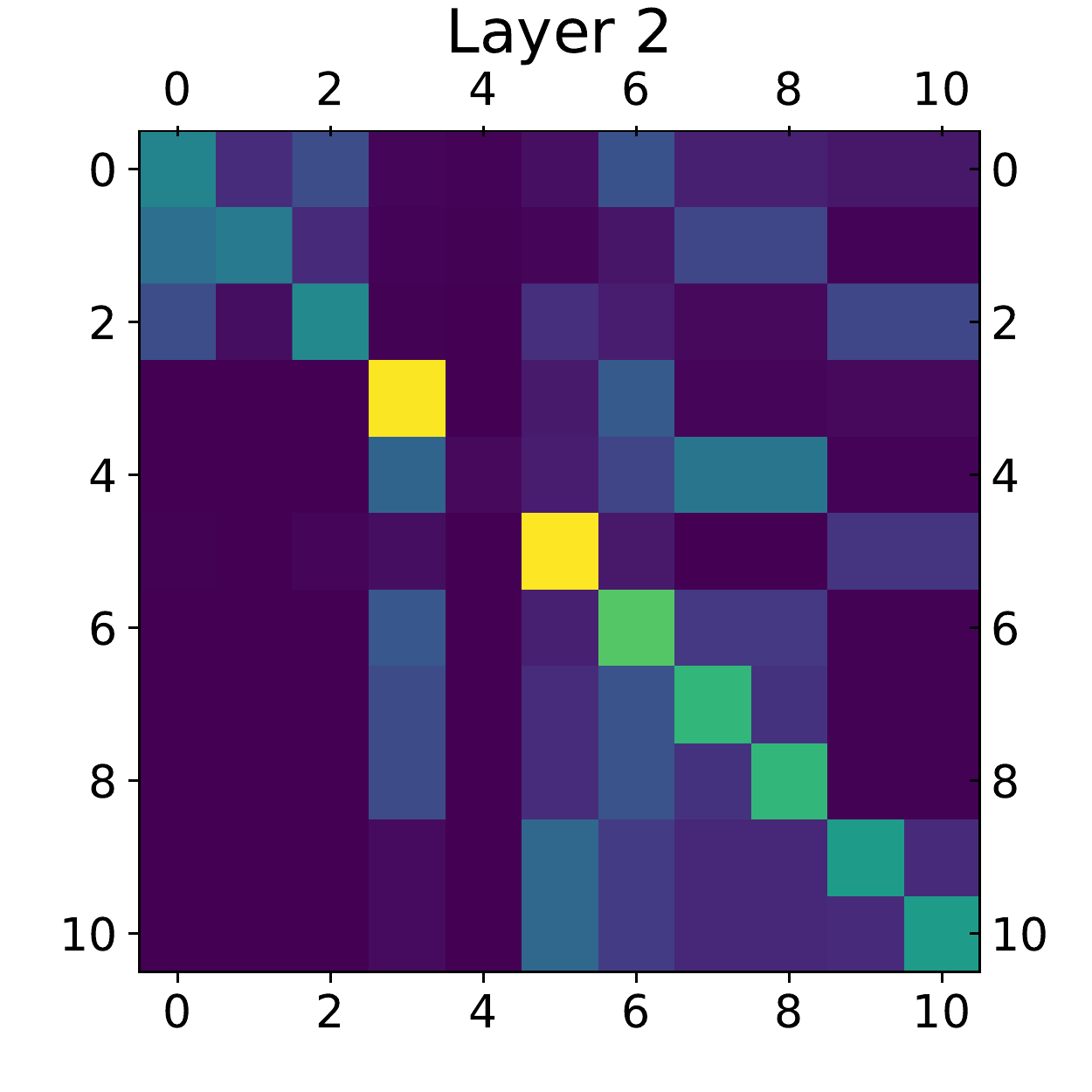}
\end{subfigure}
\begin{subfigure}{0.4\textwidth}
    \includegraphics[width = 1\linewidth]{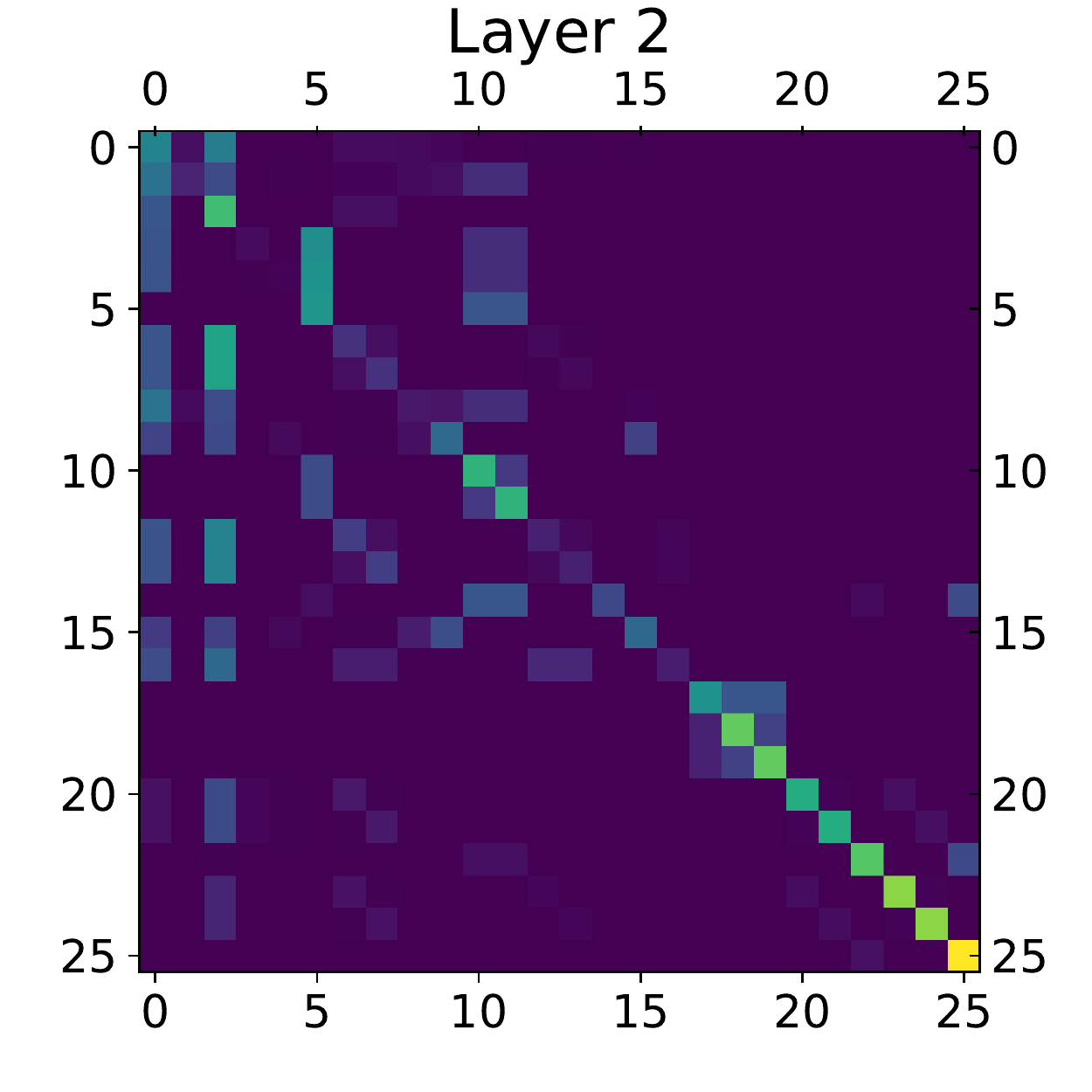}
\end{subfigure}
\begin{subfigure}{0.4\textwidth}
    \includegraphics[width = 1\linewidth]{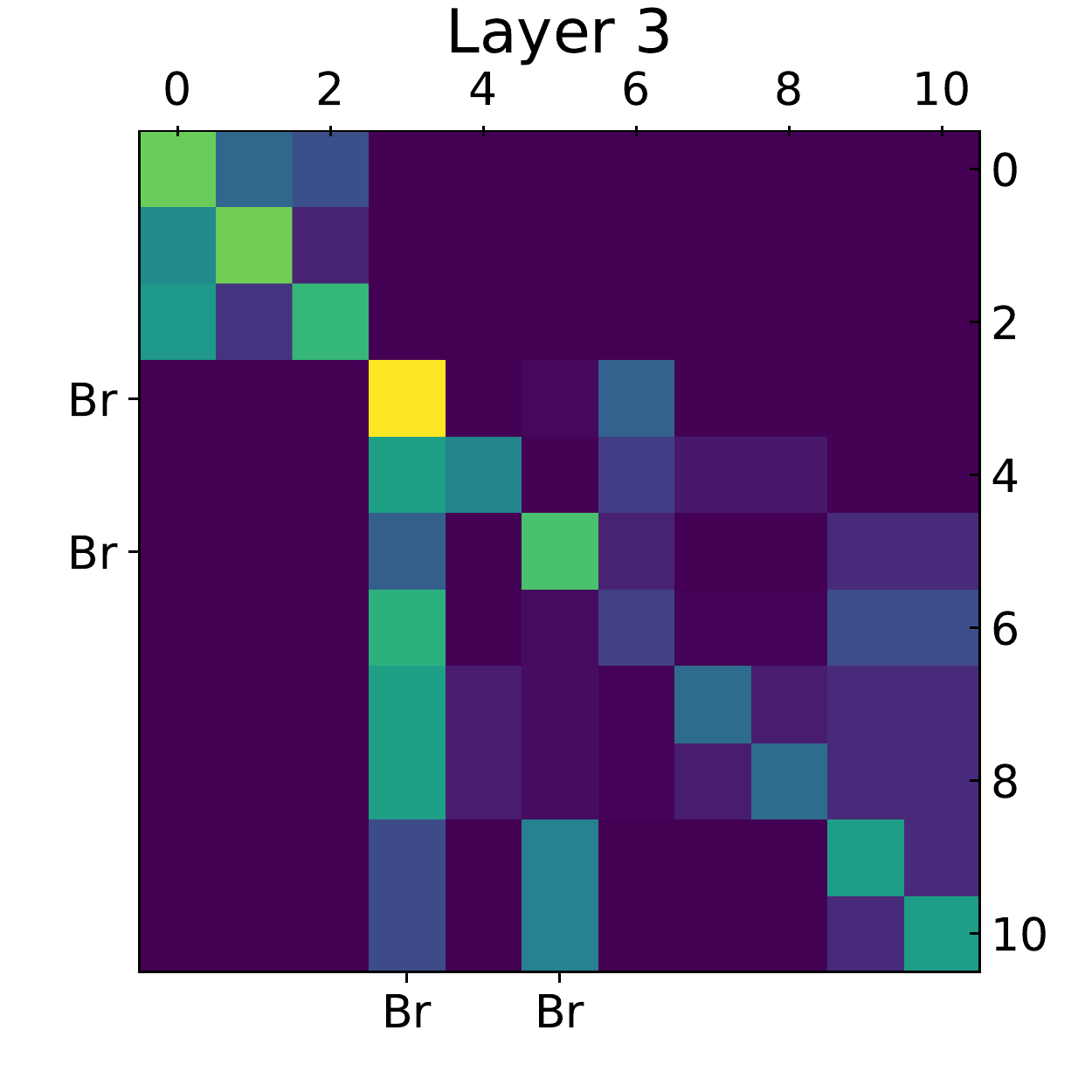}
\end{subfigure}
\begin{subfigure}{0.4\textwidth}
    \includegraphics[width = 1\linewidth]{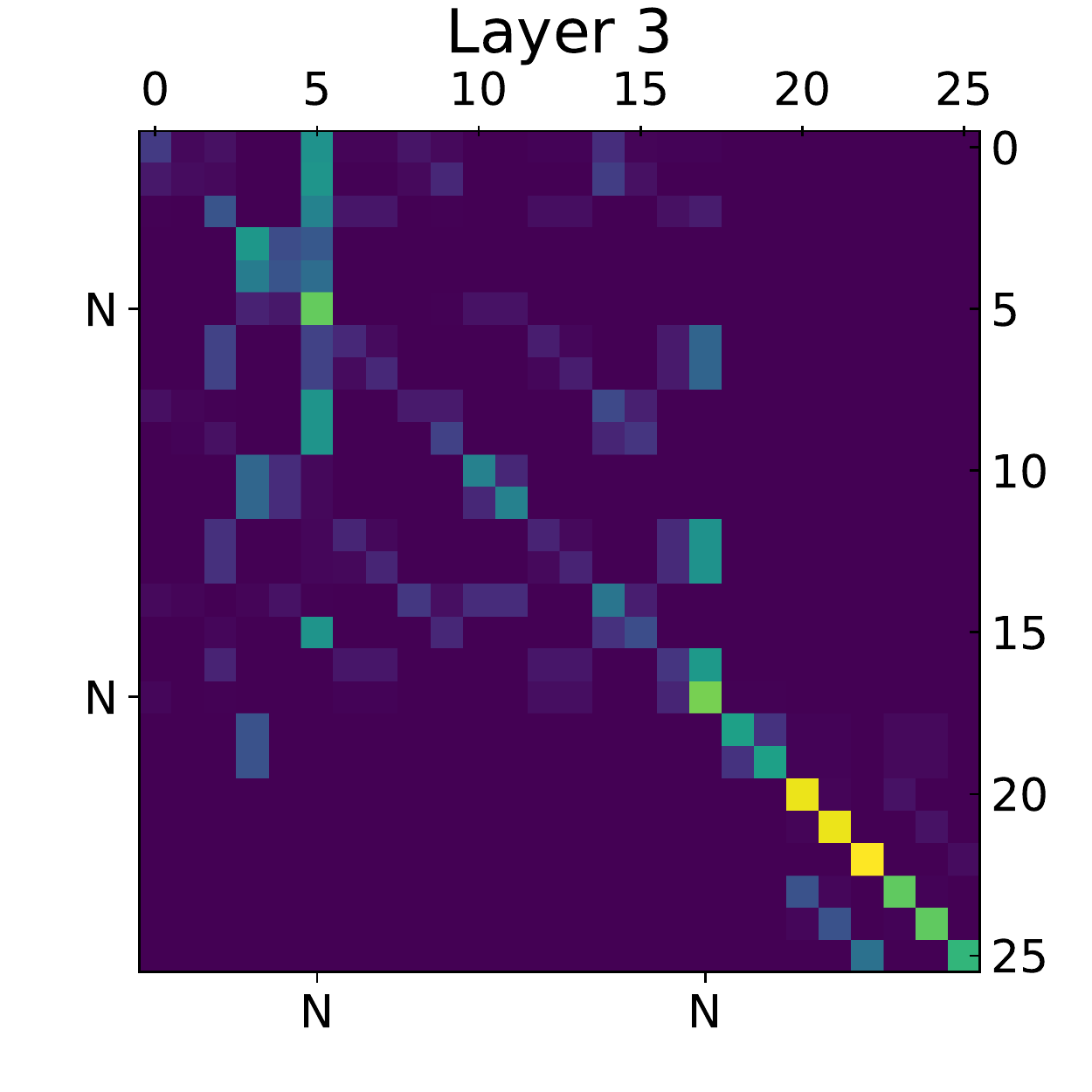}
\end{subfigure}

\caption{Attention scores averaged by heads for each layer of our trained model for the compounds in Figure~\ref{fig:dbcp} and Figure~\ref{fig:2NO2}. \textit{Top Left}: diffusion kernel for~\ref{fig:dbcp}. \textit{Bottom Left}: node $3$ and $5$ (Br) are salient. \textit{Top Right}: diffusion kernel for~\ref{fig:2NO2}. \textit{Bottom Right}: node $5$ and $17$ (N) are salient.}
\label{fig:attentions_dbcp}
\end{figure}

\paragraph{Nitrobenzene-nitroimidazothiazole.} This compound is shown in Figure~\ref{fig:2NO2}. As for compound~\ref{fig:NO2} in Section~\ref{subsec:visu}, our model puts emphasis on two nitro groups which are indeed known for inducing mutagenicity.

\paragraph{Triethylenemelamine.} Triethylenemelamine in Figure~\ref{fig:tmel} is a compound exhibiting mutagenic properties, and is used to induce cancer in experimental animal models. Our model focuses on the three nitrogen atoms of the aziridine groups which are themselves mutagenic compounds.

\begin{figure}
\centering
\begin{subfigure}{1.0\textwidth}
\centering
\includegraphics[scale=0.4]{chapters/graphit/figures/colorbar_appendix.pdf}
\end{subfigure}

\vspace{1cm}

\begin{subfigure}{1.0\textwidth}
\centering
\includegraphics[scale=0.9]{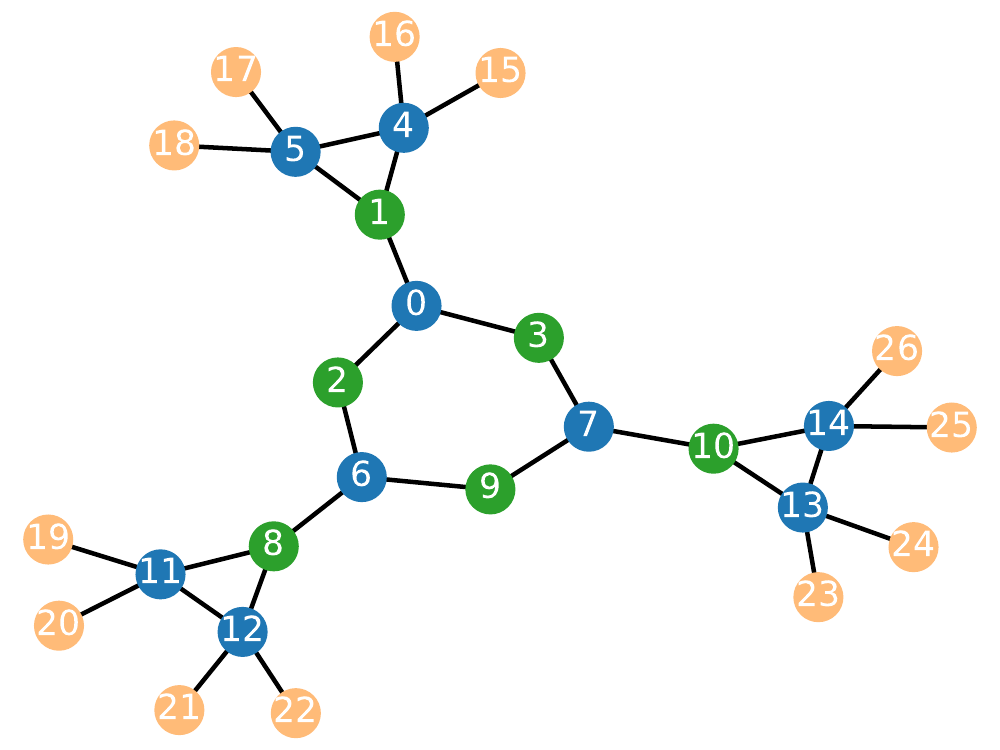}
\end{subfigure}
\caption{Triethylenemelamine.}
\label{fig:tmel}
\end{figure}
\begin{figure}
\centering
\begin{subfigure}{0.4\textwidth}
    \includegraphics[width = 1\linewidth]{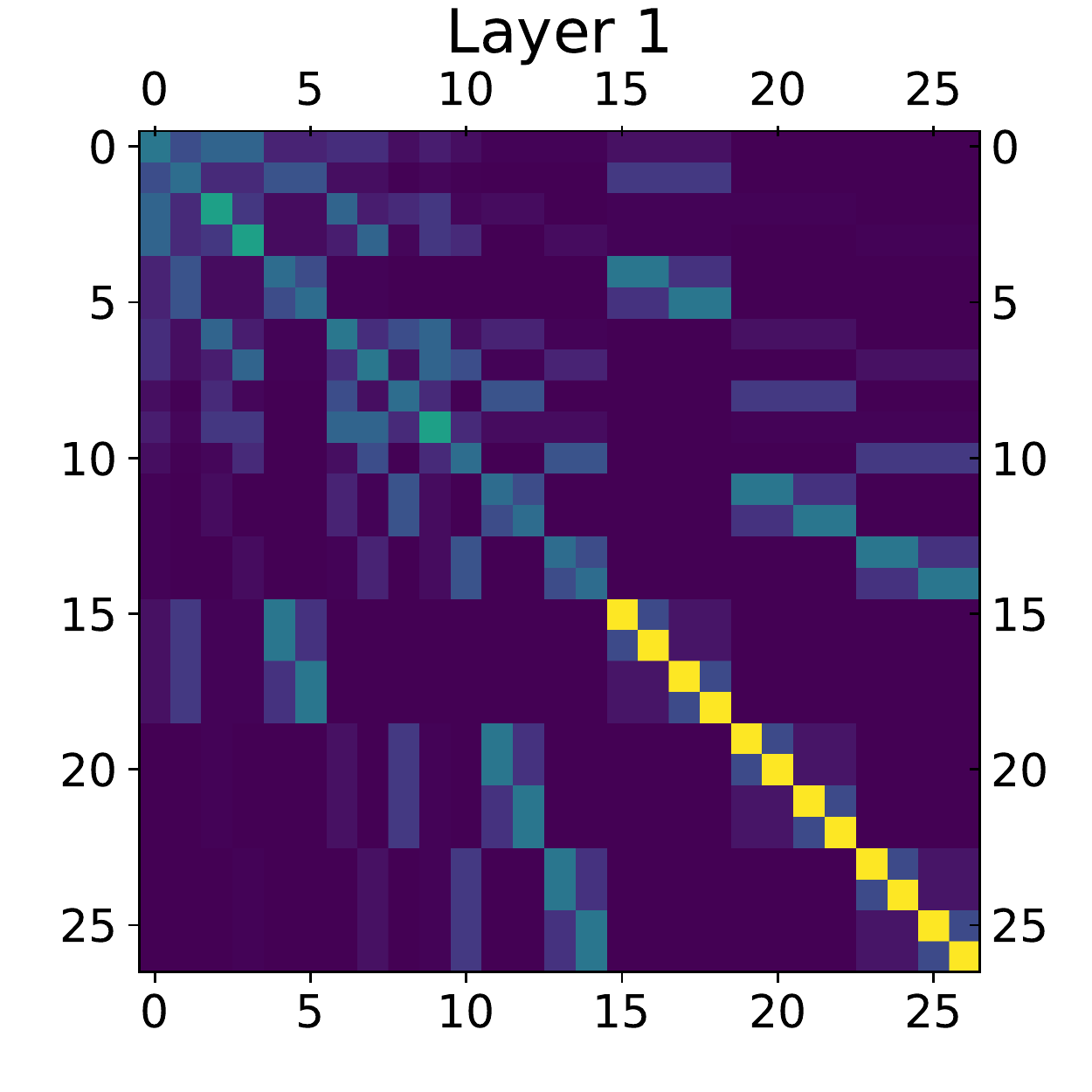}
\end{subfigure}
\begin{subfigure}{0.4\textwidth}
    \includegraphics[width = 1\linewidth]{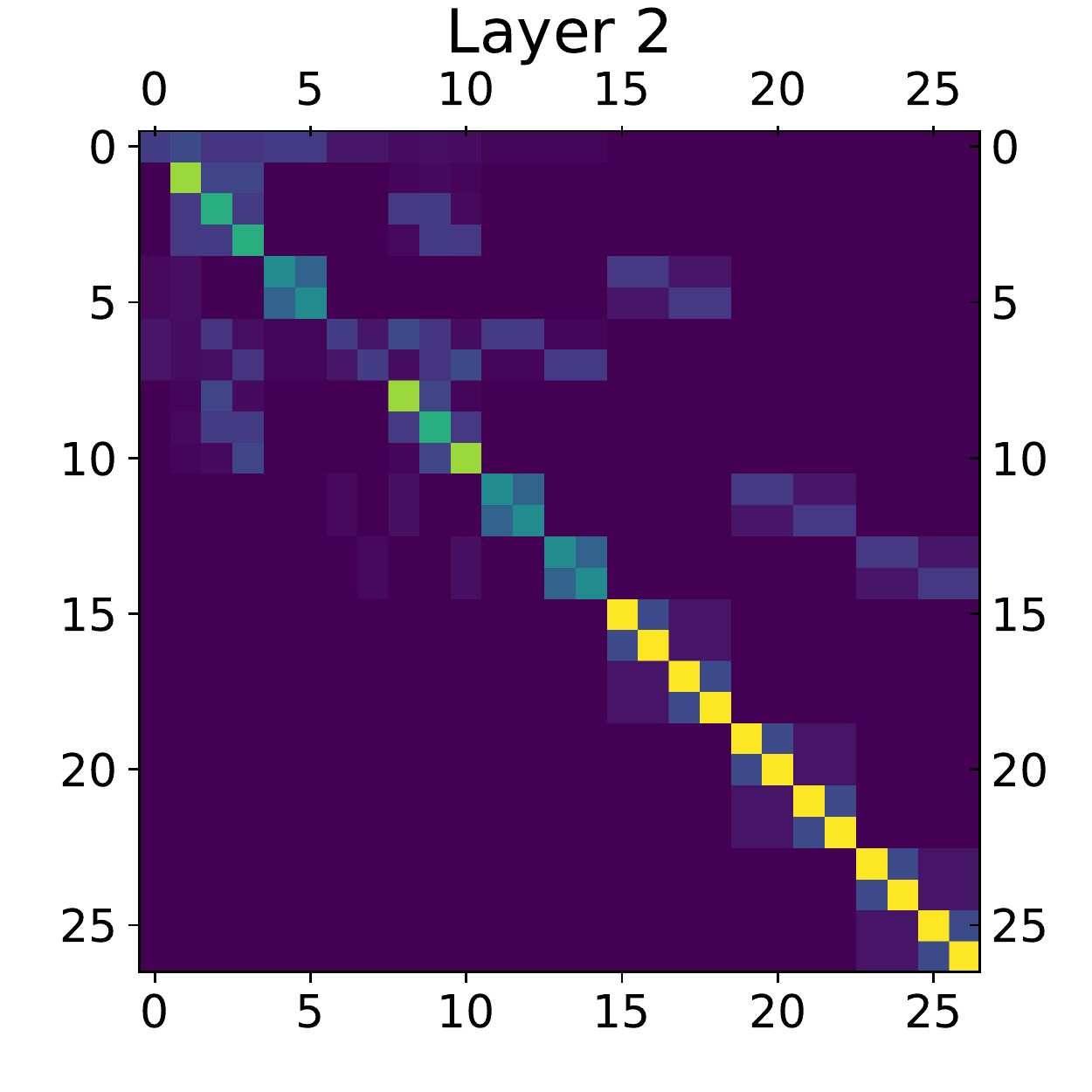}
\end{subfigure}

\begin{subfigure}{0.4\textwidth}
    \includegraphics[width = 1\linewidth]{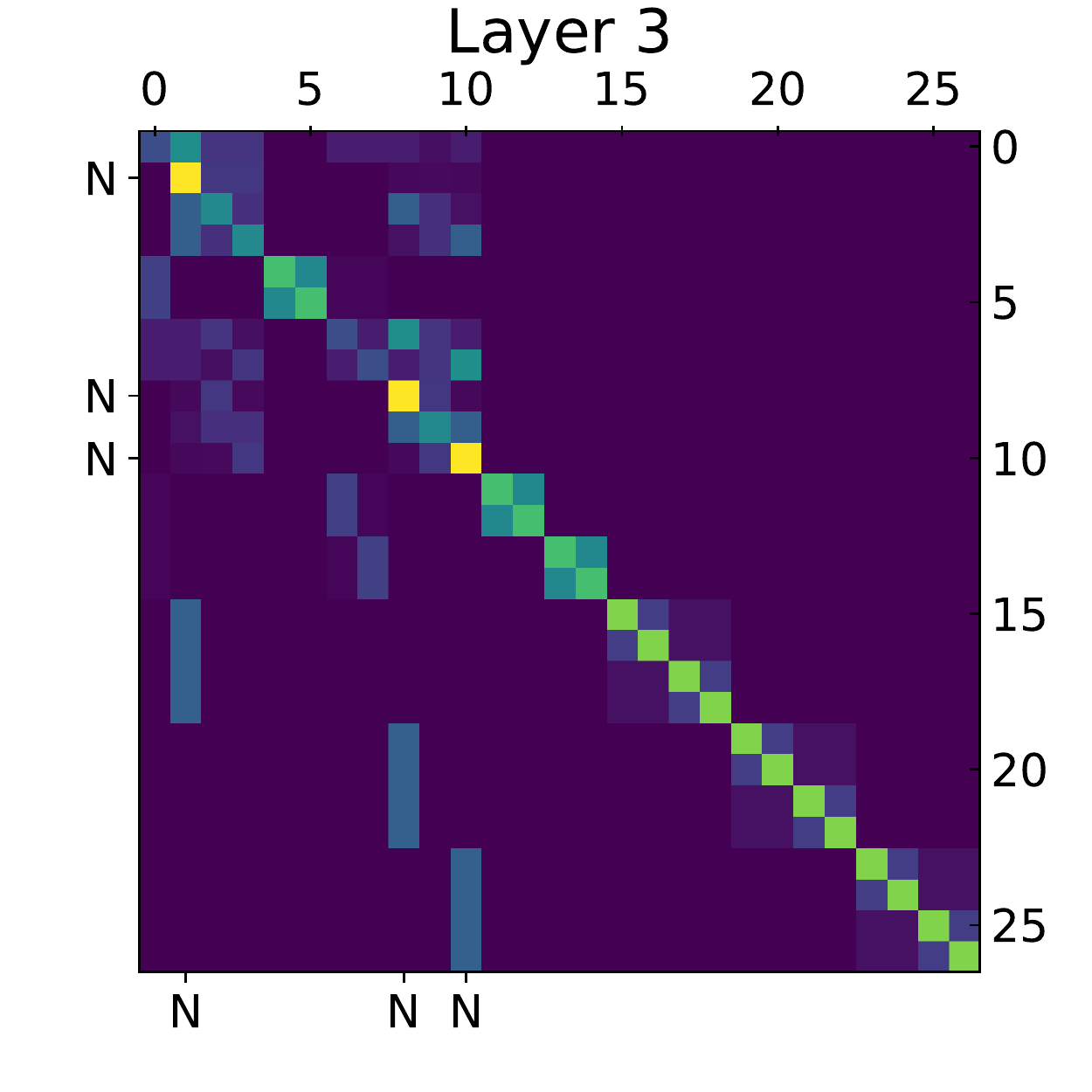}
\end{subfigure}

\caption{Attention scores averaged by heads for each layer of our trained model for the compound in Figure~\ref{fig:tmel}. \textit{Left}: diffusion kernel for~\ref{fig:tmel}.  \textit{Right}: node $1$, $8$, and $10$ (N) are salient.}
\label{fig:attentions_tmel}
\end{figure}

\clearpage
\section{Remark on Generalized Self-attention}

\cite{tsai2019transformer} shown that it is actually possible to rewrite self-attention as a kernel smoothing. Kernel smoothing is a solution to the problem of estimating the value of an unknown function for a new input, given the values of the function for some inputs. It consists in outputting a weighted average of the known values, with weights reflecting the similarity between the new sample and the known samples. If we consider the self-attention output for one member of the feature map:
\begin{align*}
        \text{Attention}(Q, K, V)_i & = \sum_{j=1}^n  \frac{ \exp\left( \frac{Q_iK^\top_j}{\sqrt{d_{out}}} \right)}{\sum_{j'=1}^n \exp \left( \frac{Q_iK^\top_{j'}}{\sqrt{d_{out}}}\right) } V_j \in \Real^{d_{\text{out}}} \\
        & = \sum_{j=1}^n  \frac{ k(i, j)}{\sum_{j'=1}^n k(i, j) } v(X_j) \in \Real^{d_{\text{out}}},
\end{align*}
with $Q_i = W_QX_i$, $K_j = W_KX_j$, $v(X_j) = W_VX_j$. When $k$ is a non-negative kernel function between a pair $(i, j)$, we indeed get a kernel smoothing. In the vanilla transformer~\citep{vaswani2017}, we have:
\begin{align*}
k(i, j) = \exp\left( \frac{Q_iK^\top_j}{\sqrt{d_{out}}} \right).
\end{align*} 
However, different choices for $k$ suggest different transformers architectures. In this chapter,
our similarity function was:
\begin{align*}
k(i, j) = \exp\left( \frac{Q_iK^\top_j}{\sqrt{d_{out}}} \right) \times K_r(i, j),
\end{align*}
with a first term based on the nodes contents similarity, and a second term based on the nodes structural similarity. \cite{tsai2019transformer} show that splitting the attention into such terms can be related to relative positional encoding~\cite{shaw2018selfattention}. The method introduced in the previous chapter can also be derived from this approach. Indeed, we replaced the attention weights (after the softmax) by the corresponding entries in the optimal transport plan $\mathbf{P}$ (which are normalized by design). More generally, inductive bias can be injected into the transformer architecture via an appropriate choice of the similarity function $k(i, j)$.

\section{Conclusion}
\label{sec:conclusion}
In this work, we show that using a transformer to aggregate local substructures with appropriate position encoding, GraphiT, is a very effective strategy for graph representation, and that attention scores allow simple model interpretation. Here, the role of inductive bias is twofold: first, the choice of the kernel for comparing the position of the nodes can depend on the properties we expect from the graphs. When long range interactions are expected, kernels on the graph with higher characteristic length should be more appropriate. Second, we also augment the node features with kernel methods for representing the extended neighborhood of a given node: this reflects the assumption that the meaning of a node depends on its neighborhood. On a different note, transformers for graphs also reflect a different inductive bias compared to GNNs: not only neighbors but all nodes should communicate.

One of the reasons for the success of transformers lies in their scaling properties: in language modeling for example, it has been shown that for an equal large amount of data, the transformer's loss follows a faster decreasing power law than Long-Short-Term-Memory networks when the size of the model increases~\citep{kaplan2020scaling}. 
We believe that an interesting future direction for this work would be to evaluate if GNNs and GraphiT also scale similarly in the context of large self-supervised pre-training, and can achieve a similar success as in natural language processing.





\newpage

\vspace*{0.3cm}
\begin{center}
   {\huge \textbf{Appendix}}
\end{center}
\vspace*{0.5cm}

\section{Experimental Details}
\label{sec:add_exp}
In this section, we provide implementation details and additional experimental results.

\subsection{General Details.}

\paragraph{Computing infrastructure.} Computations were done on a GPU cluster equipped with Tesla V100-16G and Tesla V100-32G. We have monitored precisely the entire computational cost of this research project (including preliminary experiments, designing early and final models, evaluating baselines, running final experiments), which was approximately 20k GPU hours. 

\paragraph{Position encoding and structure encoding parameters for all datasets.} 
$\gamma$ for p-step random walk kernels is fixed to 0.5 for both 2- and 3-step random walk kernels. $\beta$ for the diffusion kernel is fixed to 1.0. Regarding the structure encoding in node features, the dimension of Laplacian positional encoding is set to 8 for ZINC as suggested by~\cite{dwivedi2021generalization} and to 2 for graph classification datasets. The path size, the filter number and the bandwidth parameter of the unsupervised GCKN path features (used for structure encoding) are set to 8, 32 and 0.6 respectively for the ZINC dataset whereas the path size is set to 5 for graph classification datasets.

\paragraph{Other details.} For all instances of GraphiT, the hidden dimensions of the feed-forward hidden layer in each layer of the encoder are fixed to twice of the dimensions of the attention.

\subsection{Graph Classification Datasets}

Here, we provide experimental details for MUTAG, PROTEINS, PTC, NCI1 and Mutagenicity datasets.

\paragraph{Datasets.} These free datasets were collected by~\cite{KKMMN2016} for academic purpose. MUTAG consists in classifying molecules into mutagenetic or not. PROTEINS consists in classifying proteins into enzymes and non-enzymes. PTC consists in classifying molecules into carcinogenetic or not. NCI1 consists in classifying molecules into positive or negative to cell lung cancer. Mutagenicity consists in classifying compounds into mutagenetic or not.

\paragraph{Training splits.} For MUTAG, PROTEINS, PTC and NCI1, our outer splits correspond to the splits used in~\cite{xu2019powerful}. Our inner splits that divide their train splits into our train and validation splits are provided in our code. The error bars are computed as the standard deviation of the test accuracies across the 10 outer folds. On the other side, as the purpose of using Mutagenicity is model interpretation, we use simple train, validation and test splits respectively composed of $80\%$, $10\%$, and $10\%$ of the whole dataset. 

\paragraph{Hyperparameter choices.} For smaller datasets (MUTAG, PTC, PROTEINS), we use the search grids in Table~\ref{tab:gnn_small} for GNNs, the search grids in Table~\ref{tab:gckn} for GCKN and the search grids in Table~\ref{tab:transfo_small} for both GraphiT and the transformer model of~\cite{dwivedi2021generalization}. For bigger datasets (NCI1), we use the search grid in Table~\ref{tab:gnn_small} for GNNs, the search grids in Table~\ref{tab:gckn} for GCKN and the search grids in Table~\ref{tab:transfo_big} for both GraphiT and the transformer model of~\cite{dwivedi2021generalization}. The best models are selected based on the validation scores that are computed as the average of the validation scores over the last 50 epochs. Then, the selected model is retrained on train plus validation sets and the average of the test scores over the last 50 epochs is reported.
For Mutagenicity, we train a GraphiT model with 3 layers, 4 heads, 64 hidden dimensions. Initial learning rate was set to 1e-3, weight-decay was fixed to 1e-4 and structural information was encoded only via relative position encoding with the diffusion kernel.

\paragraph{Optimization.} We use the cross-entropy loss and Adam optimizer with batch size 128 for GNNs and 32 for both GraphiT and the transformer model of~\cite{dwivedi2021generalization}. For transformers, we do not observe significant improvement using warm-up strategies on these classification tasks. Thus, we simply follow a scheduler that halves the learning rate after every 50 epochs, as for GNNs. All models are trained for 300 epochs.

\begin{table}
    \small
    \centering
    \caption{Parameter grid for GNNs trained on MUTAG, PROTEINS, PTC, NCI1.}
    \begin{tabular}{l|c} \toprule
    {Parameter} & {Grid} \\
    \midrule
    {Layers} & [1, 2, 3] \\
    {Hidden dimension} & [64, 128, 256] \\
    {Global pooling} & [sum, max, mean]\\
    {Learning rate} & [0.1, 0.01, 0.001] \\
    {Weight decay} & [0.1, 0.01, 0.001] \\
    \bottomrule
    \end{tabular}
    \label{tab:gnn_small}
\end{table}

\begin{table}
    \small
    \centering
    \caption{Parameter grid for GCKN trained on MUTAG, PROTEINS, PTC, NCI1.}
    \begin{tabular}{l|c} \toprule
    {Parameter} & {Grid} \\
    \midrule
    {Path size} & [3, 5, 7] \\
    {Hidden dimension} & [16, 32, 64] \\
    {Sigma} & [0.5] \\
    {Global pooling} & [sum, max, mean]\\
    {Learning rate} & [0.001] \\
    {Weight decay} & [0.01, 0.001, 0.0001] \\
    \bottomrule
    \end{tabular}
    \label{tab:gckn}
\end{table}

\begin{table}
    \small
    \centering
    \caption{Parameter grid for transformers trained on MUTAG, PROTEINS, PTC.}
    \begin{tabular}{l|c} \toprule
    {Parameter} & {Grid} \\
    \midrule
    {Layers} & [1, 2, 3] \\
    {Hidden dimension} & [32, 64, 128] \\
    {Heads} & [1, 4, 8]\\
    {Learning rate} & [0.001] \\
    {Weight decay} & [0.01, 0.001, 0.0001] \\
    \bottomrule
    \end{tabular}
    \label{tab:transfo_small}
\end{table}

\begin{table}
    \small
    \centering
    \caption{Parameter grid for transformers trained on NCI1.}
    \begin{tabular}{l|c} \toprule
    {Parameter} & {Grid} \\
    \midrule
    {Layers} & [2, 3, 4] \\
    {Hidden dimension} & [64, 128, 256] \\
    {Heads} & [1, 4, 8]\\
    {Learning rate} & [0.001] \\
    {Weight decay} & [0.01, 0.001, 0.0001] \\
    \bottomrule
    \end{tabular}
    \label{tab:transfo_big}
\end{table}



\subsection{Graph Regression Dataset}
Here, we present experimental details and additional experimental results for ZINC dataset.

\paragraph{Dataset.} 

ZINC is a free dataset consisting of 250k compounds and the task is to predict the constrained solubility of the compounds, formulated as a graph property regression problem. This problem is crucial for designing generative models for molecules. Each molecular graph in ZINC has the type of heavy atoms as node features (represented by a binary vector using one-hot encoding) and the type of bonds between atoms as edge features. In order to focus on the exploitation of the topological structures of the graphs, we omitted the edge features in our experiments. They could possibly be incorporated into GraphiT through the approach of~\cite{dwivedi2021generalization} or considering different kernels on graph for each edge bond type, which is left for future work.

\paragraph{Training splits.} 

Following~\cite{dwivedi2021generalization}, we use a subset of the ZINC dataset composed of respectively 10k, 1k and 1k graphs for train, validation and test split. The error bars are computed as the standard deviation of test accuracies across 4 different runs.

\paragraph{Hyperparameter choice.}

In order to make fair comparisons with the most relevant work \cite{dwivedi2021generalization}, we use the same number of layers, number of heads and hidden dimensions, namely 10, 8 and 64. The number of model parameters for our transformers is only $2/3$ of that of~\cite{dwivedi2021generalization} as we use a symmetric variant for the attention score function in~\eqref{eq:pos_attention}. Regarding the GNNs, we use the values reported in~\cite{dwivedi2021generalization} for GIN, GAT and GCN. In addition, we use the same grid to train the MF model~\citep{duvenaud2015convolutional}, \textit{i.e.}, a learning rate starting at 0.001, two numbers of layers (4 and 16) and the hidden dimension equal to 145. Regarding the GCKN-subtree model, we use the same hidden dimension as GNNs, that is 145. We fix the bandwidth parameter to 0.5 and the path size is fixed to 10. We select the global pooling form max, mean, sum and weight decay from 0.001, 0.0001 and 1e-5, similar to the search grid used in~\cite{dwivedi2021generalization}.

\paragraph{Optimization.} 

Following~\cite{dwivedi2021generalization}, we use the L1 loss and the Adam optimization method with batch size of 128 for training. Regarding the scheduling of the learning rate, we observe that a standard warm-up strategy used in~\cite{vaswani2017} leads to more stable convergence for larger models (hidden dimension equal to 128). We therefore adopt this strategy throughout the experiments with a warm-up of 2000 iteration steps. 

\paragraph{Additional results.}
Besides the relatively small models presented in Table~\ref{tab:sup}, we also show in Table~\ref{tab:zinc} the performance of larger models with 128 hidden dimensions. Increasing the number of hidden dimensions generally results in boosting performance, especially for transformer variants combined with Laplacian positional encoding in node features. While sparse local positional encoding is shown to be more useful compared to the positional encoding with a diffusion kernel, we show here that a variant of diffusion kernel positional encoding outperforms all other sparse local positional encoding schemes. Since the skip-connection in transformers already assigns some weight to each node itself and the diagonal of our kernels on graphs also have important weight, we considered setting these diagonal to zero in order to remove the effect of the attention scores on self-loop. This modification leads to considerable performance improvement on longer range relative positional encoding schemes, especially for the transformer with diffusion kernel positional encoding combined with GCKN in node features, which results in best performance.

\begin{table}
    \centering
    \caption{Mean absolute error for regression problem ZINC. The results are computed from 4 different runs following~\citep{dwivedi2021generalization}.}
    \label{tab:zinc}
    \begin{tabular}{l|c|c|c}\toprule
        \multirow{2}{*}{Relative PE in attention score} & \multicolumn{3}{c}{Structural encoding in node features} \\
         & None & LapPE & GCKN (p=8,d=32) \\ \midrule
        \cite{dwivedi2021generalization} &  {0.3585$\pm$0.0144} & 0.3233$\pm$0.0126  & - \\ \midrule
        Transformers with d=64 \\ \midrule
        T & {0.6964$\pm$0.0067} & {0.5065$\pm$0.0025} & {0.2741$\pm$0.0112} \\
        {T + Adj PE} & 0.2432$\pm$0.0046 & \textbf{0.2020$\pm$0.0112} & {0.2106$\pm$0.0104} \\
        {T + 2-step RW kernel} & 0.2429$\pm$0.0096 & 0.2272$\pm$0.0303 & 0.2133$\pm$0.0161  \\
        {T + 3-step RW kernel} & 0.2435$\pm$0.0111 & 0.2099$\pm$0.0027 & 0.2028$\pm$0.0113  \\
        T + diffusion & 0.2548$\pm$0.0102 
        & 0.2209$\pm$0.0038 & 0.2180$\pm$0.0055 \\ \midrule
        Setting diagonal to zero, d=64 \\ \midrule
        T & 0.7041$\pm$0.0044 & 0.5123$\pm$0.0232 & 0.2735$\pm$0.0046 \\
        T + 2-step & 0.2427$\pm$0.0053 & 0.2108$\pm$0.0072 & 0.2176$\pm$0.0430 \\
        T + 3-step & 0.2451$\pm$0.0043 & 0.2054$\pm$0.0072 & 0.1986$\pm$0.0091 \\
        T + diffusion & 0.2468$\pm$0.0061 & 0.2027$\pm$0.0084 &  \textbf{0.1967$\pm$0.0023} \\
        \midrule
        Larger models with d=128 \\ \midrule
        T & 0.7044$\pm$0.0061 & 0.4965$\pm$0.0338 & 0.2776$\pm$0.0084 \\
        {T + Adj PE} & 0.2310$\pm$0.0072 & \textbf{0.1911$\pm$0.0094} & 0.2055$\pm$0.0062 \\
        {T + 2-step RW kernel} & 0.2759$\pm$0.0735 
        & 0.2005$\pm$0.0064 & 0.2136$\pm$0.0062 \\
        {T + 3-step RW kernel} & 0.2501$\pm$0.0328 
        & 0.2044$\pm$0.0058 & 0.2128$\pm$0.0069 \\
        T + diffusion & 0.2371$\pm$0.0040 
        & 0.2116$\pm$0.0103 & 0.2238$\pm$0.0068  \\ \midrule
        Setting diagonal to zero, d=128 \\ \midrule
        T & 0.7044$\pm$0.0061 & 0.4964$\pm$0.0340 & 0.2721$\pm$0.0099 \\
        T + 2-step & 0.2348$\pm$0.0010 & 0.2012$\pm$0.0038 & 0.2031$\pm$0.0083 \\
        T + 3-step & 0.2402$\pm$0.0056 & 0.2031$\pm$0.0076 & 0.2019$\pm$0.0084 \\
        T + diffusion & 0.2351$\pm$0.0121 & \textbf{0.1985$\pm$0.0032} & 0.2019$\pm$0.0018 \\ 
        \bottomrule
    \end{tabular}
\end{table}

\cleardoublepage

\chapter{Sample Screening in Empirical Risk Minimization}
\label{chapt:4_screening}
In this chapter, we design simple screening tests to automatically discard data samples in empirical risk minimization without losing optimization guarantees. We derive loss functions that produce dual objectives with a sparse solution. We also show how to regularize convex losses to ensure such a dual sparsity-inducing property, and propose a general method to design screening tests for classification or regression based on ellipsoidal approximations of the optimal set. In addition to producing computational gains, our approach also allows us to compress a dataset into a subset of representative points.

\paragraph{This chapter is based on the following material:} 

\begin{center}
\begin{tcolorbox}[width=\linewidth, sharp corners=all, colback=white!95!black]
    \begin{itemize}
        \item[-] G. Mialon, A. d'Aspremont, J. Mairal. "Screening Data Points in Empirical Risk Minimization via Ellipsoidal Regions and Safe Loss Functions" (AISTATS, 2020).
    \end{itemize}
\end{tcolorbox}
\end{center}

\section{Introduction}
\label{sec:introduction}
Let us consider a collection of $n$ pairs $(a_i,b_i)_{i=1,\ldots,n}$, where each vector~$a_i$ in~$\Real^p$ describes a data point and~$b_i$ is its label.
For regression, $b_i$ is real-valued, and we address the convex optimization problem
\begin{equation}
   \min_{x \in \Real^p, t \in \Real^n} f(t) + \lambda R(x) \st   t = A x - b, \label{eq:regression}\tag{${\mathcal P}_1$}
\end{equation}
where $A = [a_1,\ldots,a_n]^\top$ in $\Real^{n \times p}$ carries the feature vectors, and $b=[b_1,\ldots,b_n]$ carries the labels. The function $f$ is a convex loss and measures the fit between data points and the model, and $R$ is a convex regularization function.
For classification, the scalars~$b_i$ are binary labels in $\{-1,+1\}$, and we consider instead of~\eqref{eq:regression} margin-based loss functions, where our problem becomes
\begin{equation}
   \min_{x \in \Real^p, t \in \Real^n} f(t) + \lambda R(x) \st   t=\diag(b)A x, \label{eq:classification}\tag{${\mathcal P}_2$}
\end{equation}
The above problems cover a wide variety of formulations such as
Lasso~\citep{tibshirani1996regression} and its
variants~\citep{zou2005regularization}, logistic regression, support vector
machines \citep{friedman2001elements}, and many more. When $R$ is the $\ell_1$-norm, the solution is encouraged to be
sparse~\citep{bach2012optimization}, which can be exploited to speed-up 
optimization procedures. 

A recent line of work has focused on screening tests that seek to automatically discard variables 
before running an optimization algorithm.
For example,~\cite{safe} 
derive a screening rule from Karush-Kuhn-Tucker conditions,
noting that if a dual optimal variable satisfies a given inequality
constraint, the corresponding primal optimal variable must be zero. 
Checking this condition on a set that is known to contain the
optimal dual variable ensures that the corresponding  primal variable can be safely removed.
This prunes
out irrelevant features {\em before} solving the problem. This is called a \textit{safe} rule if it discards variables that are guaranteed to be useless; but it is possible to relax the ``safety'' of the rules~\citep{tibshirani2012strong} without losing too much accuracy in practice. The seminal approach by~\cite{safe} has led to a series of works proposing refined tests~\citep{ellipsoids,wang2013lasso} or dynamic rules~\citep{safer_rules} for the Lasso, where screening is performed as the optimization algorithm proceeds, significantly speeding up convergence. Other papers have proposed screening rules for sparse logistic regression~\citep{reglog_screening} or other linear models.

Whereas the goal of these previous methods is to remove \emph{variables}, our
goal is to design screening tests for \emph{data points} in order to
remove observations that do not contribute to the final model.  The
problem is important when there is a large amount of ``trivial''
observations that are useless for learning. This typically occurs in {\em tracking or 
anomaly detection} applications, where a classical heuristic seeks to mine the data
to find difficult examples~\citep{felzenszwalb2009object}. A few of such screening tests for data points have been proposed in the
literature. Some are problem-specific (\textit{e.g.}~\cite{ogawa2014safe} for SVM), others are making strong assumptions on the objective. For
instance, the most general rule of~\cite{double_screening} for classification
requires strong convexity and the ability to compute a duality gap in closed
form. The goal of our work is to provide a more generic approach for screening data
samples, both for regression and classification. Such screening tests may be designed for loss functions that induce
a sparse dual solution. We describe this class of loss functions and investigate a regularization mechanism that ensures that the loss enjoys such a property.

\paragraph{Contributions.} Our contributions can be summarized as follows:

\begin{itemize}
    \item[-] We revisit the Ellipsoid method~\citep{ellipsoids_survey} to design screening test for samples, when the objective is convex and its dual admits a sparse solution.
    \item[-]  We propose a new regularization mechanism to design regression or classification losses that 
   induce sparsity in the dual. This allows us to recover existing loss functions and to discover new ones with sparsity-inducing properties in the dual.  
    \item[-] Originally designed for linear models, we extend our screening rules to kernel methods. Unlike the existing literature, our method also works for non strongly convex objectives.
    \item[-] We demonstrate the benefits of our screening rules in various numerical experiments on large-scale classification problems and regression\footnotemark[1]. 
\end{itemize}

\footnotetext[1]{Our code is available at \url{https://github.com/GregoireMialon/screening\_samples}.}

\section{Preliminaries}
\label{sec:tools}
We now present the key concepts used in this chapter.

\subsection{Fenchel Conjugacy}

\begin{definition}[Fenchel conjugate]
   Let $f: \Real^p \to \Real \cup \{-\infty,+\infty\}$ be an extended real-valued function. The Fenchel conjugate of $f$ is defined by
\begin{equation*}
    f^*(y) = \underset{t \in \mathbb{R}^p}{\max} \langle t, y \rangle - f(t).
\end{equation*}
\end{definition}
The biconjugate of $f$ is naturally the conjugate of $f^*$ and is denoted by $f^{**}$. The Fenchel-Moreau theorem~\citep{Hiri96} states that if $f$ is proper, lower semi-continuous and convex, then it is equal to its biconjugate $f^{**}$. Finally, Fenchel-Young's inequality gives for all pair $(t,y)$
\begin{equation*}
    f(t) + f^*(y) \geq \langle t, y \rangle,
\end{equation*}
with an equality case iff $y \in \partial f(t)$.

Suppose now that for such a function~$f$, we add a convex term $\Omega$ to $f^*$ in the definition of the biconjugate. We get a modified biconjugate $f_{\mu}$, written
\begin{align*}
     f_{\mu}(t) & = \underset{y \in \mathbb{R}^p}{\text{max }} \langle y, t \rangle - f^*(y) - \mu \Omega (y) \\
     & = \underset{y \in \mathbb{R}^p}{\text{max}} \langle y, t \rangle + \underset{z \in \mathbb{R}^p}{\text{min}} \left\{- \langle z, y \rangle + f(z) \right\} - \mu \Omega (y).
\end{align*}
The inner objective function is continuous, concave in~$y$ and convex in $z$, such that we can switch min and max according to Von Neumann's minimax theorem to get
\begin{align*}
     f_{\mu}(t) & = \underset{z \in \mathbb{R}^p}{\text{min }} f(z) + \underset{y \in \mathbb{R}^p}{\text{max }} \left\{ \langle t - z, y \rangle - \mu \Omega (y) \right\}\\
     & = \underset{z \in \mathbb{R}^p}{\text{min }} f(z) + \mu \Omega^*\left(\frac{t - z}{\mu}\right).
\end{align*}

\begin{definition}[Infimum convolution]
$f_{\mu}$ is called the infimum convolution of $f$ and $\Omega^*$, which may be written as $f ~\square~ \Omega^*$. 
\end{definition}

Note that $f_{\mu}$ is convex as the minimum of a convex function in $(t, z)$. We recover the Moreau-Yosida smoothing~\citep{moreau,yosida} and  its generalization when $\Omega$ is respectively a quadratic term or a strongly-convex term~\citep{Nest03}.  

\subsection{Empirical Risk Minimization and Duality}\label{subsec:dual}

Let us consider the convex ERM problem
\begin{equation}
\label{eq:erm}
   \min_{x \in \Real^p} P(x) = \frac{1}{n} \sum_{i=1}^{n} f_i(a_i^\top x) + \lambda R(x),
\end{equation}
which covers both~(\ref{eq:regression}) and~(\ref{eq:classification}) by using the appropriate definition of function $f_i$.
We consider the dual problem (obtained from Lagrange duality)
\begin{equation}
\label{eq:dual}
    \max_{\nu \in \Real^n} D(\nu) = \frac{1}{n} \sum_{i=1}^n - f_i^*(\nu_i) - \lambda R^*\left(-\frac{A^\top \nu}{\lambda n}\right).
\end{equation}
We always have $P(x) \geq D(\nu)$. Since there exists a pair $(x,t)$ such that $Ax=t$ (Slater's conditions), we have $P(x^\star) = D(\nu^\star)$ and $x^\star = -\frac{A^\top \nu^\star}{\lambda n}$ at the optimum.

\subsection{Safe Loss Functions and Sparsity in the Dual of ERM Formulations} A key feature of our losses is to encourage sparsity of dual solutions, which typically emerge from loss functions with a flat region. We call such functions ``safe losses'' since they will allow us to design safe screening tests.

\begin{definition}[Safe loss function]
\label{def:margin_loss}
Let $f: \Real \to \Real$ be a continuous convex loss function such that $\inf_{t \in \mathbb{R}} f(t) = 0$. We say that $f$ is a safe loss if there exists a non-singleton and non-empty interval $\mathcal{I} \subset \mathbb{R}$ such that
\begin{equation*}
    t \in \mathcal{I} \implies f(t) = 0.
\end{equation*} 
\end{definition}

\begin{lemma}[Dual sparsity]\label{lemma:margin}
\label{lemma:margin_sparsity}
   Consider the problem~(\ref{eq:erm}) where $R$ is a convex penalty. Denoting by $x^\star$ and $\nu^\star$ the optimal primal and dual variables respectively, we have for all $i =1,\ldots, n$,
   $$ \nu^\star_i \in \partial f_i(a_i^\top x^\star).$$
\end{lemma}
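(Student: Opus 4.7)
The plan is to derive the subdifferential inclusion directly from the strong duality equality $P(x^\star)=D(\nu^\star)$ combined with Fenchel--Young's inequality applied term-by-term. Strong duality is already guaranteed by Slater's condition as noted in Section~\ref{subsec:dual}, so no further assumption on $R$ is needed beyond convexity and the existence of a feasible pair.

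First, I would write out $P(x^\star)-D(\nu^\star)=0$ in the explicit form
\begin{equation*}
    \frac{1}{n}\sum_{i=1}^n \bigl[f_i(a_i^\top x^\star) + f_i^*(\nu_i^\star)\bigr] + \lambda\bigl[R(x^\star) + R^*\bigl(-\tfrac{A^\top \nu^\star}{\lambda n}\bigr)\bigr] = 0.
\end{equation*}
Next, I would invoke Fenchel--Young on each term: $f_i(a_i^\top x^\star)+f_i^*(\nu_i^\star) \geq \nu_i^\star\, a_i^\top x^\star$ for every $i$, and $R(x^\star)+R^*(-A^\top\nu^\star/(\lambda n)) \geq \langle x^\star,-A^\top\nu^\star/(\lambda n)\rangle$. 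Summing these and multiplying by the appropriate constants, the right-hand sides telescope: the aggregate $\frac{1}{n}(A^\top\nu^\star)^\top x^\star$ coming from the data terms is exactly cancelled by $-\frac{1}{n}(A^\top\nu^\star)^\top x^\star$ coming from the regularizer term, producing a lower bound of $0$.

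Since this lower bound is attained (the left-hand side equals zero by strong duality), each individual Fenchel--Young inequality must in fact be an equality. Invoking the equality case of Fenchel--Young, which characterizes tightness by a subdifferential inclusion, we conclude $\nu_i^\star \in \partial f_i(a_i^\top x^\star)$ for every $i$, as desired. The only mild subtlety is the cancellation step, which is where the primal-dual stationarity $x^\star = -A^\top\nu^\star/(\lambda n)$ (equivalently, the equality case of Fenchel--Young for $R$) is being used; this is exactly the fact recalled at the end of Section~\ref{subsec:dual}, so no additional work is required.
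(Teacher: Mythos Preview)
Your proposal is correct and follows essentially the same approach as the paper: write out $P(x^\star)-D(\nu^\star)=0$, apply Fenchel--Young to each $f_i$ term and to the $R$ term, observe that the resulting lower bounds cancel algebraically to give zero, and conclude that each inequality is an equality, hence $\nu_i^\star\in\partial f_i(a_i^\top x^\star)$. One small expositional note: the cancellation $\frac{1}{n}(A^\top\nu^\star)^\top x^\star - \frac{1}{n}(A^\top\nu^\star)^\top x^\star=0$ is purely algebraic and does not require the stationarity relation $x^\star=-A^\top\nu^\star/(\lambda n)$ as an input; rather, that relation (equivalently, equality in Fenchel--Young for $R$) is itself a consequence of the same argument.
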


The proof can be found in Appendix~\ref{sec:add_proofs}.

\begin{remark}[Safe loss and dual sparsity]
A consequence of this lemma is that for both classification and
regression, the sparsity of the dual solution is related to loss functions that have ``flat'' regions---that is, such that $0 \in \partial f_i'(t)$. This is the case for safe loss functions defined above.
\end{remark}

The relation between flat losses and sparse dual solutions is classical, see~\cite{steinwart2004sparseness,blondel19}.

\section{Safe Rules for Screening Samples}
\label{sec:safe_lasso}
In this section, we derive screening rules in the spirit of SAFE~\citep{safe} to select data points in regression or classification problems with safe losses. 

\subsection{Principle of SAFE Rules for Data Points}

We recall that our goal is to safely delete data points prior to optimization, that is, we want to train the model on a subset of the original dataset while still getting the same optimal solution as a model trained on the whole dataset. 
This amounts to identifying beforehand which dual variables are zero at the optimum. Indeed, as discussed in Section~\ref{subsec:dual}, the optimal primal variable $x^\star = - \frac{A^\top \nu^\star}{\lambda n}$ only relies on non-zero entries of $\nu^\star$. 
To that effect, we make the following assumption:
\begin{assumption}[Safe loss assumption]\label{assum:safe}
   We consider problem~(\ref{eq:erm}), where each $f_i$ is a safe loss function. Specifically, we assume that $f_i(a_i^\top x) = \phi( a_i^\top x - b_i)$ for regression, or $f_i(a_i^\top x) = \phi( b_i a_i^\top x)$ for classification, where $\phi$ satisfies Definition~\ref{def:margin_loss} on some interval $\Ical$. For simplicity, we assume that there exists $\mu > 0$ such that $\Ical = [-\mu,\mu]$ for regression losses and $\Ical = [\mu, +\infty)$ for classification, which covers most useful cases.
\end{assumption}
We may now state the basic safe rule for screening.
\begin{lemma}[SAFE rule]
   Under Assumption~\ref{assum:safe}, consider a subset $\Xcal$ containing the optimal solution~$x^\star$. 
   If, for a given data point $(a_i, b_i)$, $ a_i^\top x  - b_i \in \mathring{\mathcal{I}}$ for all $x$ in $\Xcal$, (resp. $b_i  a_i^\top x  \in \mathring{\mathcal{I}}$), where $\mathring{\mathcal{I}}$ is the interior of $\mathcal{I}$,
   then this data point can be discarded from the dataset.
\end{lemma}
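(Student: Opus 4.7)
The plan is to combine the sparsity property of the dual variables associated with safe losses (Lemma~\ref{lemma:margin}) with a smoothness argument inside the interior of the flat region $\mathcal{I}$. Concretely, I would proceed in three steps. First, I would argue that if $t^\star := a_i^\top x^\star - b_i$ lies in $\mathring{\mathcal{I}}$ (regression case), then there is an open neighborhood $V$ of $t^\star$ on which $\phi$ is identically zero. Since $\phi$ is convex and equal to a constant on an open neighborhood of $t^\star$, it is differentiable at $t^\star$ with $\phi'(t^\star) = 0$, so $\partial \phi(t^\star) = \{0\}$. By the chain rule for subdifferentials applied to $f_i(\,\cdot\,) = \phi(\,\cdot\, - b_i)$, we conclude that $\partial f_i(a_i^\top x^\star) = \{0\}$. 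The classification case is identical after noting that $b_i \in \{-1,+1\}$ and writing $f_i(t) = \phi(b_i t)$, which gives $\partial f_i(a_i^\top x^\star) = b_i \,\partial \phi(b_i a_i^\top x^\star) = \{0\}$ whenever $b_i a_i^\top x^\star \in \mathring{\mathcal{I}}$.

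Second, I would invoke Lemma~\ref{lemma:margin_sparsity}, which ensures that at the primal--dual optimum $(x^\star, \nu^\star)$ we have $\nu_i^\star \in \partial f_i(a_i^\top x^\star)$. Combining with the previous step yields $\nu_i^\star = 0$, i.e.\ the $i$-th dual coordinate vanishes at the optimum. Since the hypothesis of the lemma is assumed to hold uniformly for every $x \in \Xcal$ and $x^\star \in \Xcal$, this conclusion is drawn only from information available before solving~(\ref{eq:erm}).

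Third, I would translate the conclusion $\nu_i^\star = 0$ into the claim that the data point can be discarded. Writing the dual problem~(\ref{eq:dual}), the optimal value depends on $a_i$ only through the product $A^\top \nu$, and removing sample $i$ while setting $\nu_i = 0$ yields exactly the dual of the reduced ERM obtained by deleting $(a_i, b_i)$. The vector $\nu^\star_{-i}$ is feasible for this reduced dual and attains the same value $D(\nu^\star)$; by weak duality it is therefore dual-optimal for the reduced problem, and the associated primal optimum is $x^\star = -A^\top \nu^\star/(\lambda n) = -A_{-i}^\top \nu^\star_{-i}/(\lambda n)$, unchanged. Equivalently, in the primal we have $f_i(a_i^\top x^\star) = 0$ and $0 \in \partial f_i(a_i^\top x^\star)$, so the $i$-th term contributes neither to the objective value nor to a subgradient of $P$ at $x^\star$, preserving the optimality conditions after removal.

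The only subtlety I expect is the third step: one must be precise about the normalization convention (whether $1/n$ is kept or replaced by $1/(n-1)$ after removal, or whether $\lambda$ is rescaled). The cleanest route is to argue directly at the level of the KKT/subgradient conditions of~(\ref{eq:erm}), showing that they remain satisfied at $x^\star$ after dropping a term whose value and subdifferential at $x^\star$ both contain $0$. The first two steps are essentially immediate consequences of Assumption~\ref{assum:safe} and Lemma~\ref{lemma:margin_sparsity}.
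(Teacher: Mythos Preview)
Your proposal is correct and follows essentially the same approach as the paper: use the flatness of $\phi$ on $\mathring{\mathcal{I}}$ to get differentiability with zero derivative at $a_i^\top x^\star$, invoke Lemma~\ref{lemma:margin_sparsity} to conclude $\nu_i^\star = 0$, and then use the primal--dual link $x^\star = -A^\top\nu^\star/(\lambda n)$ to justify discarding the sample. The paper's proof is a single sentence that compresses your first two steps and leaves the third implicit (it is stated just before the lemma that $x^\star$ depends only on the nonzero entries of $\nu^\star$); your version is more explicit, and your caveat about the $1/n$ normalization is a genuine subtlety that the paper does not address.
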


\begin{proof}
   From the definition of safe loss functions, $f_i$ is differentiable at $a_i^\top x^\star$ with $\nu_i^\star = f_i'(a_i^\top x) = 0$.
\end{proof}

We see now how the safe screening rule can be interpreted in terms of discrepancy between the model prediction $a_i^\top x$ and the true label $b_i$. If, for a set $\mathcal{X}$ containing the optimal solution $x^*$ and a given data point $(a_i, b_i)$, the prediction always lies in~$\Iint$, then the data point can be discarded from the dataset. The data point screening procedure therefore consists in \textit{maximizing linear forms}, $a_i^\top x - b_i$ and $-a_i^\top x + b_i$ in regression (resp. minimizing $b_i a_i^\top x$ in classification), over a set $\mathcal{X}$ containing $x^*$ and check whether they are lower (resp. greater) than the threshold $\mu$. The smaller $\mathcal{X}$, the lower the maximum (resp. the higher the minimum) hence the more data points we can hope to safely delete. Finding a good test region~$\Xcal$ is critical however. We show how to do this in the next section.

%
\subsection{Building the Test Region \texorpdfstring{$\mathcal{X}$}{X}}

Screening rules aim at sparing computing resources, testing a data point should therefore be easy. As in \cite{safe} for screening variables, if $\mathcal{X}$ is an ellipsoid, the optimization problem detailed above admits a closed-form solution. 
Furthermore, it is possible to get a smaller set $\mathcal{X}$ by adding a first order optimality condition with a subgradient $g$ of the objective evaluated in the center $z$ of this ellipsoid. This linear constraint cuts the final ellipsoid roughly in half thus reducing its volume.

\begin{lemma}[Closed-form screening test]\label{lemma:test}
    Consider the optimization problem
    \BEQ
    \BA{ll}
    \mbox{maximize} & a_i^\top x - b_i\\
    \mbox{subject to} &  (x - z)^\top E^{-1} (x - z) \leq 1 \\
    & g^T(x - z) \leq 0
    \EA\EEQ
    in the variable $x$ in $\mathbb{R}^p$ with $E$ defining an ellipsoid with center $z$ and $g$ is in $\mathbb{R}^p$. Then the maximum is
    \begin{equation*}
    \begin{cases}
    a_i^\top z + (a_i^\top E a_i)^{\frac{1}{2}} - b_i \text{ if } g^\top E a_i < 0 \\
    a_i^\top \left( z + \frac{1}{2 \gamma} E ( a_i - \nu g ) \right) - b_i \text{ otherwise},
    \end{cases}
    \end{equation*}
    with $ \nu = \frac{g^\top E a_i}{g^\top E g}$ and $\gamma = \left(\frac{1}{2} (a_i - \nu g)^\top E(a_i - \nu g)\right)^{\frac{1}{2}}$.
    \label{test}
\end{lemma}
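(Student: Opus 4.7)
The plan is to solve a quadratically-constrained linear program by KKT analysis, splitting into two cases according to whether the half-space constraint is active at the optimum. First I would change variables to $y=x-z$ so the problem becomes
\[
\max\{a_i^\top y : y^\top E^{-1} y \le 1,\ g^\top y \le 0\},
\]
and the value of the original objective is $a_i^\top z - b_i$ plus this optimal value. Note $E$ is positive definite so the feasible set is a half of an ellipsoid and the constraint qualifications for KKT hold.

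For Case 1 (the half-space constraint is inactive), I would ignore the linear constraint and use a Cauchy–Schwarz argument in the $E$-norm $\|\cdot\|_E = \sqrt{\langle \cdot, E \cdot\rangle}$: writing $a_i^\top y = \langle E^{1/2} a_i, E^{-1/2} y\rangle$ and applying Cauchy–Schwarz with $\|E^{-1/2}y\|_2\le 1$ gives the upper bound $\sqrt{a_i^\top E a_i}$, attained at $y^\star = E a_i/\sqrt{a_i^\top E a_i}$. Plugging this $y^\star$ into $g^\top y$ shows the linear constraint is slack precisely when $g^\top E a_i < 0$, which yields the first branch of the formula.

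For Case 2 ($g^\top E a_i \ge 0$), the linear constraint is active at the maximum, so I would impose $g^\top y = 0$ and write the Lagrangian
\[
L(y,\lambda,\mu) = a_i^\top y - \lambda(y^\top E^{-1}y - 1) - \mu\, g^\top y,
\]
with $\lambda>0$. Setting $\nabla_y L = 0$ gives $y = \tfrac{1}{2\lambda} E(a_i - \mu g)$; enforcing $g^\top y = 0$ yields $\mu = (g^\top E a_i)/(g^\top E g)$, which matches the $\nu$ in the statement; finally the ellipsoid equality $y^\top E^{-1}y = 1$ determines $\lambda$ up to the scalar $\gamma$ in the lemma, and substituting back gives the second branch $a_i^\top\!\bigl(z + \tfrac{1}{2\gamma} E(a_i-\nu g)\bigr) - b_i$.

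The main obstacle is bookkeeping the normalization constant $\gamma$ so that the stated closed form is consistent with the constraint $y^\top E^{-1} y = 1$; this reduces to a one-variable computation but is the only place where a sign or factor-of-two error could creep in. A secondary care point is to verify, in Case 2, that $\lambda>0$ so the KKT multiplier is admissible and the stationary point is indeed a maximizer (this follows from the fact that $(a_i-\nu g)^\top E(a_i-\nu g)>0$ whenever $a_i$ and $g$ are not colinear modulo $E$, a degenerate case that can be treated separately).
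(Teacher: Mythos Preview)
Your proposal is correct and follows essentially the same Lagrangian/KKT route as the paper's proof, with the same case split on the sign of $g^\top E a_i$ and the same stationarity computation yielding $y=\tfrac{1}{2\lambda}E(a_i-\mu g)$. Your presentation is slightly cleaner thanks to the change of variables $y=x-z$ and the direct Cauchy--Schwarz argument in Case~1, but the substance is identical.
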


The proof can be found in Appendix~\ref{sec:add_proofs} and it is easy to modify it for minimizing $b_i a_i^\top x$. We can obtain both $E$ and $z$ by using a few steps of \textit{the ellipsoid method} \citep{Nemi79,ellipsoids_survey}. This first-order optimization method starts from an initial ellipsoid containing the solution~$x^*$ to a given convex problem (here~\ref{eq:erm}) . It iteratively computes a subgradient in the center of the current ellipsoid, selects the half-ellipsoid containing $x^*$, and computes the ellipsoid with minimal volume containing the previous half-ellipsoid before starting all over again. Such a method, presented in Algorithm~\ref{algo:ell_method}, performs closed-form updates of the ellipsoid. It requires $O(p^2\log(\frac{RL}{\epsilon}))$ iterations for a precision $\epsilon$ starting from a ball of radius $R$ with the Lipschitz bound $L$ on the loss, thus making it impractical for accurately solving high-dimensional problems. Finally, the ellipsoid update formula was also used to screen primal variables for the Lasso problem
\citep{ellipsoids}, although not iterating over ellipsoids in order to get
smaller volumes.

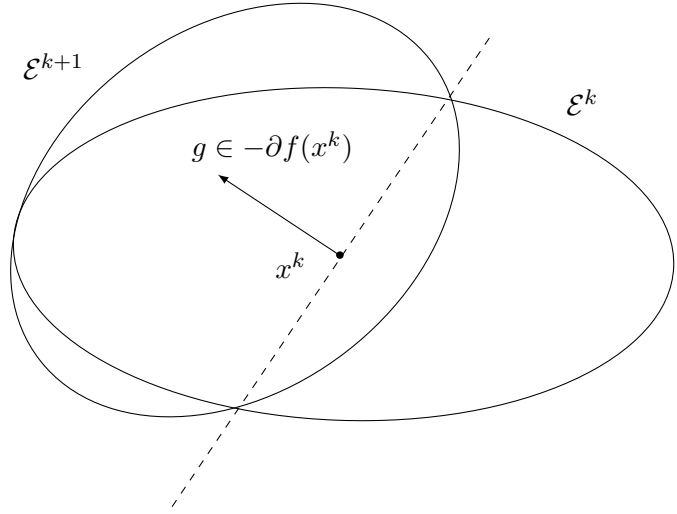
\begin{figure}
    \centering
    \begin{tikzpicture}[scale=1.2]
        \coordinate (a) at (-1.24,1.24);
        \coordinate (b) at (5,1);
        \ellipsebyfoci{draw}{a}{b}{1.16}
        \coordinate (c) at (-0.7661771140981, 0.4732063199958);
        \coordinate (d) at (2.1461475766973, 2.7417539738785);
        \ellipsebyfoci{draw}{c}{d}{1.47}
        \draw[dashed] (0,-1.67) -- (3.5,3.54);
        \draw[latex-] (0.5,2) --  (1.85,1.1);
        \draw (1.1, 2.3) node[scale=1] {$g \in -\partial f(x^k)$};
        \draw (1.3, 1) node[scale=1] {$x^k$};
        \draw (4.5, 2.8) node[scale=1] {$\mathcal{E}^k$};
        \draw (-1.3, 3.2) node[scale=1] {$\mathcal{E}^{k+1}$};
        \node at (1.84,1.11) [circle,fill,inner sep=1pt]{};
    \end{tikzpicture}
    \caption{One step of the ellipsoid method.}
    \label{fig:ellipsoid_method}
\end{figure}


\begin{algorithm}
\caption{Building ellipsoidal test regions}
\label{algo:ell_method}
\begin{algorithmic}[1]
\State \textbf{initialization:} Given $\mathcal{E}^0(x_0, E_0)$ containing $x^*$;
\While{$k < nb_{\text{steps}}$}
   \State $\sbullet$ Compute a subgradient $g$ of~(\ref{eq:erm}) in $x_k$;
  \State $\sbullet$ $\Tilde{g} \gets g / \sqrt{g^\top E_k g}$;
  \State $\sbullet$ $x_{k+1} \gets x_{k} - \frac{1}{p+1} E_k \Tilde{g}$;
  \State $\sbullet$ $E_{k+1} \gets \frac{p^2}{p^2 - 1}( E_k - \frac{2}{p + 1} E_k \Tilde{g} \Tilde{g}^\top E_k)$;
\EndWhile
\State For regression problems:
\For{each sample $a_i \text{ in } A$}
    \If{${\text{max}} |a_i^\top x - b_i| \leq \mu \text{ for } x \in \mathcal{E}^{nb_{\text{steps}}}$}
    \State Discard $a_i$ from $A$.
    \EndIf
\EndFor
\State For classification, replace condition $|a_i^\top x - b_i| \leq \mu$ by $b_i a_i^\top x \geq \mu$ in the above expression.
\end{algorithmic}
\end{algorithm}

\paragraph{Initialization.}
The algorithm requires an initial ellipsoid $\mathcal{E}^0(x_0, E_0)$ that contains the solution. This is
typically achieved by defining the center $x_0$ as an approximate solution of
the problem, which can be obtained in various ways. For instance, one may run a few
steps of a solver on the whole dataset, or one may consider the solution obtained previously
for a different regularization parameter when computing a regularization path, or the solution obtained
for slightly different data, \eg, for tracking applications where an optimization problem has to be solved at every time step $t$, with slight modifications from time $t-1$.

Once the center~$x_0$ is defined, there are many cases where the initial ellipsoid can
be safely assumed to be a sphere. For instance, if the objective---let us call it~$F$---is $\kappa$-strongly convex,
we have the basic inequality $\frac{\kappa}{2}\|x_0-x^\star\|^2 \leq F(x_0)-F^\star$, which can often be 
upper-bounded by several quantities, \eg, a duality gap~\citep{double_screening} or simply $F(x_0)$ if $F$ is non-negative as in typical ERM problems.
Otherwise, other strategies can be used depending on the problem at hand. If the problem is not strongly convex but constrained (\textit{e.g.} often a norm constraint in ERM problems), the initialization is also natural (\textit{e.g.}, a sphere containing the constraint set). We will see below that one of the most successful applications of screening methods is for computing regularization paths. Given that regularization path for penalized and constrained problems coincide (up to minor details), computing the path for a penalized objective amounts to computing it for a constrained objective, whose ellipsoid initialization is safe as explained above. Even though we believe that those cases cover many (or most) problems of interest, it is also reasonable to believe that guessing the order of magnitude of the solution is feasible with simple heuristics, which is what we do for $\ell_1$-safe logistic regression. Then, it is possible to check \textit{a posteriori} that screening was safe and that indeed, the initial ellipsoid contained the solution.
 
\paragraph{Efficient implementation.}
 Since each update of the ellipsoid matrix $E$ is rank one, it is possible to parametrize $E_k$ at step $k$ as
\begin{equation*}
    E_{k} = s_k \text{I} - L_kD_kL_k^\top,
\end{equation*} 
with $I$ the identity matrix, $L_k$ is in $\mathbb{R}^{p \times k}$ and $D_k$ in $\mathbb{R}^{k \times k}$ is a diagonal matrix. Hence, we only have to update $D$ and $L$ while the algorithm proceeds. 

\paragraph{Complexity of our screening rules.}
For each step of Algorithm~\ref{algo:ell_method}, we compute a subgradient $g$ in $O(np)$ operations. The ellipsoids are modified using rank one updates that can be stored. As a consequence, the computations at this stage are dominated by the computation of $Eg$, which is $O(pk)$. As a result, $k$ steps cost $O(k^2p + npk)$. Once we have the test set $\mathcal{X}$, we have to compute the closed forms from Lemma \ref{test} for each data point. This computation is dominated by the matrix-vector multiplications with $E$, which cost $O(kp)$ using the structure of $E$. Hence, testing the whole dataset costs $O(npk)$.
Since we typically have $n \gg k$, the cost of the overall screening procedure is therefore $O(n p k)$. In contrast, solving the ERM problem without screening would cost $O(n p T)$ where $T$ is the number of passes over the data, with $T \gg k$. With screening, the complexity becomes $O(n s T + np k)$, where $s$ is the number of data points accepted by the screening procedure. 

\subsection{Extension to Kernel Methods}

It is relatively easy to adapt our safe rules to kernel methods. Consider for example (\ref{eq:regression}), where $A$ has been replaced by $\phi(A) = [\phi(a_1),\ldots,\phi(a_n)]^\top$ in $\mathcal{H}^{n}$, with $\mathcal{H}$ a RKHS and $\phi$ its mapping function $\Real^p \rightarrow \mathcal{H}$. The prediction function $x \colon \Real^p \rightarrow \Real$ lives in the RKHS, thus it can be written $x(a) = \langle x, \phi(a) \rangle$, $\forall a \in \Real^p$. In the setting of an ERM strictly increasing with respect to the RKHS norm and each sample loss, the Representer theorem ensures $x(a) = \sum_{i=1}^n \alpha_i K(a_i,a)$ with $\alpha_i \in \Real$ and $K$ the kernel associated to $\mathcal{H}$. If we consider the squared RKHS norm as the regularization term, which is typically the case, the problem becomes:
\begin{equation}
   \min_{\alpha \in \Real^n, t \in \Real^n} f(t) + \lambda \sum_{i,j = 1}^n \alpha_i \alpha_j K(a_i, a_j) \st   t = \mathbf{K}\alpha - b, \label{eq:kernelized_regression}
\end{equation}
with $\mathbf{K}$ the Gram matrix. The constraint is linear in $\alpha$ (thus satisfying to Lemma~\ref{lemma:reg}) while yielding non-linear prediction functions. The screening test becomes maximizing the linear forms $[ \mathbf{K}]_i\alpha - b_i$ and $- [\mathbf{K}]_i \alpha + b_i$ over an ellipsoid $\mathcal{X}$ containing $\alpha^*$. When the problem is convex (it depends on $K$), $\mathcal{X}$ can still be found using the ellipsoid method.

We now have an algorithm for selecting data points in regression or classification problems with linear or kernel models. As detailed above, the rules require a sparse dual, which is not the case in general except in particular instances such as support vector machines. We now explain how to induce sparsity in the dual. 

\section{Constructing Safe Losses}
\label{sec:theory}
In this section, we introduce a way to induce sparsity in the dual of empirical risk minimization problems.


\subsection{Inducing Sparsity in the Dual of ERM}

When the ERM problem does not admit a sparse dual solution, safe screening is not possible. To fix this issue, consider the ERM problem~(\ref{eq:regression}) and replace $f$ by $f_\mu$ defined in Section~\ref{sec:tools}:
\begin{equation}
   \min_{x \in \Real^p, t \in \Real^n} f_\mu(t) + \lambda R(x) \st   t = A x - b, \label{eq:regression_mod}\tag{${\mathcal P}'_1$}
\end{equation}
%
We have the following result connecting the dual of \eqref{eq:regression} with that of \eqref{eq:regression_mod}.

\begin{lemma}[Regularized dual for regression]
    The dual of \eqref{eq:regression_mod} is
\BEQ \label{dual_formula}
\max_{\nu \in \Real^n} - \langle b, \nu \rangle - f^*(\nu) - \lambda R^*\left(-\frac{A^\top \nu}{\lambda}\right) - \mu \Omega(\nu),
\EEQ
and the dual of \eqref{eq:regression} is obtained by setting $\mu = 0$.
\label{lemma:reg}
\end{lemma}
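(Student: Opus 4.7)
The plan is to derive the dual of $(\mathcal{P}'_1)$ via standard Lagrangian duality, relying on the identity $f_\mu^* = f^* + \mu\,\Omega$ that was essentially established already in Section~\ref{sec:tools}. Recall indeed that the reformulation
\[
f_\mu(t) = \max_{y \in \mathbb{R}^n} \langle y, t \rangle - f^*(y) - \mu\,\Omega(y)
\]
derived at the start of Section~\ref{sec:tools} exhibits $f_\mu$ as the Fenchel biconjugate of $f^* + \mu\,\Omega$. Since $f$ and $\Omega$ are proper, convex and lower semi-continuous, so is $f^* + \mu\,\Omega$, and Fenchel--Moreau gives directly $f_\mu^*(\nu) = f^*(\nu) + \mu\,\Omega(\nu)$. (Equivalently, one can use that the Fenchel conjugate of an infimum-convolution is the sum of the conjugates, which is the other viewpoint developed in Section~\ref{sec:tools}.)

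Next, I would form the Lagrangian of $(\mathcal{P}'_1)$ associated with multiplier $\nu \in \mathbb{R}^n$ for the equality constraint $t = Ax - b$:
\[
L(x,t,\nu) = f_\mu(t) + \lambda R(x) + \nu^\top (Ax - b - t).
\]
The minimization over $(x,t)$ decouples. For $t$, one gets $\min_t \{f_\mu(t) - \nu^\top t\} = -f_\mu^*(\nu) = -f^*(\nu) - \mu\,\Omega(\nu)$. For $x$, factoring out $\lambda$,
\[
\min_x \{\lambda R(x) + \nu^\top A x\} = -\lambda \max_x \Big\{\langle -\tfrac{A^\top \nu}{\lambda}, x\rangle - R(x)\Big\} = -\lambda R^*\!\left(-\tfrac{A^\top \nu}{\lambda}\right).
\]
Adding the constant term $-\langle b,\nu\rangle$ and taking $\max_\nu$ yields precisely the claimed dual expression. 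Setting $\mu = 0$ makes the $\Omega$ term disappear, recovering the dual of $(\mathcal{P}_1)$.

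There is no real obstacle here; the only subtlety is to verify that Fenchel--Moreau applies so that the identity $f_\mu^* = f^* + \mu\,\Omega$ holds, which is immediate under the standing convex-lsc assumptions on $f$ and $\Omega$. Strong duality (equality between primal and dual optima) follows from Slater-type conditions since the constraint is affine.
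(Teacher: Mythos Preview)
Your proof is correct and follows essentially the same approach as the paper: both form the Lagrangian for the linearly constrained problem, separate the minimization over $t$ and $x$ to obtain Fenchel conjugates, and invoke the identity $f_\mu^* = f^* + \mu\,\Omega$ via Fenchel--Moreau under the proper/convex/lsc assumptions. The paper merely packages $(t,x)$ into a single variable $\tilde x$ and cites the generic Fenchel-dual formula for affinely constrained problems before unpacking, which is exactly your direct Lagrangian computation presented slightly differently.
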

The proof can be found in Appendix~\ref{sec:add_proofs}. We remark that is possible, in many cases,
to induce sparsity in the dual if $\Omega$ is the $\ell_1$-norm, or another
sparsity-inducing penalty. This is notably true if the unregularized dual is smooth with bounded gradients. In such a case, it is possible to show that the optimal dual solution would be $\nu^\star=0$ as soon as $\mu$ is large enough~\citep{bach2012optimization}.

We consider now the classification problem~(\ref{eq:classification}) and show that the previous remarks about sparsity-inducing regularization for the dual of regression problems
also hold in this new context.
\begin{lemma}[Regularized dual for classification]
Consider now the modified classification problem
\begin{equation}
   \min_{x \in \Real^p, t \in \Real^n} f_\mu(t) + \lambda R(x) \st   t=\diag(b)A x. \label{p_classif}\tag{${\mathcal P}_2'$}
\end{equation}
%
The dual of \ref{p_classif} is
\BEQ \label{dual_formula_classif}
\max_{\nu \in \Real^n} - f^*(-\nu) - \lambda R^*\left(\frac{A^\top \diag(b)\nu}{\lambda}\right) - \mu \Omega(-\nu).
\EEQ
\end{lemma}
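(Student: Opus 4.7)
The plan is to mirror the derivation of Lemma~4.5 using standard Lagrangian duality, being careful with sign conventions so as to land exactly on the stated expression. Form the Lagrangian of~\eqref{p_classif} with a dual variable $\nu \in \Real^n$ attached to the affine constraint $t = \diag(b)Ax$, namely
\begin{equation*}
    L(x,t,\nu) \;=\; f_\mu(t) + \lambda R(x) + \nu^\top\!\bigl(t - \diag(b)Ax\bigr).
\end{equation*}
Because the constraint is affine and the primal is feasible (pick any $x$ and set $t = \diag(b)Ax$), Slater's condition holds, so strong duality applies and the dual is $\max_\nu \inf_{x,t} L(x,t,\nu)$.

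The inner minimisation decouples. For the $t$ block, write
\begin{equation*}
    \inf_t\, f_\mu(t) + \nu^\top t \;=\; -\sup_t\,\langle -\nu, t\rangle - f_\mu(t) \;=\; -f_\mu^{*}(-\nu),
\end{equation*}
and for the $x$ block, using the identity $(\lambda R)^*(y)=\lambda R^*(y/\lambda)$ for $\lambda>0$,
\begin{equation*}
    \inf_x\, \lambda R(x) - \bigl(A^\top\!\diag(b)\nu\bigr)^{\!\top} x \;=\; -\lambda R^{*}\!\Bigl(\tfrac{A^\top \diag(b)\nu}{\lambda}\Bigr).
\end{equation*}

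It remains to identify $f_\mu^{*}$. By the very definition recalled in Section~\ref{sec:tools},
\begin{equation*}
    f_\mu(t) \;=\; \max_{y}\,\langle y,t\rangle - f^{*}(y) - \mu\,\Omega(y) \;=\; \bigl(f^{*} + \mu\,\Omega\bigr)^{\!*}(t).
\end{equation*}
Since $f$ and $\Omega$ are closed, convex and proper, the function $f^{*}+\mu\,\Omega$ is closed convex proper as well, so the Fenchel--Moreau theorem yields $f_\mu^{*} = f^{*} + \mu\,\Omega$. Substituting and evaluating at $-\nu$ gives $f_\mu^{*}(-\nu) = f^{*}(-\nu) + \mu\,\Omega(-\nu)$.

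Plugging the two blocks back together produces exactly
\begin{equation*}
    \max_{\nu \in \Real^n}\; -f^{*}(-\nu) \,-\, \lambda R^{*}\!\Bigl(\tfrac{A^\top \diag(b)\nu}{\lambda}\Bigr) \,-\, \mu\,\Omega(-\nu),
\end{equation*}
which is the claimed dual. The only step requiring any care is the conjugacy calculation $f_\mu^{*} = f^{*} + \mu\,\Omega$; once this is in hand, the rest is bookkeeping of signs to match the stated formula (hence the choice of adding $\nu^\top(t-\diag(b)Ax)$ rather than $\nu^\top(\diag(b)Ax - t)$ in the Lagrangian, which is what produces $-f^{*}(-\nu)$ and $+A^\top\diag(b)\nu/\lambda$ inside $R^{*}$).
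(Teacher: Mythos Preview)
Your proof is correct and follows essentially the same route as the paper. The paper's one-line argument ``proceed as above with a linear constraint $\tilde A\tilde x = 0$ and $\tilde A = (\mathrm{Id},\,-\diag(b)A)$'' bundles the variables into $\tilde x=(t,x)$ and invokes the Fenchel-dual formula for linearly constrained problems from the proof of Lemma~\ref{lemma:reg}, whereas you write the Lagrangian out explicitly and decouple the $t$ and $x$ minimisations by hand; both rely on the same key identification $f_\mu^{*}=f^{*}+\mu\,\Omega$ and arrive at the same expression.
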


\begin{proof}
We proceed as above with a linear constraint $\Tilde{A}\Tilde{x} = 0$ and $\Tilde{A} = (Id , - \diag(b)A)$.
\end{proof}

Note that the formula directly provides the dual of regression and classification ERM problems with a linear model such as the Lasso and SVM.

\subsection{Link Between the Original and Regularized Problems} 
\label{subsec:link}
The following results should be understood as an indication that $f$ and $f_{\mu}$ are similar objectives.

 \begin{lemma}[Smoothness of $f_{\mu}$]
     If $f^* + \Omega$ is strongly convex, then $f_{\mu}$ is smooth.
 \label{lemma:smoothness_f}
 \end{lemma}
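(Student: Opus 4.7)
The plan is to recognize that $f_\mu$ is, by construction, the Fenchel conjugate of $f^* + \mu\Omega$. Indeed, going back to the definition from Section~\ref{sec:tools},
\begin{equation*}
    f_\mu(t) = \max_{y \in \Real^p}\, \langle y,t\rangle - f^*(y) - \mu\Omega(y) = (f^* + \mu\Omega)^*(t).
\end{equation*}
So the question of smoothness of $f_\mu$ reduces to the classical duality between strong convexity and smoothness: if a proper, lower semi-continuous function $g$ is $\kappa$-strongly convex (with respect to $\|\cdot\|_2$), then $g^*$ is everywhere differentiable and $\nabla g^*$ is $(1/\kappa)$-Lipschitz continuous. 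This statement can be invoked as is, or proven quickly from Fenchel-Young plus the definition of strong convexity.

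The only thing left to verify is that the assumption ``$f^* + \Omega$ is strongly convex'' carries over to ``$f^* + \mu\Omega$ is strongly convex'' for any $\mu > 0$. I would do this by splitting on the value of $\mu$. For $\mu \geq 1$, I would write
\begin{equation*}
    f^* + \mu\Omega = (f^* + \Omega) + (\mu-1)\Omega,
\end{equation*}
which is strongly convex with the same modulus as $f^*+\Omega$, since adding a convex function preserves strong convexity. For $0 < \mu < 1$, I would instead write
\begin{equation*}
    f^* + \mu\Omega = (1-\mu)\,f^* + \mu\,(f^* + \Omega),
\end{equation*}
and note that $f^*$ is convex (being a Fenchel conjugate) while $f^* + \Omega$ is $\kappa$-strongly convex, so the combination is $\mu\kappa$-strongly convex.

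Putting the two pieces together: $f^* + \mu\Omega$ is strongly convex, hence by the strong convexity / smoothness duality, $f_\mu = (f^* + \mu\Omega)^*$ is differentiable on $\Real^p$ with Lipschitz continuous gradient, which is the desired smoothness conclusion. The main obstacle I anticipate is not algebraic but rather bookkeeping: making sure the Fenchel conjugate identity above is clean (which is why one should double-check that $f$ is assumed proper, convex, l.s.c. so that $f^{**}=f$ and the rewriting from Section~\ref{sec:tools} is valid), and citing a precise version of the strong-convexity / smoothness duality theorem adapted to the norm in use. Everything else is routine.
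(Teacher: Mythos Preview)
Your proposal is correct and follows exactly the same approach as the paper: the paper's proof consists of the single observation that $f_\mu = (f^* + \mu\Omega)^*$ and an appeal to the classical fact that the conjugate of a closed, proper, strongly convex function is smooth. In fact, you are more careful than the paper, which does not explicitly justify the passage from ``$f^* + \Omega$ strongly convex'' to ``$f^* + \mu\Omega$ strongly convex''; your case split on $\mu \geq 1$ versus $0 < \mu < 1$ fills that small gap cleanly.
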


 \begin{proof}
    The lemma follows directly from the fact that $f_{\mu} = (f^* + \mu \Omega)^*$ (see the proof of Lemma~\ref{lemma:reg}). The conjugate of a closed, proper, strongly convex function is indeed smooth (see \textit{e.g.}~\cite{Hiriart1993}, chapter X).
 \end{proof}

\begin{lemma}[Bounding the value of~\ref{eq:regression}]
\label{lemma:obj_ineq}
Let us denote the optimum objectives of \ref{eq:regression}, \ref{eq:regression_mod} by $P_{\lambda}$, $P_{\lambda, \mu}$. If $\Omega$ is a norm, we have the following inequalities:
\begin{equation*}
    P_{\lambda} - \delta^* \leq P_{\lambda, \mu} \leq P_{\lambda},
\end{equation*}
with $\delta^*$ the value of $\delta$ at the optimum of $P_{\lambda}(t) - \delta(t)$.
\end{lemma}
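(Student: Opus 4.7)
The two inequalities follow from a direct comparison of the two objectives, exploiting the definition of $f_\mu$ as an infimal convolution. The main observation is that when $\Omega$ is a norm, its Fenchel conjugate $\Omega^*$ is the indicator function of the dual norm unit ball, and in particular $\Omega^*(0)=0$. Substituting $z=t$ in the variational formula
\[
f_\mu(t) \;=\; \min_{z\in\Real^n}\; f(z) + \mu\,\Omega^*\!\left(\tfrac{t-z}{\mu}\right)
\]
yields the pointwise inequality $f_\mu(t)\leq f(t) + \mu\,\Omega^*(0) = f(t)$, so that $\delta(t):=f(t)-f_\mu(t)\geq 0$ for all $t$.

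For the upper bound $P_{\lambda,\mu}\leq P_\lambda$, I would take any pair $(x,t)$ feasible for \eqref{eq:regression}: it is also feasible for \eqref{eq:regression_mod} (the linear constraint is unchanged), and the pointwise inequality $f_\mu\leq f$ implies that the objective of \eqref{eq:regression_mod} evaluated at $(x,t)$ is at most that of \eqref{eq:regression}. Minimizing both sides over feasible pairs gives the bound.

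For the lower bound $P_\lambda - \delta^*\leq P_{\lambda,\mu}$, I would let $(x^\star_\mu,t^\star_\mu)$ be an optimal solution of \eqref{eq:regression_mod}. Since it is also feasible for \eqref{eq:regression}, we have
\[
P_\lambda \;\leq\; f(t^\star_\mu)+\lambda R(x^\star_\mu)
\;=\; \bigl(f_\mu(t^\star_\mu)+\lambda R(x^\star_\mu)\bigr) + \bigl(f(t^\star_\mu)-f_\mu(t^\star_\mu)\bigr)
\;=\; P_{\lambda,\mu} + \delta(t^\star_\mu).
\]
Noting that $t^\star_\mu$ is precisely the $t$-component of the minimizer of $P_\lambda(t)-\delta(t)=P_{\lambda,\mu}(t)$, we identify $\delta^*=\delta(t^\star_\mu)$ and obtain the claim.

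There is no real obstacle here; the only subtle point is to use that $\Omega$ being a norm forces $\Omega^*(0)=0$, which makes the trivial choice $z=t$ admissible in the infimal convolution and unlocks the pointwise ordering $f_\mu\leq f$. The rest is a one-line sandwich argument, evaluating each objective at the optimum of the other problem.
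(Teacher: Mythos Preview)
Your argument is correct and structurally identical to the paper's: both rest on a pointwise sandwich $f(t)-\delta(t)\le f_\mu(t)\le f(t)$ and then pass to the minimum over feasible $(x,t)$. The only difference is what $\delta$ denotes. You set $\delta(t):=f(t)-f_\mu(t)$, which makes the lower pointwise bound a tautology. The paper instead invokes Lemma~\ref{lemma:bounding_f} (stated in the appendix), which uses convexity of $f$ to prove
\[
f(t)-f_\mu(t)\;\le\;\max_{\|u/\mu\|^*\le 1} g^\top u,\qquad g\in\partial f(t),
\]
and takes this explicit quantity as $\delta(t)$. Your $\delta$ is tighter but implicit; the paper's $\delta$ equals $\mu\,\Omega(g)$, which is concrete and makes the subsequent remark ``$\delta(t)\to 0$ as $\mu\to 0$'' immediate. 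So your proof is valid, but be aware that the $\delta$ referenced in the lemma statement is the subgradient-based one from Lemma~\ref{lemma:bounding_f}, not the raw difference $f-f_\mu$.
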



The proof can be found in Appendix~\ref{sec:add_proofs}. When $\mu \to 0$, $\delta(t) \to 0$ hence the objectives can be arbitrarily close.


\subsection{Effect of Regularization and Examples}
\label{subsec:examples}
We start by recalling that the infimum convolution is traditionally used for smoothing an objective when~$\Omega$ is strongly convex, and then we discuss the use of sparsity-inducing regularization in the dual.
%
%

\paragraph{Euclidean distance to a closed convex set.} It is known that convolving the indicator function of a closed convex set $\mathcal{C}$ with a quadratic term $\Omega$ (the Fenchel conjugate of a quadratic term is itself) yields the euclidean distance to $\mathcal{C}$
\begin{align*}
    f_{\mu}(t) = & \underset{z \in \mathbb{R}^n}{\text{min}} I_{\mathcal{C}}(z) + \frac{1}{2\mu}\|t - z\|_2^2
    =  \underset{z \in \mathcal{C}}{\text{min}} \frac{1}{2\mu}\|t - z\|_2^2.
\end{align*}

\paragraph{Huber loss.} The $\ell_1$-loss is more robust to outliers than the $\ell_2$-loss, but is not differentiable in zero which may induce difficulties during the optimization. A natural solution consists in smoothing it: \cite{huber} for example show that applying the Moreau-Yosida smoothing, \textit{i.e} convolving $|t|$ with a quadratic term $\frac{1}{2} t^2$ yields the well-known Huber loss, which is both smooth and robust:
\begin{equation*} f_{\mu}(t) = \begin{cases}
               \frac{t^2}{2 \mu} & \text{if } |t| \leq \mu, \\
              |t| - \frac{\mu}{2} & \text{otherwise}.
               \end{cases}
\end{equation*}

Now, we present examples where $\Omega$ has a sparsity-inducing effect.

\paragraph{Squared Hinge loss.} Let us now consider a problem with a quadratic loss $f \colon t \mapsto \| 1-t\|_2^2/2$ designed for a classification problem, and consider
$\Omega(x)= \|x\|_1 + \mathbf{1}_{x \preceq 0}$. We have $\Omega^*(y) = \mathbf{1}_{y \succeq -1}$, and
\begin{align*}
    f_{\mu}(t) = &
     \sum_{i=1}^n [1- t_i - \mu, 0]_+^2,
\end{align*}
which is a squared Hinge Loss with a threshold parameter~$\mu$ and $[.]_+ = \max(0,.)$.

\paragraph{Hinge loss.} Instead of the quadratic loss in the previous example, choose a robust loss $f \colon t \mapsto \|1-t\|_1$. By using the same function $\Omega$, we obtain the classical Hinge loss of support vector machines
$$
 f_\mu(t) = \sum_{i=1}^n \frac{1}{2}[1- t_i - \mu, 0]_+.
$$
We see that the effect of convolving with the constraint $\mathbf{1}_{x \preceq
0}$ is to turn a regression loss (\eg, square loss) into a classification loss.
The effect of the $\ell_1$-norm is to encourage the loss to be flat (when $\mu$ grows, $[1- t_i - \mu, 0]_+$ is equal to zero for a larger range of values $t_i$), which corresponds to the sparsity-inducing effect in the dual that we will exploit for screening data points.

\paragraph{Screening-friendly regression.}
\label{ex:sreg}
Consider now the quadratic loss $f: t \mapsto {\|t\|^2}/{2}$ and $\Omega(x) = \|x\|_1$. Then $\Omega^*(y) = {\mathbf 1}_{\|y\|_\infty \leq 1}$ (see \textit{e.g.}~\cite{bach2012optimization}), and
\begin{equation}
   f_\mu(t) = \sum_{i=1}^n \frac{1}{2}[|t_i|-\mu]_+^2. \label{eq:sreg}
\end{equation}
A proof can be found in Appendix~\ref{sec:add_proofs}. As before, the parameter $\mu$ encourages the loss to be flat (it is exactly $0$ when $\|t\|_\infty \leq \mu$).

\paragraph{Screening-friendly logistic regression.} Let us now consider the logistic loss $f(t) = \log{(1 + e^{-t})}$, which we define only with one dimension for simplicity here. It is easy to show that the infimum convolution with the $\ell_1$-norm does not induce any sparsity in the dual, because the dual of the logistic loss has unbounded gradients, making classical sparsity-inducing penalties ineffective.  However,
we may consider instead another penalty to fix this issue: $\Omega(x) = - x \log{(-x)} + \mu |x|$ for $x \in [-1,0]$. We have $\Omega^*(y) = - e^{y + \mu - 1}$. Convolving $\Omega^*$ with~$f$ yields
\begin{equation}
    f_{\mu}(x) = \begin{cases}
    e^{x + \mu - 1} - (x + \mu) & \: \text{if} \: x + \mu - 1 \leq 0, \\
    0 & \: \text{otherwise}.\label{eq:sclass}
    \end{cases}
\end{equation}
Note that this loss is asymptotically robust. Moreover, the entropic part of $\Omega$ makes this penalty strongly convex hence $f_{\mu}$ is smooth~\citep{Nest03}. Finally, the $\ell_1$ penalty ensures that the dual is sparse thus making the screening usable. Our regularization mechanism thus builds a smooth, robust classification loss akin to the logistic loss on which we can use screening rules. If $\mu$ is well chosen, the safe logistic loss maximizes the log-likelihood of the data for a probabilistic model which slightly differs from the sigmoid in vanilla logistic regression. The effect of regularization parameter in a few previous cases are illustrated in Figure~\ref{fig:curves}.

\begin{figure}
\centering
\begin{minipage}{0.45\linewidth}
\includegraphics[width=\linewidth]{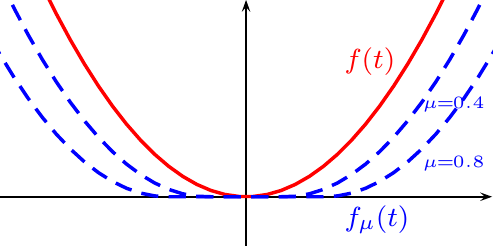}
\end{minipage} 
\begin{minipage}{0.45\linewidth}
\includegraphics[width=\linewidth]{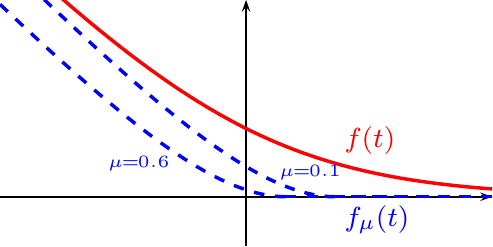}
\end{minipage}
\caption{Effect of the dual sparsity-inducing regularization on the quadratic loss~(\ref{eq:sreg}) (left) and logistic loss~(\ref{eq:sclass}) (right). After regularization, the loss functions have flat areas. Note that both of them are smooth.}\label{fig:curves}
\end{figure}

In summary, regularizing the dual with the $\ell_1$ norm induces a flat region in the loss, which induces sparsity in the dual. The geometry is preserved elsewhere. Note that we do not suggest to use~\ref{eq:regression_mod} and~\ref{p_classif} to screen for~\ref{eq:regression} and~\ref{eq:classification}.

\section{Experiments}
\label{sec:experiments}
We now present  experimental results demonstrating the effectiveness of the data screening procedure. 

\paragraph{Datasets.}
We consider three real datasets, SVHN, MNIST, RCV-1, and a synthetic one.
MNIST ($n=60000$) and SVHN ($n=604388$) both represent digits, which we encode by using the output of a two-layer convolutional kernel network \citep{mairal2016end} leading to feature dimensions $p=2304$. RCV-1 ($n=781265$) represents sparse TF-IDF vectors of categorized newswire stories ($p=47236$). For classification, we consider a binary problem consisting of discriminating digit 9 for MNIST vs. all other digits (resp. digit 1 vs rest for SVHN, 1st category vs rest for RCV-1).
For regression, we also consider a synthetic dataset, where data is generated by  
$b = Ax + \epsilon$,
where $x$ is a random, sparse ground truth, $A \in \mathbb{R}^{n \times p}$ a data matrix with coefficients in $[-1,1]$ and $\epsilon \sim \mathcal{N}(0, \sigma)$ with $\sigma = 0.01$. Implementation details are provided in Appendix. We fit usual models using Scikit-learn~\citep{scikit-learn} and Cyanure~\citep{Mairal2019Cyan} for large-scale datasets.

\subsection{Accuracy of our safe logistic loss} 

The accuracies of the Safe Logistic loss we build is similar to the accuracies obtained with the Squared Hinge and the Logistic losses on the datasets we use in this paper thus making it a realistic loss function, see Table~\ref{table:accuracies}.

\begin{table*}
\centering
\begin{tabular}{l | c | c | c}
\toprule
Dataset & MNIST & SVHN & RCV-1 \\ \midrule
Logistic + $\ell_1$ & 0.997 (0.01) & 0.99 (0.0003) & 0.975 (1.0) \\
Logistic + $\ell_2$ & 0.997 (0.001) & 0.99 (0.0003) & 0.975 (1.0) \\
\midrule
Squared Hinge + $\ell_1$ & 0.997 (0.03) & 0.99 (0.03) & 0.975 (1.0) \\
Squared Hinge + $\ell_2$ & 0.997 (0.003) & 0.99 (0.003) & 0.974 (1.0) \\
\midrule
Safelog + $\ell_1$ & 0.996 (0.0) & 0.989 (0.0) & 0.974 (1e-05) \\
Safelog + $\ell_2$ & 0.996 (0.0) & 0.989 (0.0) & 0.975 (1e-05) \\
\bottomrule
\end{tabular}
\caption{Averaged best accuracies on test set (best $\lambda$ in a logarithmic grid from $\lambda=0.00001$ to $1.0$).}
\label{table:accuracies}
\end{table*}

\subsection{Safe Screening}

Here, we consider problems that naturally admit a sparse dual solution, which allows safe screening.

\paragraph{Interval regression.} We first illustrate the practical use of the screening-friendly regression loss~\eqref{eq:sreg} derived above. It corresponds indeed to a particular case of a supervised learning task called interval regression \citep{hocking}, which is widely used in fields such as economics. In interval regression, one does not have scalar labels but intervals $\mathcal{S}_i$ containing the true labels $\Tilde{b}_i$, which are unknown. The loss is written
\begin{equation}
    \ell(x) = \sum_{i=1}^n \underset{b_i \in \mathcal{S}_i}{\text{inf}}(a_i^\top x - b_i)^2,
    \label{eq:general_ir}
\end{equation}
where $\mathcal{S}_i$ contains the true label $\Tilde{b}_i$. For a given data point, the model only needs to predict a value inside the interval in order not to be penalized. When the intervals $\mathcal{S}_i$ have the same width and we are given their centers $b_i$, ~\eqref{eq:general_ir} is exactly~\eqref{eq:sreg}. Since~\eqref{eq:sreg} yields a sparse dual, we can apply our rules to safely discard intervals that are assured to be matched by the optimal solution. We use an $\ell_1$ penalty along with the loss. As an illustration, the experiment was done using a toy synthetic dataset $(n = 20, p = 2)$, the signal to recover being generated by one feature only. The intervals can be visualized in Figure~\ref{fig:ir}. The ``difficult'' intervals (red) were kept in the training set. The predictions hardly fit these intervals. The ``easy'' intervals (blue) were discarded from the training set: the safe rules certify that the optimal solution will  fit these intervals. Our screening algorithm was run for 20 iterations of the Ellipsoid method. Most intervals can be ruled out afterwards while the remaining ones yield the same optimal solution as a model trained on all the intervals. 

\begin{figure}
\centering
\includegraphics[width=0.82\linewidth]{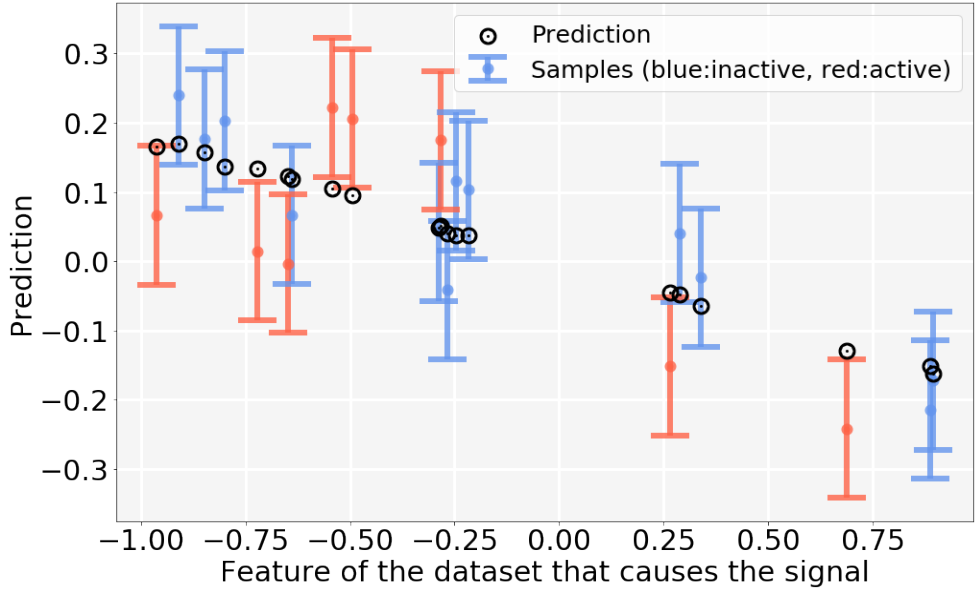}
\captionof{figure}{Safe interval regression on synthetic dataset. Most ``easy'' samples (in blue) can be discarded while the ``difficult'' ones (in red) are kept.}
\label{fig:ir}
\end{figure}

\paragraph{Classification.} 
  Common sample screening methods such as~\cite{double_screening} require a strongly convex objective. When it is not the case, there is, to the best of our knowledge, no baseline for this case. Thus, when considering classification using the non strongly convex safe logistic loss derived in Section~\ref{sec:theory} along with an $\ell_1$ penalty, our algorithm is still able to screen samples, as shown in Table~\ref{table:screened_frac_l1}. The algorithm is initialized using an approximate solution to the problem, and the radius of the initial ball is chosen depending on the number of epochs ($100$ for $10$ epochs, $10$ for $20$ and $1$ for $30$ epochs), which is valid in practice.

\begin{table*}
\centering
\begin{tabular}{l | c | c | c | c | c | r}
\toprule
Epochs & \multicolumn{3}{|c}{20} & \multicolumn{3}{|c}{30} \\
\midrule
$\lambda$ & MNIST & SVHN & RCV-1 & MNIST & SVHN & RCV-1 \\ \midrule
$10^{-3}$ & 0 & 0 & 1 & 0 & 2 & 12 \\ 
$10^{-4}$ & 0.3 & 0.01 & 8 & 27 & 17 & 42 \\ 
$10^{-5}$ & 35 & 12 & 45 & 65 & 54 & 75 \\ \bottomrule
\end{tabular}
\caption{Percentage of samples screened (\textit{i.e} that can be thrown away) in an $\ell_1$ penalized Safe Logistic loss given the epochs made at initialization. The radius is initialized respectively at $10$ and $1$ for MNIST and SVHN at Epochs $20$ and $30$, and at $1$ and $0.1$ for RCV-1.}
\label{table:screened_frac_l1}
\end{table*}

\begin{table}
\centering
\begin{tabular}{l | c | c | c | r}
\toprule
Epochs & \multicolumn{2}{|c}{20} & \multicolumn{2}{|c}{30} \\ \midrule
$\lambda$ & MNIST & SVHN & MNIST & SVHN \\ \midrule
$1.0 $ & 89 / 89 & 87 / 87 & 89 / 89 & 87 / 87\\ 
$10^{-1}$ & 95 / 95 & 11 / 47 & 95 / 95 & 91 / 91\\
$10^{-2}$ & 16 / 84 & 0 / 0 & 98 / 98 & 90 / 92\\ 
$10^{-3}$ & 0 / 0 & 0 / 0 & 34 / 50 & 0 / 0 \\ \bottomrule
\end{tabular}
\caption{Percentage of samples screened in an $\ell_2$ penalized SVM with Squared Hinge loss (Ellipsoid (ours) / Duality Gap~\citep{double_screening}) given the epochs made at initialization.}
\label{table:screened_frac_strongly_convex}
\end{table}
The Squared Hinge loss allows for safe screening (see~\ref{lemma:margin_sparsity}). Combined with an $\ell_2$ penalty, the resulting ERM is strongly convex. We can therefore compare our Ellipsoid algorithm to the baseline introduced by~\cite{double_screening}, where the safe region is a ball centered in the current iterate of the solution and whose radius is  $\frac{2\Delta}{\lambda}$ with $\Delta$ a duality gap of the ERM problem. Both methods are initialized by running the default solver of scikit-learn with a certain number of epochs. The resulting approximate solution and duality gap are subsequently fed into our algorithm for initialization. Then, we perform one more epoch of the duality gap screening algorithm on the one hand, and the corresponding number of ellipsoid steps computed on a subset of the dataset on the other hand, so as to get a fair comparison in terms of data access. The results can be seen in Table~\ref{table:screened_frac_strongly_convex} for MNIST and SVHN, and in Table~\ref{table:safe_sqhinge_rcv1} for RCV-1.
While being more general (our approach is neither restricted to classification, nor requires strong convexity), our method performs similarly to the baseline. Figure~\ref{fig:tradeoff} highlights the trade-off between optimizing and evaluating the gap (Duality Gap Screening) versus performing one step of Ellipsoid Screening. Both methods start screening after a correct iterate (i.e. with good test accuracy) is obtained by the solver (blue curve) thus 
suggesting that screening methods would rather be of practical use when computing a regularization path, or when the computing budget is less constrained (e.g. tracking or anomaly detection) which is the object of next paragraph.

\begin{table}[H]
\centering
\begin{tabular}{l | c | r}
\toprule
Epochs & 10 & 20\\ \midrule
$\lambda = 1$ & 7 / 84 & 85 / 85 \\ 
$\lambda = 10$ & 80 / 80 & 80 / 80 \\ 
$\lambda = 100$ & 68 / 68 & 68 / 68 \\ \bottomrule
\end{tabular}
\caption{RCV-1 : Percentage of samples screened in an $\ell_2$ penalized SVM with Squared Hinge loss (Ellipsoid (ours) /
Duality Gap) given the epochs made at initialization.}
\label{table:safe_sqhinge_rcv1}
\end{table}

\begin{figure}
\centering
\includegraphics[width=0.95\linewidth]{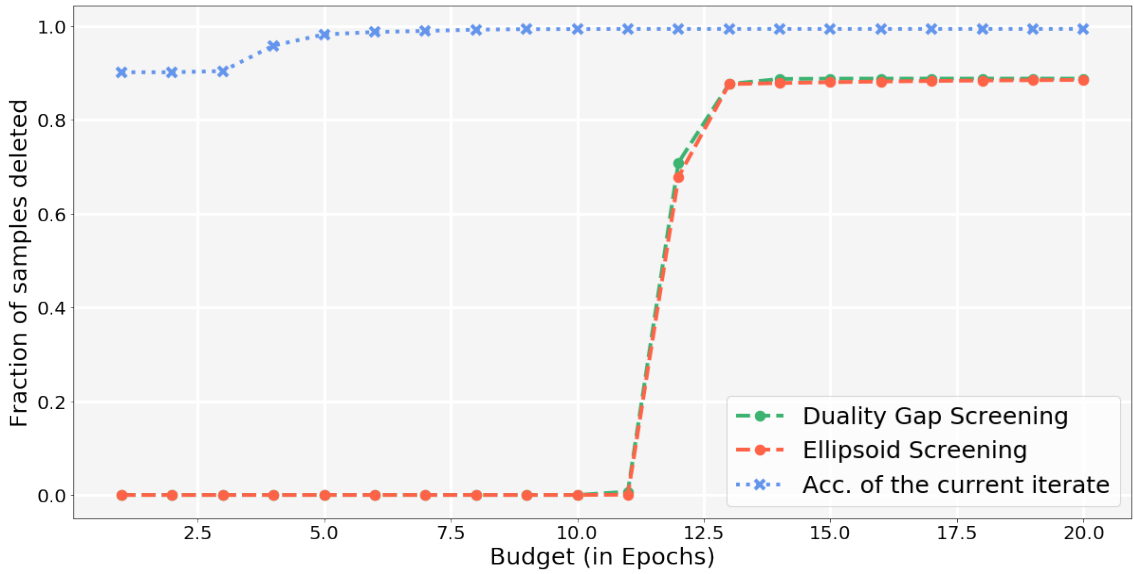}
\captionof{figure}{Fraction of samples screened vs Epochs done for two screening strategies along with test accuracy of the current iterate (Sq. Hinge + $\ell_2$ trained on MNIST).}
\label{fig:tradeoff}
\end{figure}

\paragraph{Computational gains.} As demonstrated in Figure~\ref{fig:comp_gains}, computational gains can indeed be obtained in a regularization path setting (MNIST features, Squared Hinge Loss and L2 penalty). Each point of both curves represents an estimator fitted for a given lambda against the corresponding cost (in epochs). Each estimator is initialized with the solution to the previous parameter lambda. On the orange curve, the previous solution is also used to initialize a screening. In this case, the estimator is fit on the remaining samples which further accelerates the path computation.
\begin{figure}
\centering
\includegraphics[width=0.95\linewidth]{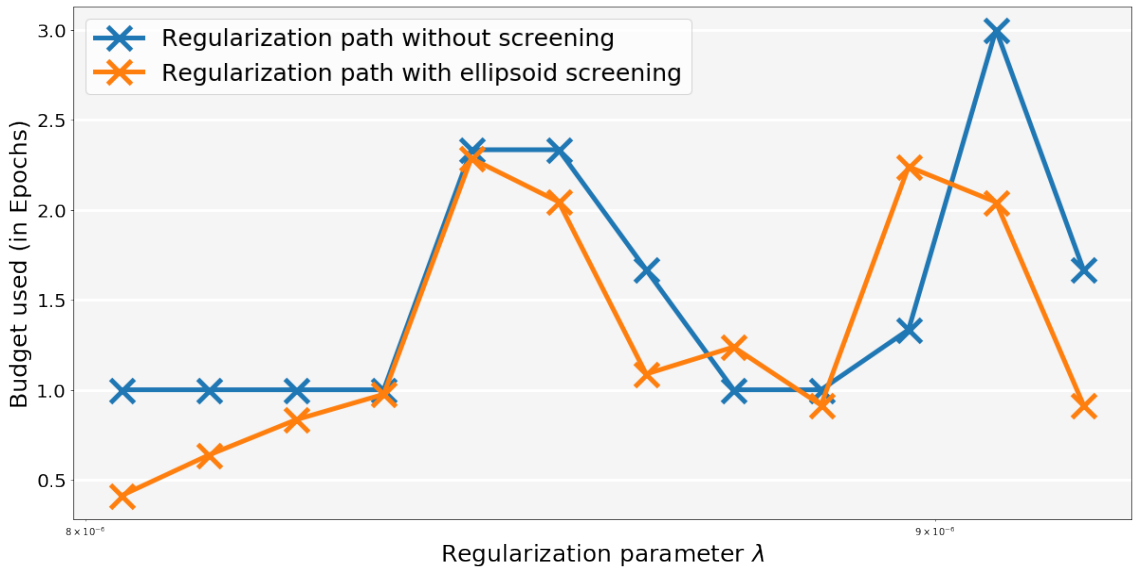}
\captionof{figure}{Regularization path of a Squared Hinge SVM trained on MNIST. The screening enables computational gains compared to a classical regularization path.}
\label{fig:comp_gains}
\end{figure}
\subsection{Dataset Compression}
We now consider the problem of dataset compression, where the goal is to maintain a good accuracy while using less examples from a dataset. This section should be seen as a proof of concept. A natural scheme consists in choosing the samples that have a higher margin since those will carry more information than samples that are easy to fit. In this setting, our screening algorithm can be used for compression by using the scores of the screening test as a way of ranking the samples. In our experiments, and for a given model, we progressively delete data points according to their score in the screening test for this model, before fitting the model on the remaining subsets. We compare those methods to random deletions in the dataset and to deletions based on the sample margin computed on early approximations of the solution when the loss admits a flat area (``margin screening''). 

\paragraph{Classification.} We first apply our compression scheme in the context of classification, using again MNIST, SVHN, and RCV-1. We test the combinations of the $\ell_1$ penalty with the safe logistic loss, and the $\ell_2$ penalty with the squared hinge loss. Our strategy is effective, as can be seen in Figure~\ref{fig:compression_classif}.

\begin{figure}
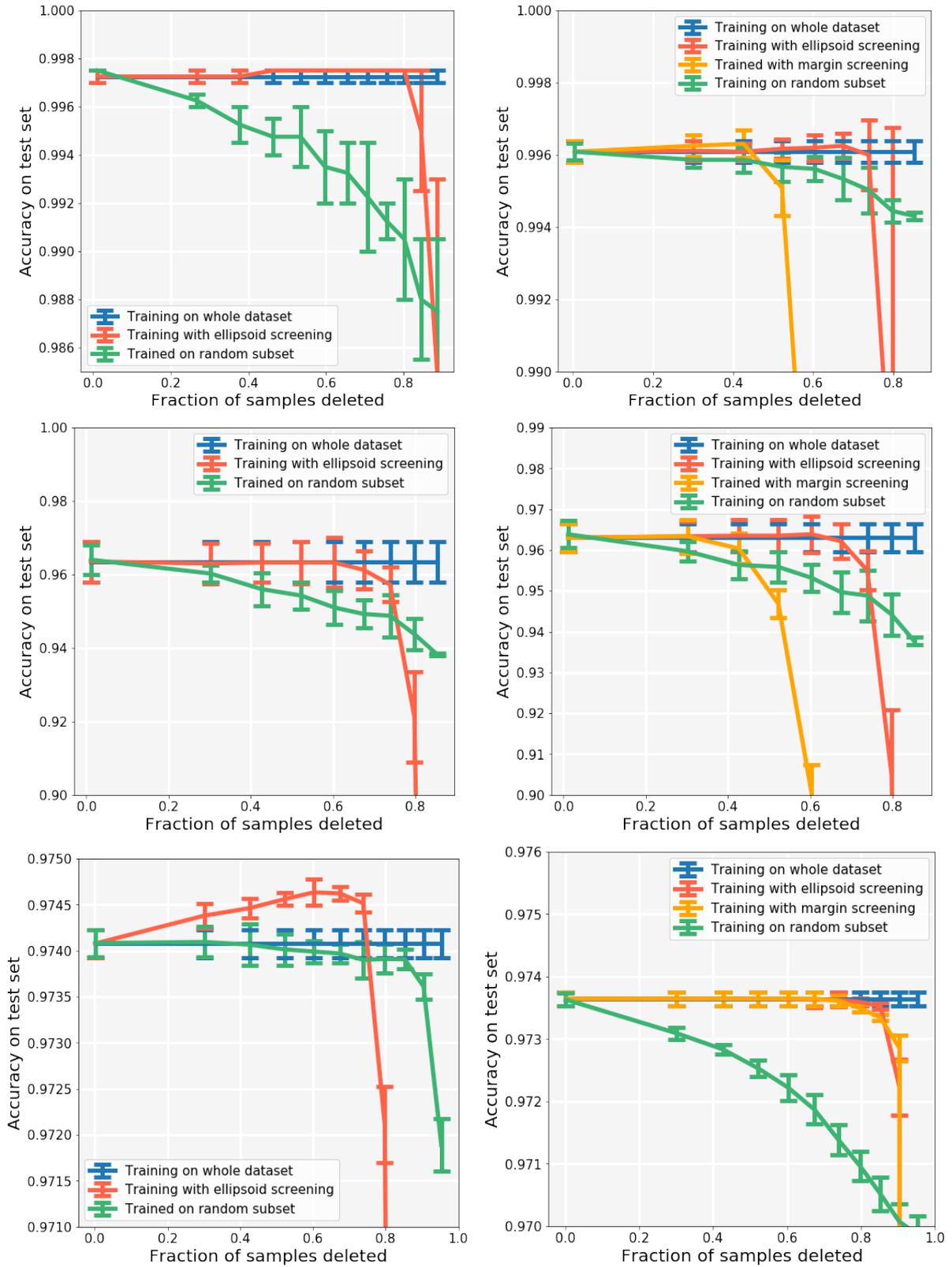

  \begin{subfigure}{.5\textwidth}
    \centering
    \includegraphics[width=0.95\linewidth]{chapters/2_samplescreening/figures/compression_mnist_safelog.pdf}
  \end{subfigure} \hfill
  \begin{subfigure}{.5\textwidth}
    \centering
    \includegraphics[width=0.95\linewidth]{chapters/2_samplescreening/figures/compression_mnist_squared_hinge.pdf}
  \end{subfigure}
  
  \begin{subfigure}[t]{.5\textwidth}
    \centering
    \includegraphics[width=0.95\linewidth]{chapters/2_samplescreening/figures/compression_svhn_safelog.pdf}
  \end{subfigure} \hfill
  \begin{subfigure}[t]{.5\textwidth}
    \centering
    \includegraphics[width=0.95\linewidth]{chapters/2_samplescreening/figures/compression_svhn_squared_hinge.pdf}
  \end{subfigure}

  \begin{subfigure}[t]{.5\textwidth}
    \centering
    \includegraphics[width=\linewidth]{chapters/2_samplescreening/figures/compression_rcv1_safelog.pdf}
  \end{subfigure} \hfill
  \begin{subfigure}[t]{.5\textwidth}
    \centering
    \includegraphics[width=\linewidth]{chapters/2_samplescreening/figures/compression_rcv1_squared_hinge.pdf}
  \end{subfigure}

\caption{Dataset compression in classification. \textit{Left:} $\ell_1$ penalty + Safe Logistic loss. \textit{Right:} $\ell_2$ penalty + Squared Hinge loss. \textit{Top:} MNIST. \textit{Middle:} SVHN. \textit{Down:} RCV-1.}
  \label{fig:compression_classif}
\end{figure}

\paragraph{Lasso regression.} The Lasso objective combines an $\ell_2$ loss with an $\ell_1$ penalty.
Since its dual is not sparse, we will instead apply the safe rules offered by the screening-friendly regression loss~\eqref{eq:sreg} derived in Section \ref{subsec:examples} and illustrated in~Figure~\ref{fig:curves}, combined with an $\ell_1$ penalty.
We can draw an interesting parallel with the SVM, which is naturally sparse in data points. At the optimum, the solution of the SVM can be expressed in terms of data points (the so-called support vectors) that are close to the classification boundary, that is the points that are \textit{the most difficult} to classify. Our screening rule yields the analog for regression: the points that are easy to predict, i.e. that are close to the regression curve, are less informative than the points that are harder to predict. 
In our experiments on \emph{synthetic data} ($n=100$), this does consistently better than random subsampling as can be seen in Figure~\ref{fig:synthetic_compression}.  

\begin{figure}
\centering
  \includegraphics[width=0.9\linewidth]{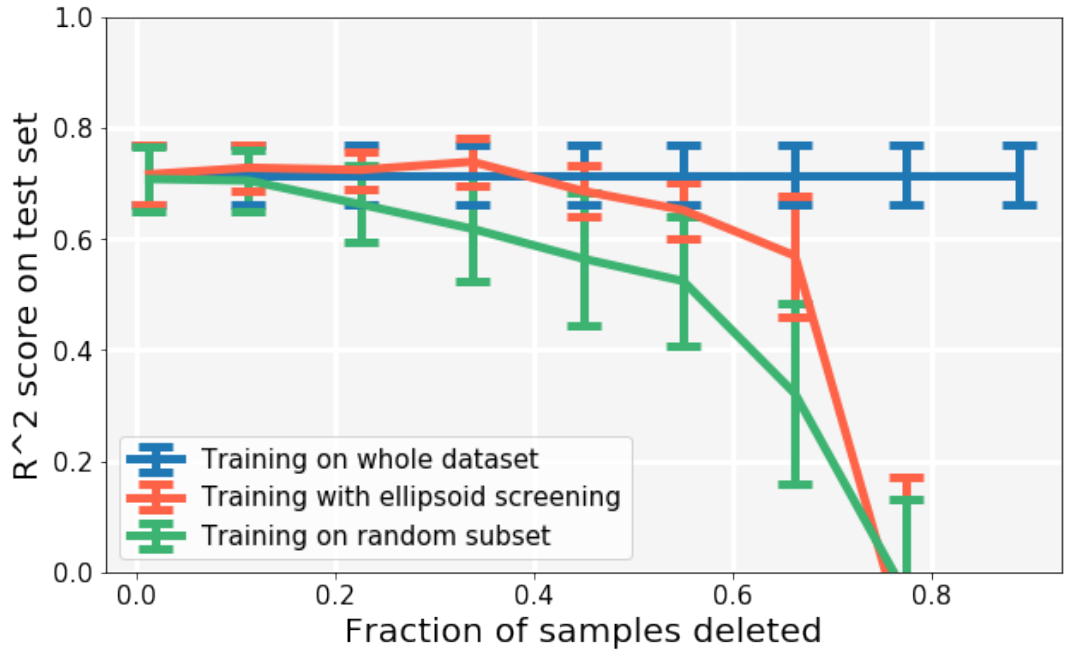}
  \captionof{figure}{Dataset compression for the Lasso trained on a synthetic dataset. The scores given by the screening yield a ranking that is better than random subsampling.}
  \label{fig:synthetic_compression}
\end{figure}

\paragraph{Discussion.} For all methods, the degradation in performance is lesser than with random deletions. Nevertheless, in the regime where most samples are deleted (beyond $80\%$), random deletions tend to do better. This is not surprising since the screening deletes the samples that are ``easy'' to classify. 
Then, only the difficult ones and outliers remain, making the prediction task harder compared to a random subsampling.

\section{Conclusion}
\label{sec:conclusion}
In this chapter, we saw how common losses could be regularized in their dual to yield safe losses. However, adding a regularization term enforcing sparsity in the dual does not always result in a loss with a flat region. For example, we showed in Section~\ref{subsec:examples} that the logistic loss could not be regularized with the $\ell_1$ norm only. Thus, it could be insightful to better understand which regularization is needed for which functions to yield safe losses. More generally, the work initiated in Section~\ref{subsec:link} to study the link between an ERM problem and its regularized dual could be further pursued. 

A limit of this work is the size of the screening region $\mathcal{X}$, whose price for being tractable and generic (\textit{i.e} not requiring strong convexity of the objective) is its high volume, making typical safe screening (\textit{i.e} deleting samples before solving the ERM problem) too computationally intensive to be beneficial in most problems. Would it be possible to find a region with smaller volume while remaining generic and tractable?

Finally, safe screening techniques where introduced in 2010, at a time when machine learning often consisted in working on a dataset on its laptop or working machine. In this context, reducing the size of a dataset in terms of features and/or samples was valuable to fit models with lower running time. The advent of deep learning popularized the use of massive datasets for training models via clusters of computers, thus making screening rules slightly less relevant today. However, and as stated in introduction, screening rules are still of use to speed solvers up, such as for the Lasso~\citep{massias2018celer}, and may have other surprising applications, such as in differential privacy. More precisely, safe losses might be used in the context of black-box membership inference (see for example~\cite{sablayrolles2019white} for a more detailed introduction on these problems). In this setting, the parameters of the model are unknown but it is possible to query it. The objective is to guess whether a particular sample was part of the training set. When samples are in the flat region of a safe loss at the optimum, it means that they are non-support, hence can be deleted without modifying the loss or the model. It may therefore be impossible to guess whether such samples have been used for training. In the future, we plan to work on a new notion of differential privacy, where a model would be private for non-support samples and public for the support samples.

\clearpage

\vspace*{0.3cm}
\begin{center}
   {\huge \textbf{Appendix}}
\end{center}
\vspace*{0.5cm}

\section{Proofs.}
\label{sec:add_proofs}

%

\subsection{Proof of Lemma~\ref{lemma:margin}}
\begin{proof}
At the optimum,
\begin{align*}
    P(x^*) - D(\nu^*) ={} \frac{1}{n} \sum_{i=1}^n f_i(a_i^\top x) + f_i^*(\nu_i) + \\ \lambda R(x) + \lambda R^*\left(-\frac{A^T \nu}{\lambda n}\right) = 0.
\end{align*}
Adding the null term $\langle x, - \frac{A^\top \nu}{n} \rangle - \langle x, - \frac{A^\top \nu}{n} \rangle$ gives
\begin{align*}
    \frac{1}{n} \sum_{i=1}^n \underbrace{f_i(a_i^\top x) + f_i^*(\nu_i) - a_i^\top x \nu_i}_{\geq 0} + \\ \lambda \underbrace{ \left( R(x) + R^*\left(-\frac{A^\top \nu}{\lambda n}\right) - \left\langle x, - \frac{A^\top \nu}{\lambda n} \right\rangle \right)}_{\geq 0} = 0,
\end{align*}
since Fenchel-Young's inequality states that each term is greater or equal to zero. We have a null sum of non-negative terms; hence, each one of them is equal to zero. We therefore have for each $i = 1 \dots n$:
\begin{equation*}
    f(a_i^\top x) + f^*(\nu_i) =  a_i^\top x \nu_i,
\end{equation*}
which corresponds to the equality case in Fenchel-Young's relation, which is equivalent to $\nu^*_i \in \partial f_i(a_i^\top x^*)$. 
\end{proof}

\subsection{Proof of Lemma~\ref{lemma:test}}
\begin{proof}
\label{closed_form_optim}
 The Lagrangian of the problem writes:
\begin{align*}
    L(x, \nu, \gamma) = a_i^\top x - b_i + \nu \left( 1 - (x - z)^TE^{-1}(x - z) \right) - \\ \gamma g^T(x - z),
\end{align*}
with $\nu, \gamma \geq 0$. When maximizing in $x$, we get:
\begin{align*}
    \frac{\partial L}{\partial x} & = a_i + 2 \nu (E^{-1}z - E^{-1}x) - \gamma = 0.
\end{align*}
We have $\nu > 0$ since the opposite leads to a contradiction. This yields $x = z + \frac{1}{2 \nu}(Ea_i - \gamma Eg)$ and $(x - z)^T E^{-1} (x - z) = 1$ at the optimum which gives $\nu = \frac{1}{2}\sqrt{(a_i - \gamma)^T E (a_i - \gamma)}$. 

Now, we have to minimize
\begin{align*}
   g(\nu, \gamma) = a_i\left(z + \frac{1}{2\nu}(Ea_i - \gamma Eg)\right) - \\ \gamma^\top\left(\frac{1}{2\nu}(Ea_i - \gamma Eg)\right). 
\end{align*}
To do that, we consider the optimality condition
\begin{align*}
    \frac{\partial g}{\partial \gamma} & = - \frac{1}{2\nu}a_iEg - \frac{1}{2\nu}g^TEa_i + \frac{\gamma}{\nu} g^TEg = 0,
\end{align*}
which yields $\gamma = \frac{g^TEa_i}{g^TEg}$. If $g^TEa_i < 0$ then $\gamma = 0$ in order to avoid a contradiction.

In summary, either $g^TEa_i \leq 0$ hence the maximum is attained in $x = z + \frac{1}{2\nu}Ea_i$ and is equal to $a_iz + \sqrt{a_i^T E a_i} - y_i$, or $g^TEa_i > 0$ and the maximum is attained in $x = z + \frac{1}{2\nu}E(a_i - \gamma Eg)$ and is equal to $a_i\left(z + \frac{1}{2 \nu}E(a_i - \gamma g)\right) - b_i$ with $\nu = \frac{1}{2}\sqrt{(a_i - \gamma)^T E (a_i - \gamma)}$ and $\gamma = \frac{g^TEa_i}{g^TEg}$.
\end{proof}

\subsection{Proof of Lemma~\ref{lemma:reg}}

\begin{proof} We can write \ref{eq:regression_mod} as
\BEQ
\BA{ll}
\mbox{minimize} & \Tilde{f}(\Tilde{x}) + \lambda \Tilde{R}(\Tilde{x})\\
\mbox{subject to} & \Tilde{A}\Tilde{x} = - b
\EA\EEQ
   in the variable $\Tilde{x} = (t,x) \in \reals^{n + p}$ with $\Tilde{f} \colon \Tilde{x} \mapsto f_{\mu}(t) $ and $\Tilde{R} \colon \Tilde{x} \mapsto R(x)$ and $\Tilde{A} \in \mathbb{R}^{n \times (n+p)} = \left( \text{Id} , - A \right)$. Since the constraints are linear, we can directly express the dual of this problem in terms of the Fenchel conjugate of the objective (see \textit{e.g.} \cite{boyd_van}, 5.1.6). Let us note $f_0 = \Tilde{f} + \lambda \Tilde{R}$. For all $y \in \mathbb{R}^{n+p}$, we have
\begin{align*}
    f_0^*(y) & = \underset{x \in \mathbb{R}^{n+p}}{\text{sup}} \langle x, y \rangle - \Tilde{f}(x) - \lambda \Tilde{R}(x) \\
    & = \underset{x_1 \in \mathbb{R}^{n}, x_2 \in \mathbb{R}^p}{\text{sup}}
    \langle x_1, y_1 \rangle + \langle x_2, y_2 \rangle - f(x_1) - \lambda R(x_2) \\
    & = f_{\mu}^*(y_1) + \lambda R^*\left(\frac{y_2}{\lambda}\right).
\end{align*}
It is known from~\cite{huber} that $f_{\mu} = f ~\square~ \Omega^*_{\mu} = (f^* + \Omega_{\mu}^{**})^*$ with $\Omega_{\mu}^* = \mu \Omega^*(\frac{.}{\mu})$. Clearly, $\Omega_{\mu}^{**} = \mu \Omega$. If $\Omega$ is proper, convex and lower semicontinuous, then $\Omega = \Omega^{**}$ . As a consequence, $f_{\mu}^* = (f^* + \mu \Omega)^{**}$. If $f^* + \mu \Omega$ is proper, convex and lower semicontinuous, then $f_{\mu}^* = f^* + \mu \Omega$, hence
\begin{equation*}
    f_0^*(y) = f^*(y_1) + \lambda R^*\left(\frac{y_2}{\lambda}\right) + \mu \Omega(y_1).
\end{equation*}
Now we can form the dual of \ref{eq:regression_mod} by writing 
\BEQ
\BA{ll}
\mbox{maximize} & - \langle - b, \nu \rangle - f_0^*(-\Tilde{A}^T\nu)
\EA\EEQ
in the variable $\nu \in \mathbb{R}^n$. Since $-\Tilde{A}^T \nu = (-\nu, A^T \nu)$ with $\nu \in \mathbb{R}^n$ the dual variable associated to the equality constraints,
\[
    f_0^*(-\Tilde{A}^T \nu) = f^*(-\nu) + \lambda R^*\left(\frac{A^T \nu}{\lambda}\right) + \mu \Omega(-\nu).
\]
Injecting $f_0^*$ in the problem and setting $\nu$ instead of $-\nu$ (we optimize in $\mathbb{R}$) concludes the proof.
\end{proof}

\subsection{Lemma~\ref{lemma:bounding_f}}

 \begin{lemma}[Bounding $f_{\mu}$]
      If $\mu \geq 0$ and $\Omega$ is a norm then
      \begin{equation*}
          f(t) - \delta(t) \leq f_{\mu}(t) \leq f(t),\quad \mbox{for all $t \in \mathrm{dom} f$}
      \end{equation*}
 with  $\delta(t) = \underset{\|\frac{u}{\mu}\|^* \leq 1}{\max} g^Tu$ and $g \in \partial f(t)$.
 \label{lemma:bounding_f}
 \end{lemma}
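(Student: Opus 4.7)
The plan is to exploit the infimal convolution expression $f_\mu(t)=\min_{z}\{f(z)+\mu\Omega^*((t-z)/\mu)\}$ and the fact that, since $\Omega$ is a norm, its Fenchel conjugate $\Omega^*$ is the indicator function of the unit ball of the dual norm: $\Omega^*(y)=0$ if $\|y\|^*\leq 1$ and $+\infty$ otherwise. This collapses the regularized convolution into a constrained minimization, which is easy to bound on both sides.

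For the upper bound $f_\mu(t)\leq f(t)$, I would simply choose $z=t$ in the infimum defining $f_\mu$: this gives $f_\mu(t)\leq f(t)+\mu\Omega^*(0)$, and since $\Omega$ is a norm we have $\Omega(0)=0\leq \Omega(y)$ for all $y$, hence $\Omega^*(0)=0$.

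For the lower bound, I first rewrite
\[
f_\mu(t)=\min_{z\,:\,\|t-z\|^*\leq\mu} f(z),
\]
using the indicator characterization of $\Omega^*$. Then for any admissible $z$ and any subgradient $g\in\partial f(t)$, convexity of $f$ gives $f(z)\geq f(t)+g^\top(z-t)$. Setting $u=t-z$, the constraint becomes $\|u/\mu\|^*\leq 1$, and the inequality reads $f(z)\geq f(t)-g^\top u$. Taking the infimum over admissible $z$ (equivalently, the supremum over admissible $u$ on the right) yields
\[
f_\mu(t)\geq f(t)-\max_{\|u/\mu\|^*\leq 1} g^\top u = f(t)-\delta(t).
\]

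There is no real obstacle here: the only subtle point is recognizing that the conjugate of a norm is the indicator of the dual unit ball, which turns the apparent penalty into a hard constraint and makes the lower bound an immediate consequence of the subgradient inequality.
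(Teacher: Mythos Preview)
Your proof is correct and essentially identical to the paper's: both use that $\Omega^*$ is the indicator of the dual unit ball, take $z=t$ for the upper bound, and apply the subgradient inequality $f(z)\geq f(t)+g^\top(z-t)$ under the constraint $\|t-z\|^*\leq\mu$ for the lower bound. The only cosmetic difference is the sign convention in the change of variable (you set $u=t-z$, the paper sets $u=z-t$), which is immaterial since the dual ball is symmetric.
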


\begin{proof}
    If $\Omega$ is a norm, then $\Omega(0)=0$ and $\Omega^*$ is the indicator function of the dual norm of $\Omega$ hence non-negative. Moreover, if $\mu > 0$ then, $\forall z \in \text{dom}f$ and $\forall t \in \mathbb{R}^n$,
    \begin{equation*}
        f_{\mu}(t) \leq f(z) + \mu \Omega^*\left(\frac{t - z}{\mu}\right).
    \end{equation*}
    In particular, we can take $t = z$ hence the right-hand inequality. On the other hand,
    \begin{align*}
        f_{\mu}(t) - f(t) &= \underset{z}{\min} f(z) + \mu I_{\|\frac{z - t}{\mu}\|^* \leq 1} - f(t)\\
        & = \underset{\|\frac{u}{\mu}\|^* \leq 1}{\min} f(t + u) - f(t).
    \end{align*}
    Since $f$ is convex,
    \begin{equation*}
        f(t + u) - f(t) \geq g^Tu \text{ with } g \in \partial f(t).
    \end{equation*}
    As a consequence, 
    \begin{equation*}
        f_{\mu}(t) - f(t) \geq \underset{\|\frac{u}{\mu}\|^* \leq 1}{\min} g^Tu.
    \end{equation*}
\end{proof}

\subsection{Proof of Lemma~\ref{lemma:obj_ineq}}

\begin{proof}
   The proof is trivial given the inequalities in Lemma~\ref{lemma:bounding_f}.
\end{proof}

\subsection{Proof of Screening-friendly regression}

\begin{proof}   The Fenchel conjugate of a norm is the indicator function of the unit ball of its dual norm, the $\ell_\infty$ ball here. Hence the infimum convolution to solve
\begin{equation} \label{eq:lasso_loss_modified}
    f_{\mu}(x) = \underset{z \in \mathbb{R}^n}{\text{min }} \{f(z) + \mathbf{1}_{\|x - z\|_{\infty} \leq \mu}\}
\end{equation}
Since $f(x) = \frac{1}{2n} \|x\|_2^2$,
\begin{equation*}
    f_{\mu}(x) = \underset{z \in \mathbb{R}^{n}}{\text{min }} \frac{1}{2n} z^Tz + \mathbf{1}_{\|x - z\|_{\infty} \leq \mu}.
\end{equation*}
If we consider the change of variable $t = x - z$, we get:
\begin{equation*}
    f_{\mu}(x) = \underset{t \in \mathbb{R}^n}{\text{min }} \frac{1}{2n} \|x - t\|_2^2 + \mathbf{1}_{\|t\|_{\infty} \leq \mu}.
\end{equation*}
The solution $t^*$ to this problem is exactly the proximal operator for the indicator function of the infinity ball applied to $x$. It has a closed form
\begin{align*}
t^* & = \text{prox}_{\mathbf{1}_{\|.\|_{\infty} \leq \mu}}(x) \\
    & = x - \text{prox}_{\left(\mathbf{1}_{\|.\|_{\infty} \leq \mu}\right)^*}(x),
\end{align*}
using Moreau decomposition. We therefore have
\begin{align*}
t^* & = x - \text{prox}_{\mu \|.\|_1}(x). 
\end{align*}
Hence,
\[
    f_{\mu}(x) = \frac{1}{2n} \| x - t^*\|_2^2 = \frac{1}{2n} \| \text{prox}_ {\mu \|.\|_1}(x)\|_2^2.
\]
But, $\text{prox}_ {\mu \|.\|_1}(t) = \text{sgn}(t) \times [|t| - \mu]_+$ for $t \in \mathbb{R}$, where $[x]_+ = \text{max}(x, 0)$. 
\end{proof}

\section{Additional experimental results.}
\label{sec:add_results}
\paragraph{Reproducibility.}  The data sets did not require any pre-processing except \emph{MNIST} and \emph{SVHN} on which exhaustive details can be found in~\citet{mairal2016end}. For both regression and classification, the examples were allocated to train and test sets using scikit-learn's \textit{train-test-split} ($80\%$ of the data allocated to the train set). The experiments were run three to ten times (depending on the cost of the computations) and our error bars reflect the standard deviation. For each fraction of points deleted, we fit three to five estimators on the screened dataset and the random subset before averaging the corresponding scores. The optimal parameters for the linear models were found using a simple grid-search.

\cleardoublepage

\chapter{Conclusion and Future Work}
\section{Conclusion}

\paragraph{Main conclusions.} 
 In this thesis, we showed how inductive biases could be integrated into models using kernel methods. In the context of sets of features, and starting from the optimal transport kernel between two sets of features, which is interpretable, it is possible to derive an embedding which is reminiscent of transformers~\citep{mialon2021}. We can also integrate the structure of a graph in transformers using kernels on the graph~\citep{mialon2021graphit}. This creates a promising architecture for graphs that differs from GNNs. On a different note, we explored surprising properties of losses admitting a flat area, which can be used for data pruning, with for example potential applications in anomaly detection or differential privacy~\citep{mialon2020screening}. All these projects offer interesting directions for future work. 

\paragraph{Relationship between transfer learning from pre-trained models and inductive biases.} At the beginning of this thesis, we introduced inductive biases as a point of view on the generalization problem that appeared to be opposed to pre-trained models: whereas one requires plenty of data, the other is meant to be data efficient.
As demonstrated in Chapter~\ref{chapt:2_otke} and Chapter~\ref{chapt:3_graphit}, these practices can be complementary: it is possible to use a module relying on inductive biases on top of a pre-trained model in a context where labelled data is scarce. This approach was shown to be very effective in the context of classifying protein foldings~\citep{mialon2021}.
Another example is the architecture of a CNN, which can be seen as an inductive bias focused on image processing. Indeed, small or medium convolution filters exploit locality properties of natural images. It is believed that such an inductive bias is nearly optimal for image processing: self-attention in vision transformers seems to learn the same kind of inductive bias~\citep{cordonnier2020on}, thus requiring more data to attain the performance of CNNs~\citep{dosovitskiy2021an}. Note however that the aforementioned work also suggests that when the amount of data grows to be huge -- one or two orders of magnitude bigger than ImageNet -- transformers perform better than CNNs\footnote{Recent work however shows that improved training procedure of transformers can close the gap with CNNs trained on ImageNet only although the training procedure differs: the transformer is trained using knowledge distilled from a CNN teacher~\citep{touvron2021training}.}.
In summary, inductive biases can also be useful in context where data is big but not huge. In the latter case, the picture is not clear as of today: the quantity of data and thz size of the model seems more important than its architecture as suggested by the success of different computer vision architectures such as ResMLP~\citep{touvron2021resmlp}, DeiT~\citep{touvron2021training}, and BiT~\citep{kolesnikov2020big} \footnote{Thus making~\href{http://www.incompleteideas.net/IncIdeas/BitterLesson.html}{the bitter lesson of machine learning} maybe more true than ever.}.

\paragraph{Relationship between inductive biases and geometric deep learning.} An exciting field of deep learning coined as geometric deep learning recently emerged~\citep{bronstein2021geometric}. From the point of view of geometric deep learning, and for many problems, there exists regularities under the form of symmetries and invariances. For example, the energy of a molecule is invariant given any rotation of the molecule in the 3D space. These regularities strongly suggest a preferred architecture, and the zoo of deep learning architectures can in fact be classified depending on the invariance they encode. As an example, a transformer is permutation equivariant: any permutation of two elements in its input will result in the same permutation in the output. This makes the transformer an appropriate model for handling unordered sets of features. Some inductive biases discussed in this thesis can naturally be put under the hood of geometrical deep learning, see our discussion on different possible models for handling graphs in Chapter~\ref{chapt:3_graphit}, between GNNs and transformers. Other inductive biases such as using an optimal transport geometry for representing sets may require more thinking to find their place in the geometric deep learning perspective. 

\section{Future work}

At the end of the chapters, we provided short term directions for further work. In this last section, we detail two general avenues of work starting from the conclusion of this thesis.

\paragraph{Applications of inductive biases: scientific discovery.} Although inductive biases are typically outperformed in the more and more common huge data setting by large pre-trained models, they are still an important tool for machine learning. We believe scientific discovery is an appropriate application for recent machine learning models. First, as opposed to models that will be deployed in production for use by non-experts, which must comply with the norm, often requiring interpretability, guarantees on the error, and other constraints that are not yet compatible with deep learning, scientific machine learning models may stay in the lab. This means that norms are less binding and more human supervision is allowed, thus enabling to leverage the most recent advances in deep learning. Second, scientific data has often been generated by some potentially complex physical law, meaning that there is probably properties such as invariances in the problem or more generally domain knowledge to incorporate in the learning model.
AlphaFold2~\citep{jumper2021} is a good example for our discussion: first, it is both a complex system relying on deep neural networks (a core block of AlphaFold2 is a deep transformer), yet also a model integrating expert knowledge on proteins via inductive biases (the triangular attention mechanism helps the network to learn a pairwise representations of amino-acids which is compatible with a 3D structure in terms of pairwise distances). As AlphaFold2 should largely be used by experts in bioinformatics, there is to the best of our knowledge no norm to comply with that would be incompatible with deep learning blocks. As many interesting scientific data can be seen as graphs or sequences, and because scientific data may be scarce for some domains such as are disease or econometrics studies, we believe a promising application of our work to be science problems. 

\paragraph{Inductive biases are elsewhere, for example in different learning paradigms.} This thesis focused on integrating inductive biases directly within models. But, inductive biases can be found elsewhere, namely in the learning procedure. The main learning paradigm in machine learning is Empirical Risk Minimization (ERM) that we studied in Chapter~\ref{chapt:4_screening}. ERM consists in minimizing a prediction error on hypothetically i.i.d samples with possibly regularity constraints for the model to hopefully ensure better generalization. Such a learning procedure suffers from various issues, two of them being that labels are required, and that spurious correlations may be learned. The former requires manually annotated samples which can be tedious, expensive or even impossible to obtain.
The latter makes generalization more difficult: a famous example being a classifier that would learn to recognize cows. As cow pictures are often taken in green grass, the classifier trained on such data could have difficulties to recognize a cow lying on a beach~\citep{beery2018recognition}. Different learning procedures can be used to circumvent these issues. In computer vision, self-supervised learning relies on a pretext task to learn general visual representations~\citep{he2020momentum}. These approaches typically aim to minimize the difference between two representations of two different augmentations of the same image. These augmentations are human-engineered: in this sense, this component is an inductive bias and could be used to inject domain knowledge for various data modalities. Invariant Risk Minimization~\citep{arjovsky2020invariant} (IRM) tackles the problem of spurious correlations in ERM: the inductive bias here is that the model should learn correlations that are stable across different environments, \textit{i.e} different training distributions. In this sense, inductive biases are broader than integrating domain knowledge into the learning architecture, and progress may be sought by improving the learning paradigms used in machine learning.  

\cleardoublepage

%
%

\newpage

{\small \bibliographystyle{plainnat}
\bibsep 1ex
\addcontentsline{toc}{chapter}{Bibliography}
\bibliography{bib_files/biblio_one_file.bib}}

\end{document}